\documentclass[10pt]{article} 
\usepackage[accepted]{tmlr}

\usepackage{amsmath,amsfonts,bm}

\def\eqref#1{equation~\ref{#1}}

\def\1{\bm{1}}

\DeclareMathAlphabet{\mathsfit}{\encodingdefault}{\sfdefault}{m}{sl}
\SetMathAlphabet{\mathsfit}{bold}{\encodingdefault}{\sfdefault}{bx}{n}

\usepackage[normalem]{ulem}
\usepackage{amsmath}
\usepackage{caption}
\usepackage{tocloft}
\usepackage{wrapfig}
\usepackage[T1]{fontenc}    
\usepackage{hyperref}       
\usepackage{relsize} 
\usepackage{url}            
\usepackage{booktabs}       
\usepackage{amsfonts}       
\usepackage{nicefrac}       
\usepackage{microtype}      
\usepackage{booktabs}    
\usepackage{colortbl}    
\usepackage{graphicx}    
\usepackage{multirow}    
\usepackage{xcolor}      
\definecolor{darkred}{rgb}{0.6,0,0}
\usepackage{graphicx}
\usepackage{subcaption}
\usepackage{xspace}
\usepackage{enumitem}
\usepackage{nicefrac}
\usepackage{amsthm}
\usepackage{minitoc}
\usepackage{soul}
\usepackage{float}
\usepackage{booktabs}
\usepackage{colortbl}
\usepackage{xcolor}
\usepackage{makecell}
\usepackage{graphicx}
\usepackage[ruled,vlined,linesnumbered]{algorithm2e}
\definecolor{softred}{HTML}{0000FF}
\usepackage{comment}

\newtheorem{theorem}{Theorem}
\newtheorem{lemma}[theorem]{Lemma}
\definecolor{dg}{rgb}{6, 117, 60}

\usepackage{hyperref}
\usepackage{url}
\newcommand{\equalcontrib}{\textsuperscript{*}}

\author{\name Seyed Roozbeh Razavi Rohani\equalcontrib \email srr8@sfu.ca \\
      \addr Department of Computer Science\\
      Simon Fraser University 
      \AND
        \name Khashayar Khajavi\equalcontrib \email kka151@sfu.ca \\
      \addr Department of Computer Science\\
      Simon Fraser University 
      \AND
        \name Wesley Chung \email chungwes@mila.quebec  \\
      \addr Department of Computer Science\\
      McGill Univeristy, Mila - Quebec AI Institute 
      \AND
        \name Mandana Samiei \email mandana.samiei@mail.mcgill.ca  \\
      \addr Department of Computer Science\\
      McGill Univeristy, Mila - Quebec AI Institute 
      \AND
        \name Mo Chen \email mochen@cs.sfu.ca  \\
      \addr Department of Computer Science\\
      Simon Fraser University 
      }

\def\month{06}  
\def\year{2026} 
\def\openreview{\url{https://openreview.net/forum?id=i6cb6CGMtY}} 

\newcommand{\mainnetwork}{$\mathsf{MainNetwork}$\xspace}
\newcommand{\inferencenetwork}{$\mathsf{InferenceNetwork}$\xspace}
\newcommand{\consolidatednetwork}{$\mathsf{ConsolidatedNetwork}$\xspace}
\newcommand{\neumosync}{$\mathsf{NeuMoSync}$\xspace}
\newcommand{\neurosync}{$\mathsf{NeuroSync}$\xspace}

\title{NeuMoSync: End‑to‑End Neuromodulatory Control for Plasticity and Adaptability in Continual Learning}

\begin{document}

\maketitle

\begingroup
  \renewcommand\thefootnote{\textasteriskcentered}%
  \footnotetext{These authors contributed equally to this work.}
\endgroup
\begin{abstract}
Continual learning (CL) requires models to learn tasks sequentially, yet deep neural networks often suffer from plasticity loss and poor knowledge transfer, which can impede their long-term adaptability. 
Drawing high-level inspiration from global neuromodulatory mechanisms in the brain, we introduce \textbf{Neu}ro\textbf{Mo}dulation and \textbf{Sync}hronization (\neumosync), a novel architecture that integrates dynamic, neuron-specific modulation into deep neural networks to enhance their adaptability and plasticity. 
\neumosync extends standard neural network architectures with learnable feature vectors per neuron that tracks network-wide historical context and incorporates a module operating at a higher level of abstraction. This module synthesizes neuron-specific signals, conditioned on both current inputs and the network’s evolving state, to adaptively regulate activation dynamics and synaptic plasticity.  
Evaluated on diverse CL benchmarks, including memorization (Random Label CIFAR-10, Random Label MNIST), concept drift (Shuffle CIFAR-10, Shuffle Mini-ImageNet), class-incremental (Class Split T-ImageNet/ImageNet, Class Split CIFAR-100) and domain-incremental (Permuted MNIST), \neumosync demonstrates strong performance in terms of retention of plasticity and achieves improvements in both forward and backward adaptation compared to existing methods.
Ablation studies validate the necessity of each component, while the analysis of the learned modulatory signals reveals interpretable coordination patterns across tasks. Our work underscores the potential of integrating global coordination mechanisms into deep learning systems to advance robust, adaptive continual learning. The code is publicly available at \url{https://github.com/RoozbehRazavi/NeuMoSync}.
\end{abstract}

\section{Introduction}\label{sec:intro}
\vspace{-3mm}

Continual learning (CL) requires models to acquire knowledge sequentially~\citep{McClelland1995, kirkpatrick2017overcoming, Parisi2019}, retaining past information while efficiently adapting to new tasks. A significant challenge for deep neural networks in this setting is the loss of plasticity observed over time, which severely hampers their ability to learn task sequences~\citep{dohare2024loss,lyle2024disentangling,lyle2023understanding}. In contrast, biological agents excel at both memory retention and fast adaptation by leveraging local plasticity~\citep{zheng2024temporal, golding2002dendritic, smith2013dendritic} alongside global modulatory systems that dynamically coordinate neural circuit behavior~\citep{nadim2014neuromodulation}. Mediated by neuromodulators, these global mechanisms tune synaptic efficacy, excitability, and learning rules across the network~\citep{Bazzari2019, turrigiano2012homeostatic, fremaux2016neuromodulated}, allowing flexible adaptation to shifting environments without forgetting prior knowledge.

Although artificial neural networks (ANNs) were originally inspired by biological neurons~\citep{mcculloch1943logical, rosenblatt1958perceptron}, they remain only loose abstractions of biological learning systems. Modern networks rely primarily on local, feedforward computation and gradient-based updates, while typically lacking the context-dependent, system-level feedback that modulates circuit dynamics in the brain~\citep{nadim2014neuromodulation, fremaux2016neuromodulated, galashov2024non}. In contrast, neuroscience has shown that neuromodulation plays a central role in adapting learning and behavior under uncertainty and novelty by regulating excitability and plasticity at the circuit level~\citep{hulse2017brain, nadim2014neuromodulation}. Related inductive biases have been explored in several areas of deep learning, including multimodal fusion~\citep{perez2018film}, meta-learning~\citep{beaulieu2020learning}, continual learning~\citep{tran2025contrastive}, and hypernetworks~\citep{costacurta2024structured}. Motivated by both biological learning systems and recent advances in deep learning, we ask: Can we improve neural networks’ plasticity and accelerate
adaptation across sequences of tasks by endowing a global mechanism for modulating the behaviour of
artificial neurons?

To explore this question, in this work, we introduce \neumosync, an architecture that incorporates dynamic, neuron-specific modulation into deep networks through a centralized controller, the \neurosync module, building on prior works in the deep learning literature \citep{perez2018film, beaulieu2020learning}. Concretely, the controller generates modulation coefficients for individual neurons based on both the current input and a summary of the network's internal state. Then, for each neuron, these coefficients dynamically regulate the weights, activation function, and the degree of reliance on consolidated knowledge, thereby allowing the network to reconfigure its effective computation across tasks and contexts while remaining fully trainable end-to-end via standard gradient descent, rather than relying on expensive meta-learning with access to a full distribution of tasks.
As a result, by introducing this form of global, context-dependent coordination, our approach aims to preserve plasticity and enable faster adaptation to new tasks in continual learning benchmarks. 
Furthermore in Appendix~\ref{app:related_work}, we provide a comprehensive review of related work, including prior approaches that incorporate gain or bias modulation in other settings.


We rigorously evaluate \neumosync through comprehensive experiments across a wide range of continual learning benchmarks, including memorization, concept drift, class-incremental, and domain-incremental scenarios. We measure novel metrics designed to quantify fast adaptation, the ability of a model to quickly adjust to new tasks or rapidly relearn past ones.
Our results demonstrate that \neumosync preserves plasticity, excels at both forward and backward adaptation, with particularly large improvements in memorization tasks over continual learning baselines. 
Ablation studies confirm the contribution of each 
component, while analysis of the learned modulatory signals reveals emergent interpretable behaviors. Notably, we find that parameter sharing is the critical inductive bias enabling effective global modulation, a finding that holds across different controller architectures.
Together, these results highlight the promise of 
globally coordinated, input-dependent modulation as a principled approach in enhancing both the adaptability and robustness of deep neural networks in continual learning settings.
\section{Method}\label{sec:method}

\paragraph{Notation.} 
For a neural network with parameters $\theta$, the variable $i$ will be used to index over the $n$ neurons. For example, $\theta^i$ denotes the weights associated to neuron $i$.

\begin{figure}[htb]
  \centering
  \includegraphics[width=0.8\columnwidth,clip,trim=0 0 0 0]{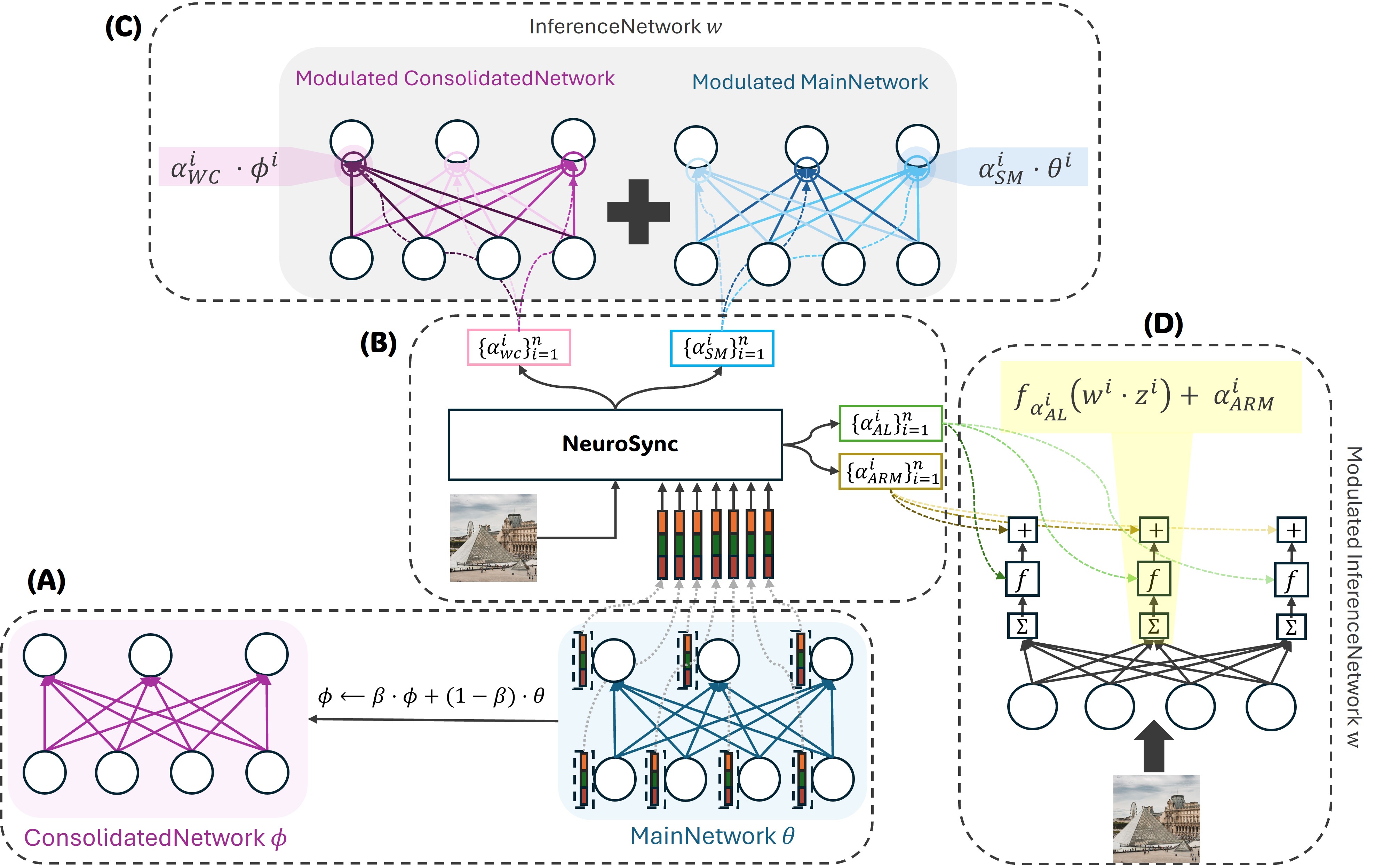}
  \caption{
    Overview of the architecture and information flow in \neumosync. 
    (A) The \mainnetwork, parameterized by $\theta$, is shown in the blue box with the blue connections. Each neuron in the \mainnetwork is associated with a feature vector. 
    Additionally, the \consolidatednetwork (purple box) preserves an exponential moving average of the \mainnetwork, denoted as $\phi$.
    (B) The \neurosync takes the current sample along with the feature vectors of all neurons as inputs. It produces four modulation coefficients per neuron: $\alpha_{\mathrm{SM}}^i$, $\alpha_{\mathrm{WC}}^i$, $\alpha_{\mathrm{AL}}^i$, and $\alpha_{\mathrm{ARM}}^i$ for each neuron $i$.
    (C) First view of the \inferencenetwork, which illustrates how the synaptic weights ($w$) of the Inference Network are formed. For each neuron $i$, the weights from the Main Network ($\theta^i$) and the Consolidated Network ($\phi^i$) are dynamically combined using the modulation coefficients $\alpha_{SM}^i$ and $\alpha_{WC}^i$. The resulting weighted sum, $w^i = \alpha_{SM}^i \theta^i + \alpha_{WC}^i \phi^i$, defines the effective weights used during inference.
    (D) Second view of the \inferencenetwork, shows how a neuron's activity is modulated. Given $z_i$ as the input to neuron $i$, the coefficient $\alpha_{AL}^{i}$ dynamically sets the negative slope of the PReLU activation function ($f$) for this neuron. After the activation is calculated on the weighted input, the coefficient $\alpha_{ARM}^{i}$ provides an additive offset to the final output.}
  \label{fig:neurosync}
  \vspace{-2mm}
\end{figure}

\paragraph{Overview.}
The core idea of the \neumosync architecture is to augment a base neural network, the \mainnetwork, with a global controller termed the \neurosync module, that can modulate every neuron's function. To be more precise, the \mainnetwork focuses on acquiring task-specific knowledge by adapting rapidly to the current data stream. Complementing it, the \consolidatednetwork is a slow-moving average of the \mainnetwork that serves as a long-term memory and aims to preserve more general knowledge consolidated across tasks. The \neurosync module learns how to dynamically combine information from these two networks and modulate the resulting fused \inferencenetwork, which is then used for prediction. Consequently, the \neurosync module enables each neuron's output function to depend on other neurons, allowing higher-level interactions and top-down control of the neural network. This additional flexibility allows greater adaptation capabilities that are favourable in continual learning settings. Figure~\ref{fig:neurosync} provides a high-level overview of the \neumosync architecture.


More specifically, each neuron of the \mainnetwork has an associated $k$-dimensional feature vector, summarizing its position in the network, its previous activations and including learnable attributes. 
The \neurosync module (Figure ~\ref{fig:neurosync}B) is a function from $\mathbb{R}^{n \times k + |input|}$ to $\mathbb{R}^{n \times |\alpha|}$, mapping the collection of neuron feature vectors (and the network's input) to one set of $\alpha$ coefficients per neuron. There are four types of $\alpha$ coefficients, each playing a different role in a neuron's computation. Together, they control three primary components of each neuron: its weights, activation function and post-activation offset.

\vspace{-3mm}
\paragraph{Inputs of the \neurosync Module.}
The \neurosync module receives two types of input: the current data sample and a set of feature vectors, one associated with each neuron in the \mainnetwork. These feature vectors provide the module with a summary of each neuron's role and activity history. Specifically, as shown in Figure~\ref{fig:neurosync} (blue box of (A)), each vector is composed of three components: (1) a learnable feature vector which captures the neuron's intrinsic proprieties, (2) positional information (e.g., layer and neuron index), and (3) an exponential moving average (EMA) of its past activation statistics to capture its activity over time. The main goal of using these features is to ensure the \neurosync module has sufficient context during continual learning to generate targeted modulation signals. Collectively, this design is loosely inspired by the broader principle that global coordination should be conditioned on circuit state; in neuroscience, analogous factors include heterogeneous intrinsic properties~\citep{debanne2019plasticity, yang2022minimal} and activity-dependent regulation in homeostatic plasticity~\citep{roshchin2020ca2+, ratkai2021homeostatic, turrigiano2012homeostatic}.

\subsection*{Modulation Mechanisms and $\alpha$ Coefficients}

The \neurosync module's output is a set of four neuron-specific signals used to modify the neurons' computation: $\{\alpha_{\mathrm{WC}}\}_{i=1}^n, \{\alpha_{\mathrm{SM}}\}_{i=1}^n, \{\alpha_{\mathrm{AL}}\}_{i=1}^n, \{\alpha_{\mathrm{ARM}}\}_{i=1}^n$, which we denote as
the $\alpha$-parameters. We elaborate on the design and operation of each, grouping them into three motivating directions. 




\vspace{-3mm}

\paragraph{Synaptic Weight Modulation and Consolidation ($\alpha_{SM}, \alpha_{WC}$). }

To capture knowledge accumulated over time that can be reused to support adaptation across tasks, we incorporate a slow-moving set of weights, the \consolidatednetwork, that serves to consolidate knowledge  from the \mainnetwork across sequences of tasks (Figure \ref{fig:neurosync}A). The \neurosync module learns how to dynamically combine the knowledge from these two networks and fuse them into an \inferencenetwork to generate the final predictions (Figure \ref{fig:neurosync}C).

The first stage of modulation dynamically constructs the synaptic weights ($w^i$) for each neuron in the \inferencenetwork using the \neurosync module coefficients $\alpha^i_{SM}$ (Synaptic Modulation) and $\alpha^i_{WC}$ (Weight Consolidation). This is achieved by computing a linear combination of the parameters from the \mainnetwork ($\theta^i$) and the \consolidatednetwork ($\phi^i$), governed by their respective modulation coefficients:
\begin{equation}
    w^i = \alpha^i_{\mathrm{SM}} \theta^i + \alpha^i_{\mathrm{WC}} \phi^i
\end{equation}
The parameters $\phi$ of the \consolidatednetwork serve as a stable, long-term memory, and are updated via an exponential moving average (EMA) of the \mainnetwork's weights:
\begin{equation}
    \phi \leftarrow \beta \phi + (1 - \beta) \theta
    \label{eq:wc_ema}
\end{equation}

The hyperparameter $\beta$ is a consolidation rate, set close to 1, that controls the stability of this long-term memory. The critical impact of this parameter's value on balancing plasticity and adaptability is explored in our sensitivity analysis in Appendix~\ref{app:beta}. Combining the \mainnetwork and \consolidatednetwork in this manner enables input-dependent amplification or attenuation of each networks' contribution within the \inferencenetwork. Consequently, when $\alpha_{\text{WC}}$ is large, or when $\alpha_{\text{SM}}$ has a large magnitude, the contribution of the corresponding network's neurons is amplified; when either coefficient is close to zero, the contribution of that network is effectively suppressed. This strategy of using a slow-moving average network to preserve consolidated knowledge aligns with prior continual learning approaches, which employ dual-memory mechanisms to mimic the interplay between fast and slow learning~\citep{arani2022learning, sarfraz2022synergy}. At a high level, it is also consistent with the complementary learning systems view in neuroscience, which distinguishes a fast-learning hippocampal system from a slower cortical system~\citep{kumaran2016learning}. We further analyze the learned dynamics of these modulation coefficients in Section~\ref{sec:emerg} and provide an in-depth discussion of the resulting behavior in Appendix~\ref{app:explain}.




\vspace{-2mm}


\paragraph{Activation Function Modulation ($\alpha_{\mathrm{AL}}$).}
The second form of modulation, $\alpha^i_{\mathrm{AL}}$ (Adaptive Linearity), dynamically parameterizes the neuron's activation function. Specifically, for each neuron, it sets the negative slope of a Parametric Rectified Linear Unit (PReLU)~\citep{he2015delving}. The activation output $o^i$ for a given pre-activation value $x^i$ is therefore computed as:
\begin{equation}\label{eq:AL}
    o^i = f_{\alpha_\mathrm{AL}^i}(x^i) =
    \begin{cases}
        x^i, & x^i \geq 0,\\
        \alpha_{\mathrm{AL}}^i\,x^i, & x^i < 0,
    \end{cases}
    \quad\text{with}\quad x^i = w^i \cdot z^i
\end{equation}
This dynamic control of neuronal linearity is motivated by two key findings. First, adaptive slopes can improve gradient flow through the network, mitigating issues like shattered gradients~\citep{balduzzi2017shattered}. Second, and particularly relevant to our setting, adaptive activations have recently been shown to be highly effective for preserving plasticity in continual learning~\citep{razavi2025preserving}. For a comprehensive analysis of the effect of this modulation signal on training stability and gradient flow in our method, please refer to Appendix~\ref{app:no_al_abl}.

\vspace{-3mm}
\paragraph{Additive Regulation Modulation ($\alpha_{\mathrm{ARM}}$).} Finally, the $\alpha^i_{\mathrm{ARM}}$ coefficient provides a direct additive offset to the neuron's activation ($o^i$), yielding the final output: $\texttt{output}^i = o^i + \alpha^i_\mathrm{ARM}$.
This form of additive modulation is consistent with established conditioning mechanisms in deep networks, such as the feature-wise shifting operation in ~\citet{perez2018film}. We provide an empirical analysis of this modulation signal and its evolution throughout the learning period in Appendix~\ref{app:tonic_arm}.

\paragraph{Summary.}
Putting everything together, for neuron $i$ with input vector $z^i$, \neurosync first constructs an input-dependent synaptic weight vector
\[
w^i = \alpha^i_{\mathrm{SM}}\theta^i + \alpha^i_{\mathrm{WC}}\phi^i,
\]
then applies adaptive linearity and additive regulation to produce
\[
\texttt{output}^i = f_{\alpha^i_{\mathrm{AL}}}\!\left( (w^i)^\top z^i \right) + \alpha^i_{\mathrm{ARM}},
\]
where $f_{\alpha^i_{\mathrm{AL}}}$ is defined in Eq.~\ref{eq:AL}, $\theta$ denotes the \mainnetwork parameters, and $\phi$ is their exponential moving average as in Eq.~\ref{eq:wc_ema}. Noting that these four coefficients can vary per neuron and are also input-dependent, the \neurosync module has considerable flexibility in modifying the \mainnetwork. Thus, the resulting \inferencenetwork can adapt its effective computation in a context-dependent manner,  leading to improved continual learning capabilities.

%


\subsection*{Training and other details}
Once the \inferencenetwork is constructed, it is used to process the current input and produce the predictions.
Standard backpropagation is used to compute gradients and update both the \mainnetwork and the \neurosync module end-to-end. This allows all components, including modulation parameters, to be learned jointly without requiring specialized meta-learning algorithms or task boundaries.

In practice, we implement the \neurosync module using an architecture that enforces parameter sharing across all neurons, an inductive bias we demonstrate is critical for effective global modulation in Section~\ref{sec:ablation}. Accordingly, while our primary model uses an encoder-only transformer, we confirm that the underlying principle is key, as an alternative 1D CNN architecture also yields similar performance, as shown in Section~\ref{sec:inductive}. We favor the transformer in our main experiments, as its parameter count does not scale with the size of the \mainnetwork, ensuring a more consistent overhead.
Furthermore, to ensure that \neurosync can be extended to larger-scale architectures, we introduce more scalable variants of our controller in Section~\ref{sec:new_var} and Appendix~\ref{app:scale_controller} that employ a sparse, sampling-based approach.

To empirically validate our design, we conduct comprehensive ablation studies that systematically investigate the necessity and influence of each $\alpha$-parameter and the global controller (Section~\ref{sec:ablation} and Appendix~\ref{app:fur_abl}). Beyond performance metrics, we also analyze the learned dynamics of the $\alpha$-parameters and analyze their emergent behaviors and patterns in Section~\ref{sec:emerg}. For a discussion on the biological inspiration behind these components and the distinctions between our model and biological systems, we refer the reader to Appendix~\ref{app:neuro}. Furthermore, the end-to-end training procedure for the model is detailed in Appendix~\ref{appendix:algo}.
\section{Experiments}\label{sec:exp}
Our empirical evaluation is designed to rigorously assess two fundamental properties of continual learning systems. The first is plasticity preservation (Setting $\mathrm{I}$): the ability to maintain learning capacity over a long sequence of tasks. The second is fast adaptation Setting ($\mathrm{II}$), which we dissect into two key components: \emph{forward adaptation}, the speed at which a model learns novel, previously unseen tasks, and \emph{backward adaptation}, the efficiency of re-adapting to previously encountered tasks.  We also provide an investigation into the consolidation of recurring knowledge, and present more experiments in  Appendices~\ref{app:more_exp}-~\ref{app:HH}. 
 


We adopt different network architectures for all the baselines and the \mainnetwork depending on the difficulty and scale of the datasets: a two-layer MLP (100,100) for \texttt{MNIST}/\texttt{CIFAR10}, and a four-layer CNN (8,16,32,64) for \texttt{Tiny-ImageNet}/\texttt{CIFAR100}. Finally, for the \texttt{Shuffle Mini-ImageNet} and \texttt{Split ImageNet} benchmarks used in Section~\ref{sec:new_var}, we use a ResNet-18 and ResNet-50 models~\citep{he2016deep}. In all cases except ResNet family, the controller is a single-layer, encoder-only Transformer with two heads and an embedding dimension chosen from $\{64,128,512\}$ so that it constitutes only 5--8\% of total parameters. For the ResNet classifiers, we instead employ the sparse controller from Section~\ref{sec:new_var}, which uses only 0.002\% of the parameters.

To ensure a fair and robust comparison, we performed an extensive hyperparameter grid search for all baseline methods as well as our own. The optimal configuration for each method on each benchmark was determined by selecting for the highest average final accuracy across three independent random seeds. The complete search spaces and the selected hyperparameters are detailed in Appendix~\ref{appendix:hyepr}.
\vspace{-3mm}
\subsection{Baselines and Benchmarks}
\vspace{-3mm}
To compare our method, we consider a wide range of plasticity preservation methods, which can be categorized into three main approaches. For reset-based methods, we consider Continual Backprop (\texttt{CBP})~\citep{dohare2024loss}, \texttt{ReDo}~\citep{sokar2023dormant}, and \texttt{Hare \& Tortoise}~\citep{lee2024slow}. For regularization-based methods we consider \texttt{L2}, \texttt{L2Init}~\citep{kumar2023maintaining}, and \texttt{EWC}~\citep{kirkpatrick2017overcoming} (for analyzing a learner that is specifically designed for preserving prior knowledge). Finally, for architecture-based methods, we evaluate Layer Normalization (\texttt{Layer Norm})~\citep{lyle2024disentangling}, Concatenated ReLU (\texttt{CReLU}) ~\citep{abbas2023loss}, per-neuron Parametric ReLU (\texttt{PReLU})~\citep{razavi2025preserving}, and \texttt{DeepF}~\citep{lewandowski2024plastic}. 
 Details about the baselines, the hyperparameters of each method, and architectures can be found in Appendix B.2, Appendix C, and Appendix \ref{appendix:arch}, respectively. 

We compare our method with the baselines on three challenging plasticity benchmarks \citep{razavi2025preserving, kumar2023maintaining, dohare2024loss}: \texttt{Random Label MNIST}, \texttt{Random Label CIFAR-10}, and \texttt{Shuffle CIFAR-10}. The first two require memorizing arbitrary label assignments that change in each task, while Shuffle CIFAR-10 tests adaptation to concept drift by shuffling class-label mappings between tasks without altering the images.  We include three additional benchmarks to evaluate the adaptation ability of the top-performing methods identified in setting $\mathrm{I}$:  \texttt{Class Split T-ImageNet}~\citep{tavanaei2020embedded}, consisting of sequential binary classification tasks on novel classes; \texttt{Class Split CIFAR-100}, which introduces five new classes per task; and \texttt{Permuted MNIST}, where each task applies a unique pixel permutation. To increase the challenge, all benchmarks incorporate random data augmentations. Further details on each benchmark's configuration are provided in Appendix~\ref{app:baselines}. Finally in Section~\ref{sec:new_var}, we evaluate the ResNet-18 on \texttt{Shuffle Mini-ImageNet} and ResNet-50 on \texttt{Split ImageNet} benchmarks, as described in detail in Appendix~\ref{app:detailed_expr_setting}.

\vspace{-2mm}
\subsection{Setting $\mathrm{I}$ (Plasticity Preservation)}
\vspace{-2mm}

We investigate a continual supervised learning scenario in which an agent sequentially learns $T$ distinct tasks, with $M$ training steps dedicated to each task. To evaluate plasticity, we define the average online task accuracy as the average accuracy over a task as in prior work~\citep{kumar2023maintaining}. For a task $k$, we have:
$\mathrm{Avg\ Online\ Task\ Accuracy}(k)=\frac{1}{M}\sum_{b=0}^{M} a_b$
where $a_b$ is the accuracy on the $b$-th minibatch of task $k$. 

As shown in Figure~\ref{fig:normal_train}, \neumosync demonstrates superior plasticity preservation, most notably on the memorization benchmarks. While several plasticity-focused baselines (e.g., \texttt{CBP}, \texttt{ReDo}) successfully prevent performance degradation, their learning capacity remains low and stagnant. In contrast, \neumosync achieves significantly higher absolute performance, suggesting it learns more effectively from each task. This result is particularly striking on tasks with random input-output associations, which are designed to prevent knowledge transfer. Performance gains indicate that \neumosync develops an emergent ability to "learning to memorize", that is, it acquires a transferable strategy to rapidly form new associations, a capability it learns without any explicit meta-learning objective.
\begin{figure}[H]
  \centering
  \includegraphics[width=0.9\columnwidth,clip,trim=0 0 0 0]{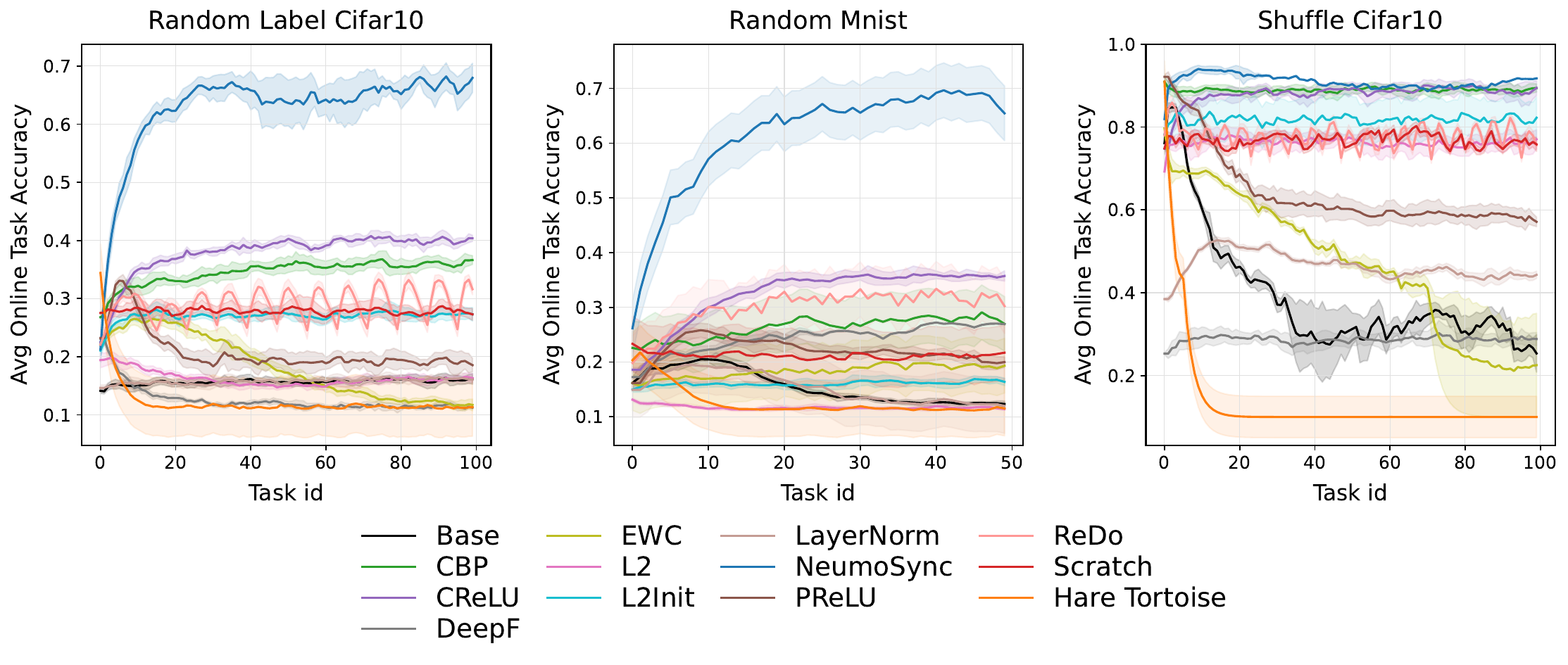}
  \caption{Learning curves for three plasticity-evaluation tasks and different baseline algorithms. \neumosync performs best on all three with a large gap on the more difficult random label tasks.}
  \vspace{-2mm}
  \label{fig:normal_train}
\end{figure}

\subsection{Setting $\mathrm{II}$ (Dissecting the Dynamics of Fast Adaptation)}
Beyond merely preserving plasticity, the ability to quickly adapt to tasks is also essential for a learner \citep{hadsell2020embracing}. To quantitatively assess the adaptation speed of a continual learner, we utilize the Learning Curve Area (LCA) \citep{chaudhry2018efficient} as a unified metric for both forward and backward adaptation. Specifically, we consider the speed of forward adaptation (\(\mathrm{LCA}_F\)), how fast the model learns the next task after previous training, and backward adaptation (\(\mathrm{LCA}_B\)), how fast the final model can readapt tasks previously trained on. These are defined as follows:

\begin{equation*}
\mathrm{LCA}_F = \frac{1}{\gamma+1}\sum_{b=0}^{\gamma}\left( \frac{1}{T-1}\sum_{k=1}^{T-1}A^{k-1}_{b,k} \right)
\, ,\quad
\mathrm{LCA}_B = \frac{1}{\gamma+1}\sum_{b=0}^{\gamma} \left( \frac{1}{T-1}\sum_{k=0}^{T-2}A^{T-1}_{b,k}\right).
\end{equation*}
\vspace{-2mm}

where $\gamma$ is the total number of training steps used to evaluate both forward and backward adaptation for each task. $A^i_{b, j}$ denotes the accuracy after training on $i$ tasks in sequence, and then being trained on task $j$ for $b$ batches\footnote{In other words, after learning tasks \(1\) through \(i\), we take a snapshot of the model, train that copy for \(b\) additional batches on task \(j\), and report its train accuracy on task \(j\).}. In our experiments, we specifically choose $\gamma$ such that it is relatively smaller than $M$ to evaluate an agent adaptation in a low training budget regime.

To analyze adaptation speed in a finer-grained manner, we decompose it into two components: \emph{knowledge transfer}, which quantifies the additional performance a model achieves from leveraging prior experience when trained on a new task for $b$ batches, and \emph{intrinsic learning ability}, which reflects how quickly the same model learns each task from random initialization. We quantify knowledge transfer using two metrics: Backward Knowledge Transfer ($\mathrm{BKT}_b$) and Forward Knowledge Transfer ($\mathrm{FKT}_b$). Each measures the average improvement in performance, relative to training from scratch. Formally, for $T$ tasks and $b$ fine-tuning batches, we define:
\vspace{-2mm}
\begin{equation*}
\mathrm{FKT}_b = \frac{1}{T-1}\sum_{k=1}^{T-1}\bigl(A^{k-1}_{b,k} - A^0_{b, k}\bigr), \quad \mathrm{BKT}_b = \frac{1}{T-1}\sum_{k=0}^{T-2}\bigl(A^{T-1}_{b,k} - A^0_{b, k}\bigr)
\end{equation*} \\[-3mm]
where $A^0_{k,b}$ is the accuracy obtained by training from a random initialization for $b$ batches on task $k$. We also define $\mathrm{LCA}_{0}$ as a metric for intrinsic learning speed. It is identical to $\mathrm{LCA}_{\mathrm{F}}$, except that the model is reinitialized with random weights before training on each new task. As discussed in Appendix \ref{app:proof}, we establish a relationship among $\mathrm{LCA}_{0}$, $\mathrm{BKT}_b$, $\mathrm{FKT}_b$, and the adaptation speeds $\mathrm{LCA}_{\mathrm{B}}$ and $\mathrm{LCA}_{\mathrm{F}}$, which supports the decomposition proposed here.

In the experiments in this part, we move beyond plasticity to dissect the dynamics of fast adaptation. We assess overall adaptation speed for novel tasks (forward adaptation, $\text{LCA}_{\text{F}}$) and previously seen tasks (backward adaptation, $\text{LCA}_{\text{B}}$). Crucially, we also disentangle this speed into two underlying components: the model's intrinsic learning speed ($\text{LCA}_0$), measured from a random initialization, and its capacity for knowledge transfer, measured via Forward and Backward Knowledge Transfer ($\text{FKT}$, $\text{BKT}$).

As seen in Table~\ref{tab:cl_results}, \neumosync shows superior overall adaptation, consistently achieving the highest $\text{LCA}_{\text{F}}$ and $\text{LCA}_{\text{B}}$ scores in most benchmarks under a restricted training budget (in each benchmark, we chose $\gamma$ so that it covers 20\% of the overall training budget per task - $M$). This result prompts a critical question: does this advantage stem from an inherently faster learning algorithm, or from a more effective use of prior knowledge? \neumosync does not exhibit the highest intrinsic learning speed; its $\text{LCA}_{0}$ is consistently outperformed by other baselines. Instead, its advantage is almost entirely attributable to superior knowledge transfer, as evidenced by its strong performance in both FKT and BKT. These findings indicate that \neumosync's rapid adaptation is not simply a function of a faster intrinsic learning algorithm, but rather an emergent property of its architecture. The model appears to acquire a transferable strategy for knowledge transfer by continually learning generalizable inductive biases, without relying on explicit meta-learning procedures.

To further validate this conclusion, we conducted additional comparisons against meta-learning methods that are meta-trained on task distributions prior to adaptation (Section \ref{sec:meta}). Our method's superior adaptation performance, even compared to these explicitly meta-trained approaches, reinforces the effectiveness of our knowledge transfer mechanism. These findings underscore \neumosync's ability to acquire transferable inductive biases. Furthermore, while our primary focus was on backward adaptation ($\text{LCA}_{\text{B}}$) and measuring re-learning efficiency rather than zero-shot retention, we also address the complementary challenge of stability and catastrophic forgetting in Appendix~\ref{app:forrrr}. While NeuMoSync is primarily designed to improve adaptability and preserve plasticity rather than serve as a standalone anti-forgetting method, we find that when combined effectively with a standard replay buffer, it achieves highly competitive forgetting performance and often matches or surpasses stability-oriented baselines that are explicitly designed for retention. Finally, we provide comprehensive evaluations of computational costs, run-time, and parameter counts in Appendix~\ref{appendix:hyepr}. 

\begin{table}[!t]
  \centering
  \caption{Results on all continual learning tasks measuring transfer. $\gamma$ denotes the total number of update steps used for forward and backward adaptation. It is set to 20\% of the training budget allocated to each task in the continual learning setting. The relationship stated in Equation~\ref{eq:decompose} between $\text{LCA}_0$, $\text{LCA}_{\text{F}}$, $\text{LCA}_{\text{B}}$, FKT, and BKT is empirically verifiable based on the results.
  }
  \label{tab:cl_results}
\resizebox{\textwidth}{!}{%
\begin{tabular}
{l|ccc|ccc|ccc|c}
\toprule
& \multicolumn{3}{c|}{Backward Knowledge Transfer}
& \multicolumn{3}{c|}{Forward Knowledge Transfer}
& \multicolumn{3}{c|}{Learning Speed}
& Average \\ 
\cmidrule(lr){2-4} \cmidrule(lr){5-7} \cmidrule(lr){8-10}
Method
& BKT$_{\mathlarger{\gamma/2}}$ & BKT$_{\mathlarger{\gamma}}$ & BKT$_{\mathlarger{\text{mean}}}$
& FKT$_{\mathlarger{\gamma/2}}$ & FKT$_{\mathlarger{\gamma}}$ & FKT$_{\mathlarger{\text{mean}}}$
& $\text{LCA}_{\mathlarger{0}}$ & $\text{LCA}_{\mathlarger{\text{F}}}$ & $\text{LCA}_{\mathlarger{\text{B}}}$
& Final Accuracy \\   
\midrule
\midrule
\multicolumn{10}{l}{\textbf{Random Label CIFAR-10 (Memorization)}} \\
\midrule
CBP & 0.54 (0.19) & 2.60 (0.53) & 0.77 (0.41) & 2.20 (0.72) & 3.03 (1.24) & 1.60 (1.18) & \textbf{12.13 (1.39)} & 14.23 (0.59) & 12.30 (0.92) & 38.21 (0.24)\\
CReLU & 3.56 (0.53) & 5.87 (0.72) & 3.33 (1.01) & 2.80 (0.53) & 4.34 (0.70) & 2.13 (1.15) & 11.87 (1.04) & 14.42 (0.72) & 15.20 (1.09) & 35.03 (0.25)\\    
ReDo & 0.70 (0.21) & 0.04 (0.01) & 0.43 (1.21) & 0.93 (0.05) & 0.76 (0.21) & 0.90 (0.20) & 11.87 (1.46) & 12.19 (0.84) & 11.33 (0.59) & 27.14 (0.87)\\     
L2Init + EWC & 1.51 (0.05) & 2.65 (1.35) & 1.50 (1.10) & 0.98 (0.21) & 1.75 (0.36) & 0.17 (0.80) & 11.81 (1.47) & 12.58 (0.88) & 13.47 (1.05) & 25.14 (0.32)\\
\rowcolor{gray!20} \neumosync & \textbf{7.12 (1.33)} & \textbf{13.77 (1.12)} & \textbf{8.10 (1.36)} & \textbf{5.03 (1.08)} & \textbf{8.06 (1.20)} & \textbf{4.77 (0.55)} & 10.68 (0.73) & \textbf{16.06 (0.79)} & \textbf{18.90 (0.58)} & \textbf{64.74 (1.96)}\\
\midrule
\multicolumn{10}{l}{\textbf{Random Label MNIST (Memorization)}} \\
\midrule
CBP & 0.45 (0.15) & 1.02 (0.05) & 0.50 (0.42) & 1.33 (0.32) & 1.05 (0.18) & 0.70 (0.56) & 10.65 (0.71) & 11.68 (0.77) & 10.73 (1.44) & 36.41 (1.25)\\      
CReLU & 2.10 (1.15) & 3.87 (1.11) & 1.80 (0.67) & 2.57 (1.23) & 3.50 (0.66) & 1.47 (0.28) & 10.74 (1.49) & 12.65 (1.14) & 12.53 (1.06) & 32.47 (0.81)\\
ReDo & 0.12 (0.03) & 0.03 (0.26) & 0.33 (1.16) & 0.72 (0.13) & 0.11 (0.05) & 0.16 (0.47) & \textbf{10.74 (0.77)} & 10.97 (0.71) & 10.10 (1.44) & 27.24 (0.74)\\
L2Init + EWC & 1.04 (0.44) & 1.04 (0.16) & 0.83 (0.98) & 0.09 (0.05) & 0.08 (0.13) & 0.34 (0.06) & 10.03 (0.76) & 10.55 (0.75) & 11.10 (1.06) & 17.49 (0.12)\\
\rowcolor{gray!20} NeuMoSync & \textbf{3.65 (0.76)} & \textbf{7.22 (1.08)} & \textbf{8.53 (1.40)} & \textbf{4.66 (0.90)} & \textbf{6.95 (0.72)} & \textbf{7.17 (1.50)} & 10.13 (1.01) & \textbf{18.81 (0.59)} & \textbf{19.87 (0.55)} & \textbf{58.67 (1.42)}\\
\midrule
\multicolumn{10}{l}{\textbf{Shuffle CIFAR-10 (Concept Drift)}} \\
\midrule
CBP & 0.21 (0.16) & 0.87 (0.19) & 0.36 (1.19) & 0.67 (0.26) & 0.13 (0.10) & 0.31 (0.05) & \textbf{37.17 (1.50)} & 37.72 (1.03) & 38.67 (1.47) & 90.14 (0.47)\\      
CReLU & 14.56 (1.36) & \textbf{17.24 (0.51)} & 12.10 (1.22) & 6.87 (1.18) & 12.86 (1.04) & 8.22 (0.77) & 31.12 (1.14) & \textbf{39.93 (0.61)} & 43.29 (0.93) & 89.41 (0.34)\\
ReDo & 12.02 (0.95) & 13.55 (1.45) & 10.44 (1.38) & 2.34 (0.76) & 3.10 (1.20) & 1.44 (0.13) & 32.17 (1.41) & 33.88 (1.37) & 42.71 (0.80) & 77.36 (0.57)\\  
L2Init + EWC & 5.10 (1.14) & 7.67 (1.11) & 4.53 (0.65) & 3.44 (1.26) & 4.50 (1.04) & 2.92 (1.28) & 32.05 (1.03) & 35.46 (0.50) & 36.75 (0.82) & 69.65 (0.24)\\
\rowcolor{gray!20} \neumosync & \textbf{14.90 (0.52)} & 16.45 (1.43) & \textbf{13.24 (1.38)} & \textbf{4.33 (1.33)} & \textbf{12.37 (0.81)} & \textbf{8.92 (0.56)} & 32.02 (1.38) & \textbf{40.17 (1.45)} & \textbf{45.90 (0.59)} & \textbf{92.84 (0.36)}\\
\midrule
\multicolumn{10}{l}{\textbf{Permuted MNIST (Domain Incremental)}} \\
\midrule
CBP & 17.65 (0.99) & 14.88 (0.57) & 13.71 (1.26) & \textbf{14.77 (1.27)} & 18.32 (0.63) & 17.14 (0.98) & 20.50 (1.05) & 38.75 (0.77) & 35.29 (1.37) & 69.21 (0.27)\\
CReLU & 23.53 (0.92) & 20.98 (0.71) & 18.86 (1.04) & 4.93 (1.23) & 7.67 (0.70) & 5.57 (0.81) & \textbf{33.11 (1.50)} & 39.38 (1.15) & 53.44 (0.94) & 69.38 (0.31)\\
ReDo & \textbf{25.87 (1.02)} & 26.33 (0.62) & 20.71 (0.72) & 13.87 (0.84) & 14.65 (1.09) & 12.43 (0.73) & 12.12 (0.72) & 25.50 (0.57) & 36.29 (1.13) & 35.71 (0.38)\\
L2Init + EWC & 24.74 (0.73) & 24.21 (1.41) & 19.86 (1.36) & 12.21 (0.57) & 13.33 (0.74) & 12.65 (1.17) & 12.12 (0.71) & 25.52 (0.63) & 35.57 (1.44) & 34.24 (0.42) \\
\rowcolor{gray!20} \neumosync & 22.76 (1.07) & \textbf{28.65 (0.97)} & \textbf{24.57 (1.28)} & 12.87 (1.31) & \textbf{19.22 (0.69)} & \textbf{17.71 (0.60)} & 30.02 (0.93) & \textbf{47.13 (0.92)} & \textbf{55.71 (0.97)} & \textbf{74.36 (1.36)}\\
\midrule
\multicolumn{10}{l}{\textbf{Class Split CIFAR-100 (Class Incremental)}} \\
\midrule
CBP & \textbf{12.04 (1.23)} & 10.65 (1.17) & \textbf{13.86} (1.48) & \textbf{8.21 (0.60)} & 5.77 (0.90) & 7.52 (0.84) & 51.09 (1.36) & 59.08 (0.75) & \textbf{63.57 (0.69)} & 72.36 (0.67)\\
ReDo & 2.76 (1.50) & 2.87 (1.34) & 2.46 (1.47) & 0.31 (0.19) & 0.21 (0.27) & 0.11 (0.25) & \textbf{56.20 (0.99)} & 58.19 (0.71) & 56.06 (0.90) & 72.51 (0.75)\\
L2Init + EWC & 3.45 (0.56) & 3.01 (0.88) & 3.94 (1.49) & 2.00 (0.77) & 3.64 (1.28) & 2.10 (0.96) & 55.77 (0.92) & 57.52 (1.46) & 57.67 (1.50) & 68.24 (0.31)\\
\rowcolor{gray!20} \neumosync & 9.87 (1.06) & \textbf{19.05 (1.22)} & 11.95 (0.65) & 7.10 (0.80) & \textbf{17.44 (1.47)} & \textbf{8.19 (1.08)} & 51.69 (1.04) & \textbf{60.77 (1.25)} & 62.43 (0.56) & \textbf{78.68 (0.91)}\\
\midrule
\multicolumn{10}{l}{\textbf{Class Split T-ImageNet (Class Incremental)}} \\
\midrule
CBP & 4.67 (1.08) & 3.11 (1.34) & 5.27 (1.35) & 2.06 (0.66) & 1.01 (0.48) & 1.88 (0.58) & 70.30 (0.69) & \textbf{72.37 (1.10)} & \textbf{76.88 (1.18)} & 83.21 (0.57)\\      
CReLU & 1.06 (0.35) & 5.43 (0.18) & 3.65 (1.34) & \textbf{3.64 (0.75)} & \textbf{4.43 (1.09)} & 2.54 (1.12) & 69.07 (0.92) & 71.74 (1.08) & 72.76 (0.78) & 84.27 (1.02)\\
ReDo & 5.21 (1.43) & 4.43 (0.70) & 6.88 (1.22) & 2.76 (0.74) & 2.87 (0.90) & 2.19 (1.17) & 69.33 (0.80) & 70.52 (0.82) & 74.69 (1.25) & 83.26 (1.32)\\     
L2Init + EWC & 1.04 (0.04) & 1.40 (0.23) & 3.54 (1.50) & 0.66 (0.14) & 0.06 (0.11) & 0.08 (0.32) & \textbf{72.48 (0.77)} & 71.56 (1.43) & 75.77 (1.38) & 82.76 (1.27)\\
\rowcolor{gray!20} \neumosync & \textbf{7.30 (1.38)} & \textbf{6.56 (0.87)} & \textbf{8.23 (0.66)} & 2.65 (1.33) & 3.57 (1.20) & \textbf{3.27 (1.11)} & 67.96 (1.49) & 70.44 (1.15) & \textbf{76.08} (0.51) & \textbf{86.37 (1.06)}\\
\bottomrule
\end{tabular}
}
\end{table}

\subsection{Ablation Studies} \label{sec:ablation}
To understand the sources of \neurosync's performance, in this section, we conduct a series of ablation studies designed to systematically dissect the architecture. Our primary investigation addresses two key questions: (1) How does the performance change when each mechanism is removed? and (2) What are the key inductive biases in the \neurosync module?
\subsubsection{Impact of Modulatory Mechanisms}
To assess the contribution of each component, we conduct two types of ablations (Figure~\ref{fig:ablation}). First, `w/o` variants disable a single $\alpha$-parameter mechanism. Second, a `w/o NeuroSync` variant removes the global controller entirely, forcing each neuron to learn its parameters to isolate the effect of global controller. The most significant performance degradation occurs by removing synaptic modulation ($\alpha_{\mathrm{SM}}$), underscoring its essential role. Adaptive linearity via $\alpha_{\mathrm{AL}}$ also proves vital for maintaining plasticity, a finding that aligns with prior work on the benefits of adaptive activation functions~\citep{razavi2025preserving}. Interestingly, while $\alpha_{\mathrm{WC}}$ is primarily designed to facilitate knowledge transfer, its removal also leads to a notable loss of plasticity. The activation offset ($\alpha_{\mathrm{ARM}}$) has a more subtle effect, reducing overall performance without being the primary driver of plasticity preservation.

Crucially, the sharp drop in performance of the `w/o NeuroSync` variant highlights the necessity of the global modulation process. This result demonstrates that providing neurons with independent adaptive capabilities is insufficient; the coordinated, context-aware signals generated by the global controller are essential to the success of the architecture. For a more granular investigation, we provide additional ablation studies in the Appendix. These include an analysis of each mechanism operating in complete isolation (Appendix~\ref{app:mech_iso}), a study designed to quantify the role of knowledge transfer through the \neurosync module (Appendix~\ref{app:prev_know}), and a detailed sensitivity analysis of the consolidation rate $\beta$ (Appendix~\ref{app:beta}). Finally, Appendix~\ref{app:global_mech} presents an ablation study examining how the granularity of plasticity modulation signals affects learning performance.

\begin{figure}[htb]
  \centering
  \includegraphics[width=0.85\columnwidth,clip,trim=0 0 0 0]{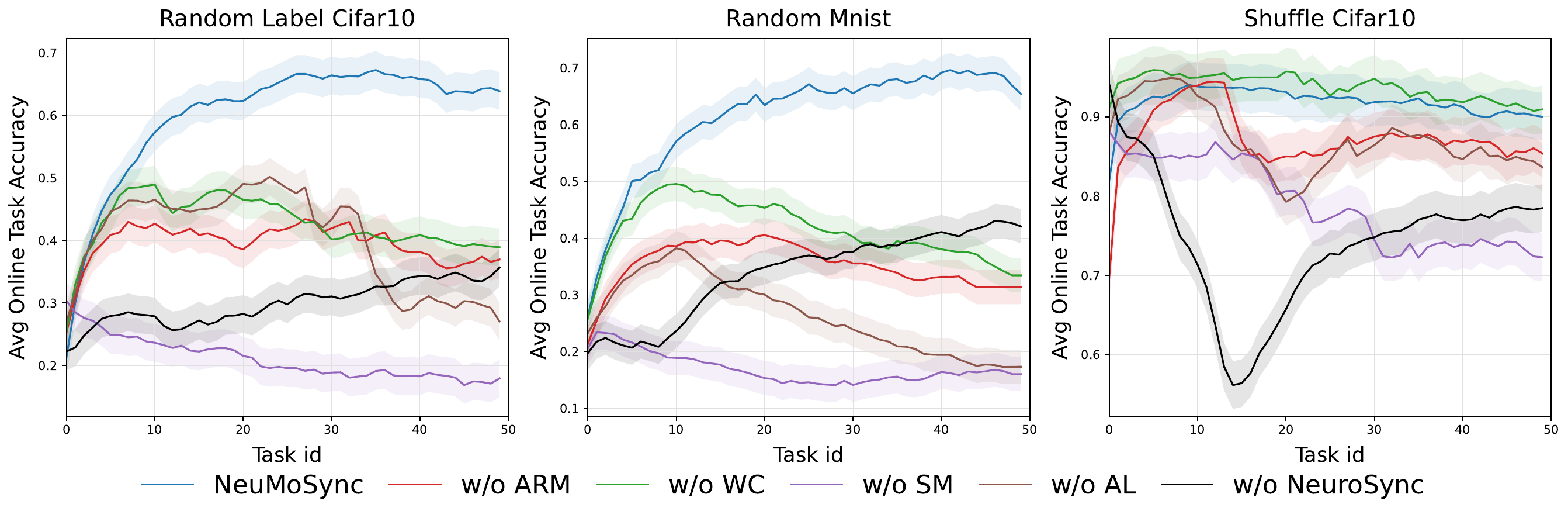}
  \caption{Learning curves for ablations of \neumosync. Each curve corresponds to removing one type of $\alpha$-parameters. The black curve corresponds to making all $\alpha$-parameters learnable (i.e., not input-dependent). All ablations but one show degraded performance compared to full \neumosync. }
  \label{fig:ablation}
\end{figure}
\vspace{-3mm}
\subsubsection{Inductive Biases of NeuroSync}\label{sec:inductive}
Our previous analysis highlights the importance of the \neurosync module's architecture, but a key question remains: which of its inductive biases is responsible for its success? We first establish that the gains are not merely due to the use of a powerful attention-based model. As shown in Table~\ref{tab:abl_vit}, a standard Vision Transformer (ViT)\citep{dosovitskiy2020image} baseline, when applied directly to the classification task, fails to preserve plasticity on the challenging \texttt{Random-label CIFAR10} benchmark. This result confirms that \neumosync's superior performance stems specifically from its function as a global modulator and its interaction with the \mainnetwork and \consolidatednetwork, rather than the intrinsic power of its underlying components.

We hypothesize that the crucial inductive bias is \emph{parameter sharing}, which equips the model with a single, universal function that can be applied to all neurons.
To isolate the effect of parameter sharing from the attention mechanism, we conducted a controlled experiment. We replaced the transformer in the \neurosync module with two alternatives: (1) a non-attentional, parameter-sharing architecture based on 1D convolutions, and (2) a standard MLP, representing a non-parameter-sharing model.

The results, presented in Table~\ref{tab:abl_vit}, provide support for our hypothesis. The 1D convolutional architecture achieves performance comparable to the transformer-based module, while the MLP-based variant performs significantly worse. This finding confirms that the parameter-sharing inductive bias, not the attention mechanism itself, is the key ingredient for an effective global controller in this context. While both parameter-sharing architectures are effective, we retain the Transformer in our primary model due to its practical advantage of having a parameter count that is independent of the number of neurons in the \mainnetwork. Further architectural details and experimental specifics of this ablation study are provided in Appendix~\ref{app:arch_details}.

\begin{table}[ht]
\centering
\caption{Ablation of the \neurosync module's architecture to isolate the effect of parameter sharing. We compare a ViT baseline against \neurosync variants using different controllers (MLP, 1D Conv, Transformer) on the Random-label CIFAR10 benchmark. We report the average performance over the \textbf{last} 10\%, 25\%, 50\%, 75\%, and 100\% of tasks on Random-label CIFAR10.}
\resizebox{0.8\linewidth}{!}{%
\begin{tabular}{l|c|c|c|c|c}
\toprule
\textbf{Method} & \textbf{10\%} & \textbf{25\%} & \textbf{50\%} & \textbf{75\%} & \textbf{100\%} \\
\midrule

ViT & 47.20(1.76) & 48.72(1.98) & 53.61(1.56) & 61.15(1.21) & 60.38(1.54) \\
\neumosync (MLP) & 48.47(1.15) & 49.17(1.23) & 49.31(1.54) & 50.26(1.26) & 47.51(1.32) \\ 
\neumosync (1D Conv) & \textbf{70.04(1.07)} & 68.11(1.12) & 67.11(1.11) & 65.24(1.32) & 63.34(1.82) \\ 
\rowcolor[gray]{0.9}
\neumosync & 69.31(1.10) & \textbf{68.32(1.04)} & \textbf{68.81(1.03)} & \textbf{67.33(1.46)} & \textbf{64.74(1.96)} \\ 
\bottomrule
\end{tabular}%
}
\label{tab:abl_vit}
\end{table}


\subsection{Emergent behavior}\label{sec:emerg}
We also analyzed the internal dynamics of \neumosync's learned modulation signals. Our analysis, visualized in Figure~\ref{fig:emerge}, reveals three distinct emergent behaviors, which we discuss in the following.

\begin{figure}[!h]
  \centering
  \includegraphics[width=1\columnwidth,clip,trim=0 0 0 0]{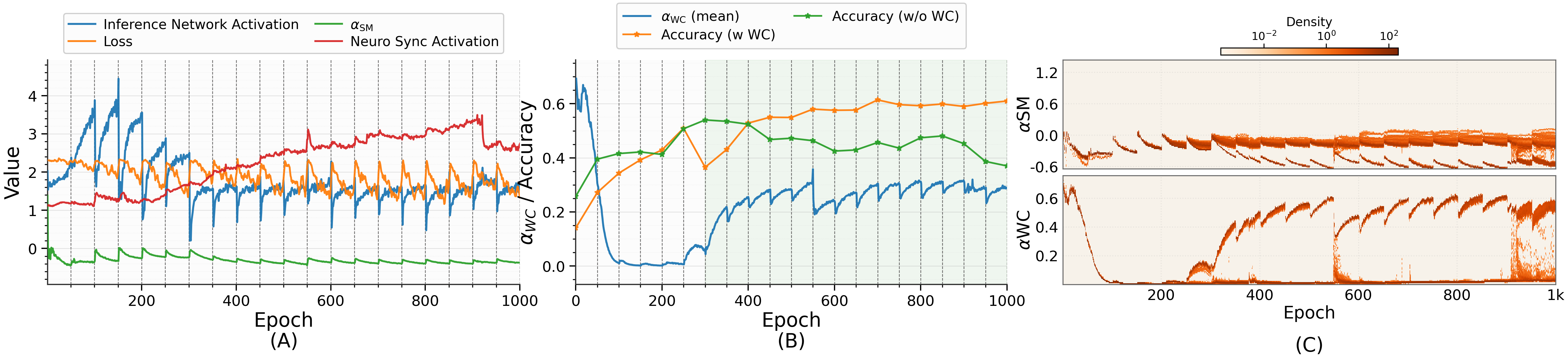}
  \caption{These plots show the analyses of the $\alpha$-parameters and the \neumosync module on the \texttt{Random Label CIFAR-10} tasks. (A) shows that the absolute value of $\alpha_{\mathrm{SM}}$ decreases at task changes as losses spike up (B) shows that an increase in $\alpha_{\mathrm{WC}}$ co-occurs with an increase in accuracy. The $\alpha_{WC}$ curve is reported per epoch (averaged over neurons), while the accuracy curves correspond to the final performance at the end of each task. (C) shows that two groups of neurons emerge in terms of magnitude of $\alpha_{\mathrm{WC}}$ and $\alpha_{\mathrm{SM}}$.}
  \label{fig:emerge}
\end{figure}

Figure~\ref{fig:emerge}(A) shows the model's immediate reaction to a task switch. At each task transition, the loss spikes, reflecting a prediction error due to the changed data distribution. The \neurosync module responds instantly: the average synaptic modulation coefficient ($\alpha_{\mathrm{SM}}$) sharply decreases in magnitude, temporarily suppressing the \mainnetwork's influence. Simultaneously, the internal activation\footnote{In this part, the \neurosync module is implemented as a one-layer encoder-only Transformer. Here we use the average activations of the neurons in the last layer of the MLP in the Transformer, averaged over a batch of samples
.} of the \neurosync module (shown in red) itself increases, indicating that it is actively processing the novel context. Task switching is also evident in the \inferencenetwork’s activation (the average activation of all neurons in the \inferencenetwork in blue). At the onset of each new task, the activation drops before gradually increasing for the remainder of the task.

Beyond its immediate reactions, the analysis reveals an evolving long-term strategy (Figure~\ref{fig:emerge}B). For the initial phase of training, the model operates in a fully plastic mode, with the weight consolidation coefficient ($\alpha_{\mathrm{WC}}$) held at zero. However, a strategic shift occurs later in training (around epoch 300, shaded), precisely when a slight dip in accuracy in Figure \ref{fig:emerge}C suggests the onset of plasticity stress (a decrease in task accuracy). At this point, the \neurosync module begins to gate in the influence of the stable, \consolidatednetwork by increasing $\alpha_{\mathrm{WC}}$. This learned reliance on consolidated knowledge coincides with a recovery in performance, acting as an adaptive mechanism to counteract the decay of plasticity.


Finally, Figure~\ref{fig:emerge}(C) reveals that the modulation signals lead to functional specialization at the neuron level. The heatmaps show the distribution of $\alpha_{\mathrm{WC}}$ and $\alpha_{\mathrm{SM}}$ values across the entire neuron population over time. For both coefficients, a clear bimodal distribution emerges: 
a subset of neurons remain highly plastic (large absolute values of $\alpha_{\mathrm{SM}}$), while others have $\alpha_{\mathrm{SM}}$ values near zero, indicating reduced plasticity. At the same time, one population of neurons shows strong reliance on \consolidatednetwork, whereas others make minimal use of it, with $\alpha_{\mathrm{WC}}$ values close to zero.
This emergent division of labor, where some neurons stabilize to preserve old knowledge while others remain flexible to acquire new information, provides a mechanism for plasticity and adaptability.  
Another particularly interesting observation is that, for a subset of neurons, $\alpha_{\mathrm{SM}}$ and $\alpha_{\mathrm{WC}}$ take on opposite signs (e.g., $\alpha_{\mathrm{WC}} > 0$ and $\alpha_{\mathrm{SM}} < 0$). To carefully analyze the interplay between \mainnetwork, \consolidatednetwork and the modulation parameters, and to explore a possible explanation for the reason behind the superior performance on this task, we provide an in-depth analysis in Appendix~\ref{app:explain}. 

\subsection{Comparison with Meta Learning}\label{sec:meta}

We compare our method with two meta-learning baselines: MAML~\citep{finn2017model}, which is designed for rapid adaptation, and ANML~\citep{beaulieu2020learning}, a meta-learning approach for continual learning that employs a meta-learned neuromodulatory network to gate the activations of a prediction network, enabling selective plasticity. MAML is first meta-trained on the task distribution and then optimized sequentially, with the model weights reset to the learned meta-parameters at the start of each task.  In contrast, ANML is meta-trained on episodes that approximate task sequences by optimizing performance on all tasks, and at meta-test time, its prediction network is trained continually without resetting.
The hyperparameters for this setting are provided in Appendix~\ref{app:maml_hyp}. As shown in Table~\ref{tab:maml}, \neumosync consistently outperforms MAML and ANML in terms of adaptation speed, as indicated by higher $\text{LCA}_{\text{F}}$ scores. This margin is especially pronounced in memorization and concept-drift tasks, where \neumosync achieves significantly larger $\text{LCA}_{\text{A}}$. 


\begin{table}[!h]
\vspace{-2mm}
\centering
\caption{Comparison of MAML and ANML with \neumosync across different continual learning tasks.}
\label{tab:maml}
\resizebox{\textwidth}{!}{%
\begin{tabular}{l|cc|cc|cc|cc|cc|cc}
\toprule
\multirow{2}{*}{\textbf{Method}} 
& \multicolumn{2}{c|}{\textbf{Permuted MNIST}} 
& \multicolumn{2}{c|}{\textbf{Random Label MNIST}} 
& \multicolumn{2}{c|}{\textbf{Random Label CIFAR-10}} 
& \multicolumn{2}{c|}{\textbf{Shuffle CIFAR-10}} 
& \multicolumn{2}{c|}{\textbf{Class Split CIFAR-100}} 
& \multicolumn{2}{c}{\textbf{Class Split T-ImageNet}} \\
& LCA$_\text{F}$ & Acc 
& LCA$_\text{F}$ & Acc 
& LCA$_\text{F}$ & Acc 
& LCA$_\text{F}$ & Acc 
& LCA$_\text{F}$ & Acc 
& LCA$_\text{F}$ & Acc \\
\midrule
MAML 
& 24.44(0.93) & 60.39(1.23) 
& 12.01(1.83) & 20.21(1.23) 
& 12.94(0.94) & 21.56(0.99) 
& 41.33(1.43) & \textbf{92.35(1.04)} 
& 47.12(1.11) & 75.34(1.23) 
& \textbf{64.14(1.65)} & 87.96(0.93) \\
ANML 
& 19.39(1.27) & 47.23(1.65) 
& 10.97(1.54) & 12.34(1.76) 
& 13.04(0.87) & 14.27(0.73) 
& 19.44(1.29) & 30.41(1.32) 
& 55.31(1.32) & \textbf{80.05(1.06)} 
& 61.19(0.76) & \textbf{88.18(1.01)} \\
\rowcolor[gray]{0.9}
NeuMoSync 
& \textbf{26.64(1.63)} & \textbf{74.36(1.36)} 
& \textbf{25.54(1.31)} & \textbf{58.67(1.42)} 
& \textbf{21.81(1.06)} & \textbf{64.74(1.96)} 
& \textbf{51.43(0.83)} & \textbf{92.84(0.36)} 
& \textbf{57.22(0.85)} & 78.68(0.91) 
& 62.44(1.63) & 86.37(1.06) \\
\bottomrule
\end{tabular}%
}
\end{table}

These results highlight that \textit{learning a good initialization is not always the most effective way to define meta-parameters for rapid adaptation}. In memorization tasks, where each task involves random input-output mappings, the shared structure does not lie in the initialization but in inductive biases shaped by the non-stationarity. While initialization-based meta-learning can be effective in settings with shared representations (e.g., \texttt{Class-Split T-ImageNet}), it is less suitable for tasks like \texttt{Random-MNIST}, where such structure is absent and more general strategies are required. This limitation is also reflected in meta-learning methods: approaches like MAML may struggle in these regimes, while methods such as ANML rely on expensive meta-training and are explicitly optimized to mitigate forgetting, potentially biasing them toward stability at the expense of forward adaptation.

\subsection{Scaling \neumosync}\label{sec:new_var}

So far, we have implemented the \neurosync module with either an encoder-only Transformer or a stack of 1D convolutions over all neurons. However, the Transformer’s quadratic cost in neuron count and the CNN’s parameter scaling with classifier size make both approaches impractical for larger models. To address this, we propose a more scalable encoder–decoder controller that samples a small fixed subset ($K \ll 1$) of neurons across all layers for each input, feeds their features into the same two-head Transformer encoder as before, and uses a lightweight cross-attention module to map encoder outputs to full neuron features for per-neuron modulation. Further details are in Appendix~\ref{app:scale_controller}. We also study the generality of \neumosync beyond MLP/CNN backbones by extending it to vision transformers in Appendix~\ref{app:extend_transformer}.

\noindent
\begin{minipage}[t]{0.43\textwidth}
\vspace{0pt}
We evaluate this architecture on the \texttt{Shuffle Mini-ImageNet} benchmark with $K = 0.1$, and on \texttt{Split ImageNet} with $K = 0.01$, using ResNet-18 and ResNet-50 classifiers, respectively. The results are shown in Figure~\ref{fig:scale_controller}. These results show that the modulation controller adds only a small overhead in terms of additional parameters and a modest runtime cost (as discussed in Appendix~\ref{app:scale_controller}), yet yields a substantial performance improvement over baselines such as \texttt{ViT}, whose computational requirements are considerably higher than ours. While none of the baselines show explicit plasticity loss, likely due to the classifier’s high capacity, the learning curves reveal a clear gap between our global modulatory network and other plasticity-preserving methods. These results show that the global modulation controller scales effectively to larger architectures.

\subsection{Forgetting and Stability Experiments}

In this section, we investigate the stability of \neumosync when combined with Experience Replay (ER)~\citep{rolnick2019experience}, using reservoir sampling and a replay buffer of 4{,}000 examples. We further analyze how the addition of ER influences the adaptation ability of \neumosync,  and whether their combination can achieve a strong balance between stability and plasticity. We apply the same strategy to CBP and CReLU, which are designed to preserve
\end{minipage}\hfill
\begin{minipage}[t]{0.54\textwidth}
\vspace{0pt}
\centering
\includegraphics[width=\linewidth]{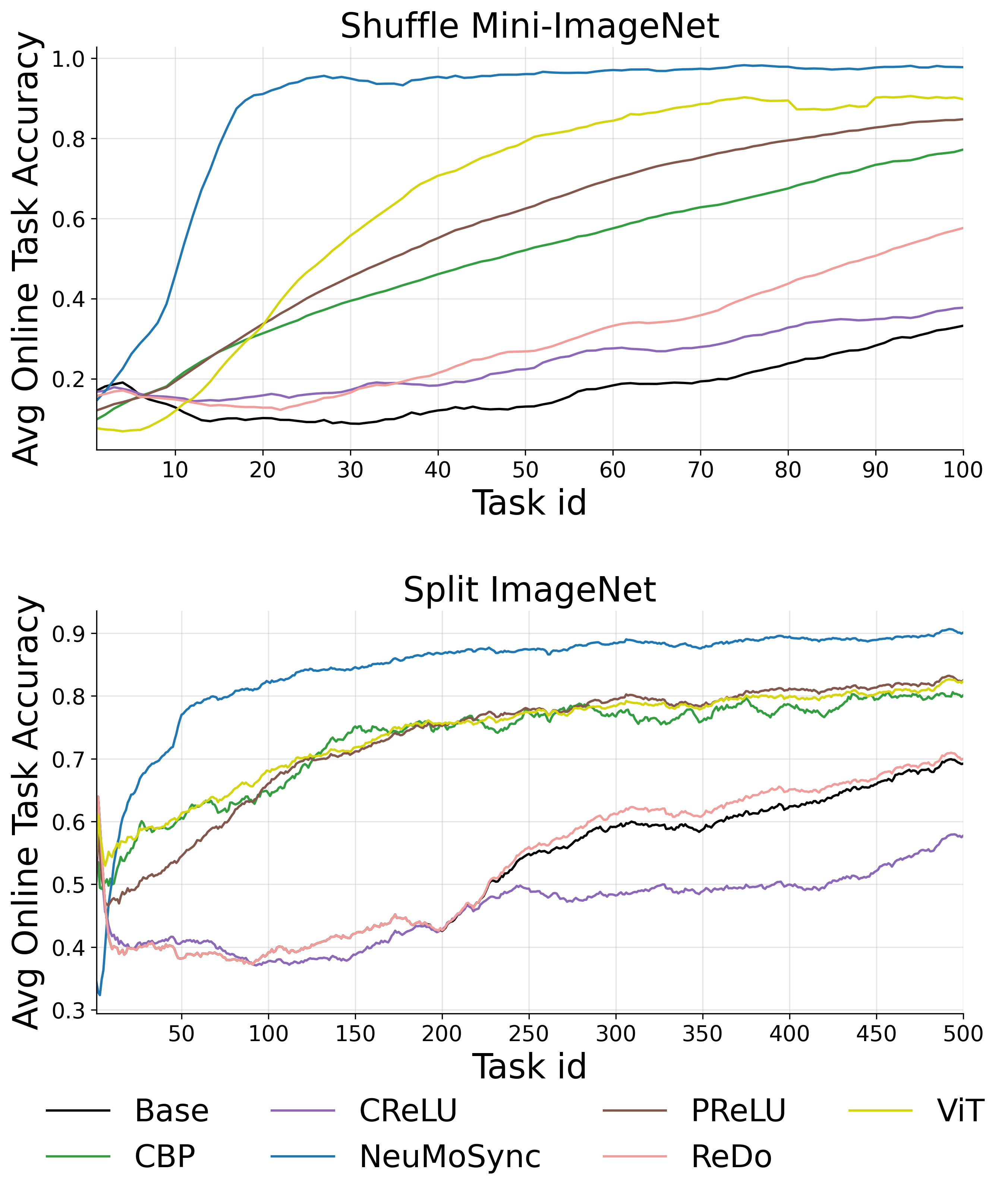}
\captionof{figure}{Average Online Task Accuracy on \texttt{Shuffle Mini-ImageNet} and \texttt{Split ImageNet} for our method, \neumosync, and the baselines, using a ResNet-18 and ResNet-50 classifier, respectively.}
\label{fig:scale_controller}
\end{minipage}
      plasticity rather than prevent forgetting. We compare against representative forgetting baselines, including A-GEM~\citep{chaudhry2018efficient}, EWC~\citep{kirkpatrick2017overcoming}, HAT~\citep{serra2018overcoming}, and vanilla ER. Benchmarks, architecture, and training details are provided in Appendix~\ref{app:forrrr}.

We use Average Forgetting (AF)~\cite{chaudhry2018riemannian}, defined as the average accuracy on previously seen tasks at the end of training, where higher values indicate better retention. Table~\ref{app:forgetting_combined} reports AF alongside the average accuracy over the last 25\% of tasks, measuring late-stage plasticity. The results show that \neumosync + ER achieves a strong balance between retention and adaptability. Although not explicitly designed for stability, it attains competitive or superior forgetting performance on most benchmarks, outperforming A-GEM and EWC on three datasets, matching or exceeding HAT on Permuted MNIST and Split CIFAR-10, and consistently outperforming CBP+ER and CReLU+ER. At the same time, it remains among the strongest methods in terms of plasticity, achieving the best late-stage performance on Permuted MNIST, Split CIFAR-100, and Split Tiny ImageNet, while remaining competitive on Split CIFAR-10.
\begin{table}[t!]
\centering
\caption{Comparison of forgetting and late-stage plasticity.  Higher is better for both metrics.}
\label{app:forgetting_combined}
\resizebox{\textwidth}{!}{
\begin{tabular}{l|cccc|cccc}
\toprule
\multirow{2}{*}{\textbf{Method}} 
& \multicolumn{4}{c|}{\textbf{Average Forgetting}} 
& \multicolumn{4}{c}{\textbf{Average Performance Across Last 25\% of Tasks}} \\
\cmidrule(lr){2-5} \cmidrule(lr){6-9}
& \textbf{P-MNIST} 
& \textbf{Split C10} 
& \textbf{Split C100} 
& \textbf{Split T-ImgNet}
& \textbf{P-MNIST} 
& \textbf{Split C10} 
& \textbf{Split C100} 
& \textbf{Split T-ImgNet} \\
\midrule
EWC        & 46.23(1.34) & 18.00(0.92) & 4.57(1.28)  & 8.41(0.73)  & 62.09(0.84) & 94.32(1.12) & 89.98(0.65) & 92.40(1.27) \\
HAT        & 22.19(0.84) & 58.44(1.67) & \textbf{25.00}(0.91) & \textbf{51.62}(1.42) & 25.83(1.05) & 87.05(0.78) & 49.53(1.44) & 78.73(0.91) \\
A-GEM      & 50.71(1.11) & 20.36(0.87) & 1.29(1.58)  & 12.88(0.93) & 70.69(0.96) & \textbf{95.68}(1.38) & 17.27(0.72) & 70.69(1.09) \\
ER         & 63.00(0.95) & 61.55(1.21) & 7.42(0.82)  & 32.18(1.36) & 68.12(1.22) & 91.22(0.86) & 37.90(1.41) & 91.60(0.67) \\
CBP + ER   & 49.83(1.12) & 30.27(0.79) & 4.00(1.47)  & 28.64(0.86) & 68.69(0.94) & 82.81(1.19) & 53.47(0.73) & 68.69(1.06) \\
CReLU + ER & 63.39(0.97) & 60.00(1.53) & 1.78(0.88)  & 40.22(1.26) & 68.87(1.11) & 92.72(0.82) & 68.65(1.36) & 83.89(0.97) \\
\rowcolor[gray]{0.9}
\neumosync + ER 
           & \textbf{64.00}(1.05) & \textbf{66.47}(0.96) & 18.91(1.38) & 41.00(0.99) 
           & \textbf{75.58}(0.88) & 94.73(1.25) & \textbf{95.68}(0.91) & \textbf{95.30}(1.14) \\
\bottomrule
\end{tabular}
}
\end{table}



\section{Conclusion}\label{sec:conclusion}
\vspace{-2mm}

In this work, we introduce \neumosync, a global coordinator architecture designed to maintain plasticity and enhance adaptability in continual learning. \neumosync integrates global modulation mechanisms that dynamically control plasticity and neuronal activity based on both the input and the evolving network state. Across a diverse set of continual learning benchmarks, including memorization, concept drift, class-incremental, and domain-incremental settings, our method consistently demonstrates superior plasticity preservation and rapid adaptation compared to strong baselines. Through a detailed decomposition of adaptation dynamics, we show that these improvements arise primarily from enhanced knowledge transfer rather than increased intrinsic learning speed, indicating that the global controller enables the acquisition of transferable inductive biases across tasks. When scaling \neumosync\ to larger architectures such as Transformers and deep CNNs, the quadratic cost of full self-attention over neurons becomes prohibitive. To address this, we introduce a sparse variant of the controller that computes attention over a small subset of query neurons. We observe that as the base network grows, this subset can be made increasingly sparse (e.g., using only $\sim$1\% of neurons in ResNet-50) while still outperforming other baselines.

While \neumosync presents a promising step toward biologically-inspired continual learning systems, several open directions remain. 
For instance, a deeper theoretical understanding of how dynamic modulation interacts with optimization dynamics and representation learning in is an important avenue for future work. More broadly, our findings suggest that continual learning may benefit not only from mechanisms that protect parameters, but from architectures that explicitly coordinate plasticity at the system level. 
By reframing continual learning as a problem of adaptive global control, \neumosync opens new avenues for designing neural systems that remain robust, flexible, and effective over long task sequences. 

\section{Acknowledgments}
This work was supported in part by the NSERC Discovery Grants and the Canada CIFAR AI Chairs programs.
\newpage



\bibliography{references}
\bibliographystyle{tmlr}

\newpage

\tableofcontents

\newpage
\appendix

\section{Related Work}\label{app:related_work}

\subsection{Knowledge Transfer in Continual Learning}
Continual learning (CL), also known as lifelong learning, refers to scenarios where data arrives in a sequence and its distribution evolves over time, violating the standard i.i.d. assumption. In such settings, two primary challenges addressed in this paper are the loss of plasticity, where models gradually lose the ability to adapt to new information, and slow adaptation to changing data distributions

\paragraph{Loss of Plasticity} As training progresses, a model's ability to learn new tasks often declines. This phenomenon, known as loss of plasticity, is attributed to increasing dormant neurons ~\citep{sokar2023dormant}, increased loss of curvature ~\citep{lyle2023understanding}, and saturated neurons ~\citep{lyle2024disentangling, razavi2025preserving}. Methods to address this include resetting parts of the network to restore learning capacity \citep{dohare2024loss, sokar2023dormant, ash2020warm, galashov2024non}, modifying activation functions to maintain gradient flow \citep{lewandowski2024plastic, abbas2023loss, razavi2025preserving}, applying regularization terms that preserve plasticity \citep{kumar2023maintaining}. Some approaches use architectural adaptations like layer normalization ~\citep{lyle2024disentangling} or dynamic expansion of the network \citep{liu2024neuroplastic}. More recently, ~\citep{shin2024dash} proposed a new version of shrink and perturb algorithm in which the perturbation is informed by the direction of gradient for each weight. In this study, we provide an architectural solution to the problem of plasticity loss while keeping the overhead in terms of number of learnable parameters negligible (as described in Appendix \ref{appendix:overhead}). 


\paragraph{Slow Adaptation} 

Adaptation speed, often associated with forward transfer, refers to a model’s ability to leverage prior knowledge to quickly adapt to new tasks \citep{hadsell2020embracing}. This is closely related to generalization, which involves transferring knowledge to unseen data~\citep{wang2024comprehensive}. Strong generalization enhances forward transfer and enables efficient learning in scenarios with limited time or data. While a few works have explored fast adaptation in online continual learning settings relying on pretraining ~\citep{hammoud2023rapid} or foundation models~\citep{zhang2024overcoming}, most approaches in this area incorporate some form of meta-learning, resulting in hybrid frameworks such as continual meta-learning or meta-continual learning~\citep{chen2023stability, hurtado2021optimizing, von2021learning, caccia2020online}. For example, \citep{caccia2020online} propose using pretrained meta-parameters within a continual setting and updating them in response to out-of-distribution tasks. A comprehensive review of these approaches can be found in~\citep{son2024meta}.

Unlike these methods, which rely on meta-learning either during training (e.g., meta-pretraining) or through specialized learning rules (e.g., meta-gradients or their approximations), our method aims to learn useful inductive biases, specifically, for controlling plasticity and activity, without such overhead.



\subsection{Neuro-inspired AI}

The earliest models of artificial neural networks were inspired by the functionality of individual neurons~\citep{mcculloch1943logical, rosenblatt1958perceptron}. Another major biological inspiration, the organization of neurons in the visual cortex, led to the development of convolutional neural networks (CNNs)~\citep{lecun1998gradient}. In recent years, a wide range of neuro-inspired methods have been proposed, which can be broadly categorized into architectural designs~\citep{rohani2022bimrl, dijujin2025inductive, van2020brain, costacurta2024structured}, learning rules~\citep{lillicrap2016random, liu2022biologically}, and loss functions~\citep{kirkpatrick2017overcoming}. For comprehensive reviews on neuro-inspired AI, see~\citep{ren2024brain, schmidgall2024brain}. More recently, ~\citep{liu2024neuroplastic} propose a solution based on neuroscience observations to preserve plasticity which include dormant neuron pruning, weight consolidation, and neural expansion, and tested the method on continual RL tasks.


A number of methods have tried to simulate neuromodulatory effects within their approach~\citep{costacurta2024structured, vecoven2020introducing, miconi2020backpropamine, miconi2018differentiable, liu2022biologically, tsuda2021neuromodulators}. Specifically,~\citep{costacurta2024structured} introduces a new term into the governing ODEs of recurrent neural networks (RNNs), which is modeled with a separate network, to simulate neuromodulatory influences by applying synaptic scaling in a low-rank RNN. Additionally,~\citep{vecoven2020introducing} demonstrates that controlling neuron activation functions with parameters inferred from a trajectory-aware RNN improves performance in meta-reinforcement learning settings. In~\citep{miconi2020backpropamine}, networks are trained using a Hebbian learning rule, with the plasticity parameters themselves learned via gradient descent, whereas~\citep{miconi2018differentiable} employs evolutionary algorithms for learning these parameters. 

While prior methods primarily rely on dynamical systems induced by recurrent networks or enforce specific learning rules like Hebbian plasticity, our approach differs in that all parameters are trained end-to-end with standard gradient descent, making it broadly applicable across different neural network architectures. To the best of our knowledge, our approach is the first to simultaneously modulate both the functionality and plasticity of each neuron based on the input and the internal state of the network.

Another related line of work draws inspiration from the Complementary Learning Systems (CLS) theory, a neuroscience framework that explains how biological systems, particularly the brain, balance rapid learning and long-term stability~\citep{sun2023organizing}. Based on this theory, \citep{lee2024slow} propose an algorithm that maintains a slowly changing version of the network and periodically reinitializes a fast learner with the weights of the slow learner, thereby preserving plasticity. Similarly, \citep{sarfraz2022synergy, arani2022learning} leverage a multi-level memory system to prevent forgetting of recurring knowledge across tasks.

While our approach also maintains a slow-changing version of the \mainnetwork, it serves a different purpose: rather than implementing a multi-level memory system akin to the division of roles between the neocortex and hippocampus, our method uses the slow learner to simulate the neuromodulatory process responsible for forming consolidated synaptic weights.

\subsection{Hypernetworks}
Hypernetwork methods refer to a class of neural network architectures in which one network (the hypernetwork) generates the weights or parameters of another network (the target network). This approach introduces an additional level of abstraction, enabling greater flexibility, adaptability, and efficiency in learning~\citep{ha2016hypernetworks}. For example, \citep{costacurta2024structured} propose using a recurrent neural network (RNN) as a hypernetwork to generate low-rank weights for a target RNN, while others condition the hypernetwork on noise~\citep{krueger2017bayesian} or task-specific inputs~\citep{sun2017hypernetworks}. A comprehensive review of hypernetwork methods can be found in~\citep{chauhan2024brief}. While these methods use a hypernetwork to generate the full parameters of a target network, \neurosync in our method generates modulation signals rather than the parameters themselves. These signals dynamically control how the neurons in the \mainnetwork activate or adapt, while keeping the \mainnetwork architecture fixed across tasks. 

As a related prior work, FiLM layers \citep{perez2018film} modulate intermediate activations via feature-wise affine transformations, i.e. per-channel scaling and shifting of feature maps based on a conditioning input. In contrast, our method applies two distinct scaling signals, 
$\alpha_{\text{SM}}$
, and $\alpha_{\text{WC}}$ directly to the neuron inputs / effective weights of two networks (\mainnetwork and \consolidatednetwork). Because this modulation happens before the nonlinearity and at the level of the effective weights, it can be viewed as defining an input-dependent “effective network” with potentially richer changes than per-feature affine re-weighting of activations at each layer.

\section{Experimental Settings}\label{app:detailed_expr_setting}

\subsection{Metrics}
\vspace{-2mm}
\label{app:proof}

\textbf{Fast Adaptation (Setting $\mathrm{II}$)}. Beyond preserving plasticity, a critical measure of a continual learner's efficacy is its speed of adaptation~\citep{hadsell2020embracing}. We assess this in two primary directions: \textit{forward adaptation}, which quantifies how quickly a model learns novel, unseen tasks, and \textit{backward adaptation}, which measures how efficiently the final model can re-adapt to previously encountered tasks.

To formalize these concepts, consider a sequence of $T$ tasks indexed $k \in \{0, 1, \dots, T-1\}$. We define $A^{i}_{b,k}$ as the training accuracy on task $k$ after the model has been trained on it for exactly $b$ optimization steps, given that the model was previously trained sequentially on tasks $0$ through $i-1$. The special case $A^{0}_{b,k}$ therefore represents the accuracy on task $k$ when trained from a random initialization.

To quantify these adaptation speeds, we employ the Learning Curve Area (LCA)~\citep{chaudhry2018efficient}, a unified metric that integrates performance over a fixed adaptation budget of $\gamma$ steps. The forward adaptation speed, $\mathrm{LCA}_F$, is the area under the average learning curve for novel tasks. The term inside the parentheses of its formula constitutes the average accuracy at a specific training step $b$ across all tasks $k$, each learned immediately after its predecessors. The outer summation then computes the area under this average curve:

\begin{equation*}
\mathrm{LCA}_F = \frac{1}{\gamma+1}\sum_{b=0}^{\gamma}\left( \frac{1}{T-1}\sum_{k=1}^{T-1}A^{k-1}_{b,k} \right)
\quad,\quad
\mathrm{LCA}_B = \frac{1}{\gamma+1}\sum_{b=0}^{\gamma} \left( \frac{1}{T-1}\sum_{k=0}^{T-2}A^{T-1}_{b,k}\right).
\end{equation*}
\vspace{-2mm}

where \(\gamma\) denotes the total number of training steps used to probe both forward and backward adaptation for each task. Throughout the experiments, we set \(\gamma\) to be substantially smaller than \(M\) to assess adaptation under a low training-budget regime.

In the forward setting, the expression inside the parentheses reflects the model’s mean accuracy while it learns each new task for $b$ batches in sequence. In the backward setting, it indicates the mean accuracy when, after finishing all tasks, the model revisits and relearns any earlier task for $b$ batches. We then calculate the LCA of these values across all batches up to $\gamma$ as an indicator of how quickly the model adapts.



While the LCA metrics provide a valuable measure of overall adaptation speed, they conflate two distinct phenomena: a model's intrinsic learning ability and the performance gains derived from knowledge transfer. A high LCA score could indicate either an inherently fast learner or a model that is good at leveraging past experience. To properly attribute performance gains and understand the true source of a model's adaptability, it is necessary to disentangle these factors.

We achieve this decomposition by first isolating and quantifying the knowledge transfer component. We define metrics that measure the direct performance benefit of prior experience by comparing it against a baseline: the same model trained from a random initialization. Formally, we define Forward Knowledge Transfer ($\mathrm{FKT}_b$) and Backward Knowledge Transfer ($\mathrm{BKT}_b$) at a specific training step $b$ as the average accuracy gain over this baseline:

\vspace{-2mm}
\begin{equation*}
\mathrm{FKT}_b = \frac{1}{T-1}\sum_{k=1}^{T-1}\bigl(A^{k-1}_{b,k} - A^0_{b, k}\bigr), \quad \mathrm{BKT}_b = \frac{1}{T-1}\sum_{k=0}^{T-2}\bigl(A^{T-1}_{b,k} - A^0_{b, k}\bigr)
\end{equation*} \\[-3mm]

Here, the crucial term $A^0_{b, k}$ represents the baseline accuracy on task $k$ after $b$ training steps from scratch, serving as a direct measure of the model's performance without any accumulated knowledge.

Complementing this, we define the model's intrinsic learning ability, $\mathrm{LCA}_{0}$. This metric is designed to quantify the learning speed inherent to the model's architecture and optimizer, free from the influence of knowledge transfer. It is computed identically to $\mathrm{LCA}_{\mathrm{F}}$, with the modification that the model is reinitialized with random weights before training on each new task. Together, these metrics allow for a principled decomposition of adaptation speed into its constituent parts.



Building on the definitions, we establish the following relationships among $\mathrm{LCA}_{0}$, $\mathrm{BKT}_b$, $\mathrm{FKT}_b$, and the fast adaptation metrics $\mathrm{LCA}_{\mathrm{B}}$ and $\mathrm{LCA}_{\mathrm{F}}$. These equations demonstrate that both forward and backward adaptation performance can be mathematically decomposed into two distinct components: the model’s inherent learning ability, and the respective contribution from knowledge transfer, which can be evaluated separately.

\vspace{-4mm}
\[
\boxed{%
  \begin{aligned}
    \mathrm{LCA}_{B} = \mathrm{LCA}_{0} + \frac{1}{\gamma + 1}\sum_{b=0}^{\gamma} \mathrm{BKT}_b, \quad
    \mathrm{LCA}_{F} = \mathrm{LCA}_{0} + \frac{1}{\gamma + 1}\sum_{b=0}^{\gamma} \mathrm{FKT}_b.
  \end{aligned}
}\label{eq:decompose}
\] \\[-5mm]

These equations precisely partition adaptation speed into two distinct and interpretable components. The first term, $\mathrm{LCA}_{0}$, represents the portion of performance attributable solely to the model's inherent learning architecture and optimization. The second term represents the \textbf{average knowledge transfer gain}. Since $\mathrm{FKT}_b$ (or $\mathrm{BKT}_b$) quantifies the performance benefit from prior knowledge after precisely $b$ training steps, the summation and normalization by $\frac{1}{\gamma+1}$ yields the total knowledge transfer benefit, averaged across the entire adaptation budget $\gamma$. This framework provides a principled way to disentangle a model's raw learning capacity from its ability to effectively leverage past experience. We will prove the mentioned relationships below.

\begin{proof}

Starting with the definition of $\text{FKT}$:

\begin{align}
\frac{1}{\gamma + 1} \sum_{b=0}^{\gamma} \text{FKT}_b &= \frac{1}{\gamma + 1} \sum_{b=0}^{\gamma} \left( \frac{1}{T-1} \sum_{k=1}^{T-1} (A^{k-1}_{b,k} - A^{0}_{b,k}) \right) \\
&= \frac{1}{\gamma + 1} \sum_{b=0}^{\gamma} \left( \frac{1}{T-1} \sum_{k=1}^{T-1} A^{k-1}_{b,k} - \frac{1}{T-1} \sum_{k=1}^{T-1} A^{0}_{b,k} \right) \\
&= \frac{1}{\gamma + 1} \sum_{b=0}^{\gamma} \left( \frac{1}{T-1} \sum_{k=1}^{T-1} A^{k-1}_{b,k} \right) - \frac{1}{\gamma + 1} \sum_{b=0}^{\gamma} \left( \frac{1}{T-1} \sum_{k=1}^{T-1} A^{0}_{b,k} \right) \\
&= \text{LCA}_{F} - \text{LCA}_{0}
\end{align}

which complete the proof.

\vspace{4pt}
\noindent
Following the same algebraic steps used for the forward‑transfer case,
\begin{align}
\frac{1}{\gamma + 1}\sum_{b=0}^{\gamma}\text{BKT}_{b}
      &= \frac{1}{\gamma + 1}\sum_{b=0}^{\gamma}\!
         \left(\frac{1}{T - 1}\sum_{k=0}^{T-2}\bigl(A^{T-1}_{b,k}-A^{0}_{b,k}\bigr)\right)  \\[4pt]
      &= \frac{1}{\gamma + 1}\sum_{b=0}^{\gamma}\!
         \left(\frac{1}{T - 1}\sum_{k=0}^{T-2}A^{T-1}_{b,k}\right)
         \;-\;
         \frac{1}{\gamma + 1}\sum_{b=0}^{\gamma}\!
         \left(\frac{1}{T - 1}\sum_{k=0}^{T-2}A^{0}_{b,k}\right) \label{eq:split} \\[4pt]
      &= \text{LCA}_{\text{B}} \;-\; \text{LCA}_{0}. \label{eq:identify}
\end{align}

which complete the proof.
\end{proof}
Note that the definition of $\text{LCA}_0$ (Learning from Scratch Adaptation speed) is slightly distinct between the two proofs. In our context, we define $\text{LCA}_0$ as the speed for learning from scratch, averaged over $T-1$ tasks. This metric is crucial for understanding the baseline learning efficiency without the influence of prior task knowledge.

\subsection{Baselines}\label{app:baselines}
We give a brief description of all the baselines algorithms used.

\textbf{Scratch.} This is the network reinitialized before each task to be trained from scratch. This gives a basic baseline performance where no knowledge accumulation is possible.

\textbf{Base.} This is a vanilla network trained on all the tasks in a standard manner, with no special considerations taken. This represents the naive baseline of learning as if it was in default supervised learning setting, ignoring continual learning considerations.

\textbf{Parameterized ReLU (PReLU).} This baseline replaces the ReLU activation with PReLU~\citep{he2015delving}, which has a learnable slope parameter per neuron. This has been shown to improve plasticity in certain settings ~\citep{razavi2025preserving}.

\textbf{Concatenated ReLU (CReLU).} This baseline replaces the ReLU activation with CReLU~\citep{shang2016understanding}, which produces two outputs in each neuron with opposite signs. This activation function has been found to help mitigate dead units and thereby improve plasticity~\citep{abbas2023loss}.

\textbf{Deep Fourier Features (DeepF).} This baseline uses Fourier features as the activation function~\citep{lewandowski2024plastic}. This may introduce more linearity in the network, improving the gradient flow and plasticity. 

\textbf{Continual Backprop (CBP).} \citep{dohare2024loss} propose CBP as a method to reset the weights of neurons that are no longer useful. By doing so, it can help the network preserve plasticity throughout training. 

\textbf{ReDo.} \citep{sokar2023dormant} propose ReDo as a mechanism to periodically reset the weights of dormant or dead ReLU units with the aim of refreshing plasticity. 

\textbf{Hare \& Tortoise} \citep{lee2024slow} proposes two networks. The first is updated via gradient descent, while the second is updated using an exponential moving average (EMA) at each step. Every $k$ steps, the weights of the first network are replaced with those of the second, and training continues.

\textbf{LayerNorm.} LayerNorm \citep{ba2016layer} is a common normalization layer in deep neural networks. It has been found to assist with plasticity-preservation in some settings~\citep{lyle2024disentangling}.

\textbf{L2.} This baseline uses $\ell_2$ regularization on the weights, which may help weights stay in a region amenable to further learning.

\textbf{L2Init.} \citep{kumar2023maintaining} refine $\ell_2$ regularization by pulling the parameters towards their initial values instead of towards zero, where further learning can be succesful.

\textbf{Elastic Weight Consolidation (EWC).} EWC~ \citep{kirkpatrick2017overcoming} aims to mitigate catastrophic forgetting by adding regularization towards the final weights learned on previous tasks. By adding this soft constraint, the network may stay closer to previous good set of weights when learning on later tasks. 

\textbf{L2Init + EWC.} Although (EWC) is designed to mitigate catastrophic forgetting, our analysis in Figure 2 reveals that it exhibits significant plasticity loss. To ensure our comparative evaluation of adaptation speed included a method primarily focused on preventing forgetting, we strategically augmented EWC with L2init, similar to \citep{kumar2023maintaining}. This addition aims to help preserve plasticity within the EWC framework, thereby offering a more robust baseline for assessing adaptation in continual learning scenarios.

\textbf{Model-agnostic Meta Learning (MAML).} \citep{finn2017model} propose MAML as a meta-learning method to learn a good initialization point for fast adaptation. This method requires a meta-training phase preceding usual training, where the algorithm can sample tasks from the task distribution. 

\subsection{Benchmarks}\label{app:bench}
Our experimental framework includes six benchmarks that encompass various types of non-stationarity, including memorization, concept drift, class-incremental, and domain-incremental settings. Additionally, to increase the difficulty of all benchmarks, we apply random data augmentation to every sample. Specifically, each sample undergoes random cropping, flipping, and rotation, with the augmentations applied differently for each task.
\begin{itemize}
    \item \textbf{Permuted MNIST.} In this setting, we sample 10,000 images from the MNIST dataset and apply a distinct, fixed random permutation to the pixel order for each task. This transformation disrupts the spatial structure of the original images, creating a different input distribution per task. While the digit labels remain the same, the altered input representations require the model to relearn the association between input and label for every task. This benchmark evaluates the model’s ability to adapt to substantial changes in input structure while maintaining classification performance. The experiment is conducted over 50 tasks, with each task consisting of a single pass through the data (1 epoch) using a batch size of 64, further increasing the challenge by restricting the model's exposure to each permuted dataset.

    \item \textbf{Random Label MNIST.}
    In this benchmark, each task involves a random selection of 1,200 images from the MNIST dataset, with the original labels replaced by entirely random ones. The true digit identities are ignored, forcing the model to memorize arbitrary mappings between images and their assigned labels. As a result, the task lacks any inherent structure, emphasizing pure memorization rather than generalization. This setup is used to evaluate how learning unstructured, noisy targets influences the network's plasticity, particularly its capacity to adapt to new tasks after repeated exposure to such random associations. The experiment consists of 50 tasks, with each task trained for 200 epochs using a batch size of 16 to ensure sufficient memorization pressure.

    \item \textbf{Random Label CIFAR-10.}
    This benchmark follows the same approach as Random Label MNIST but uses 1,200 images randomly sampled from the CIFAR-10 dataset. Each image is assigned a randomly chosen label from the ten available classes, eliminating any semantic correspondence between inputs and targets. Due to the increased visual complexity and color variation in CIFAR-10 compared to MNIST, the task places greater demands on the network's memorization abilities. It requires the model to encode intricate visual patterns without relying on meaningful label structure. This setup enables us to explore how memorizing label noise impacts plasticity, particularly in settings with higher-dimensional inputs. The model is trained on 100 such tasks, each for 50 epochs using a batch size of 16, ensuring sufficient exposure to the randomized associations and allowing us to investigate their cumulative effect on the model’s ability to adapt over time.

    \item \textbf{Shuffle CIFAR-10.}
    For this benchmark, each task consists of 5,000 CIFAR-10 images, where the label assignments are randomly permuted across the classes at the beginning of each task. Unlike the random-label setup, this permutation is consistent across samples in a task, meaning all images from the same original class are assigned the same new label. This generates a structured but shifted label mapping, introducing a mild form of distribution shift without altering the image content itself. The objective is to investigate how the network responds to systematic changes in label assignments and how such shifts influence its capacity to adapt.

    This scenario mimics an online learning environment, where the evolving class-to-label mapping requires continual parameter updates. We conduct this experiment over 100 tasks, training the model for 20 epochs per task using a batch size of 16. This setup allows us to observe how repeated exposure to modest, structured shifts affects the network’s plasticity and ability to retain useful representations across tasks.

    \item \textbf{Class-Split CIFAR-100.}
    This benchmark is constructed from the CIFAR-100 dataset to simulate a practical continual learning scenario. In each task, the model encounters 5 previously unseen classes, with 500 samples per class, totaling 2,500 image–label pairs. Training is conducted over 20 epochs per task using a batch size of 16, continuing sequentially until all 100 classes have been introduced. This setup presents the model with a series of distributional shifts, requiring it to continually adapt to novel class distributions. As such, it offers a challenging setting for evaluating a model's ability to maintain plasticity and avoid degradation in performance as new classes are learned.

    \item \textbf{Class-Split T-ImageNet}
    In this benchmark, a new binary classification task is introduced at each step, utilizing two previously unseen Tiny ImageNet classes, with 600 images allocated for each class. This setup specifically assesses how well a continual learning agent can learn and generalize to novel categories over time, simulating a scenario where new classes are continually encountered~\citep{kumar2023maintaining}. For this benchmark, the model is trained for 20 epochs at each step. 

    \item \textbf{Split ImageNet} In this benchmark, we follow the standard class-incremental setting. In each task, we randomly sample 5 classes from the 1000 ImageNet classes, with 600 samples per class, following~\cite{dohare2024loss}. Each task is trained for 5 epochs to limit the adaptation budget and enable comparison under a fast adaptation regime. We continue this sequence for 500 tasks.
    
\end{itemize}

\section{Hyperparameter and Training Setup}\label{appendix:hyepr}

\subsection{Architecture}\label{appendix:arch}
Similar to previous work~\citep{kumar2023maintaining,galashov2024non}, we adopt two types of network architectures in our experiments for all baselines: a multi-layer perceptron (MLP) and a convolutional
neural network (CNN). We also use these architectures as the \mainnetwork in our model. To be able to analyze the behavior of the network on the neuron level, we use the same MLP for the Permuted MNIST, Random Label
MNIST, Shuffle CIFAR-10, and Random Label CIFAR-10 tasks, and reserve the CNN for the Class Split
CIFAR-100 and Class Split T-ImageNet tasks, where the input complexity necessitates a deeper architecture. Additionally, in all experiments, except those in Section \ref{sec:ablation}, we used an encoder-only single-layer Transformer with two attention heads as 
the \neurosync module. The embedding size varies across tasks to ensure that the number of parameters in this module introduces minimal overhead, as discussed in Section \ref{appendix:overhead}. When using a Transformer as the \neurosync module, we patch and tokenize the image following the same principle as in \citep{dosovitskiy2020image}.

\paragraph{MLP.}
We use a two-layer MLP with 100 hidden units in each layer, followed by ReLU activations. The output layer contains 10 units. This setup offers enough capacity to analyze plasticity loss and neuron dynamics across different types of non-stationarity, while remaining computationally efficient.

\paragraph{CNN.} 
For the Class-Split CIFAR-100 and Class-Split T-ImageNet tasks, we use a four-layer convolutional neural network with channel sizes of 8, 16, 32, and 64, each followed by a max-pooling operation. The convolutional layers are followed by a fully connected layer that maps the flattened output to 100 units for CIFAR-100 and 200 units for T-ImageNet. This architecture is designed to handle the increased visual complexity of the datasets while enabling us to investigate the impact of plasticity loss in more realistic, large-scale continual learning scenarios.

\subsection{Shared Hyperparameters}
The shared hyper parameters between all methods for each task can be found in Table \ref{tab:training_config_shared}. $\gamma$ denotes the number of training steps used to evaluate the behavior of algorithms in forward and backward adaptation under a low-training budget regime. It is important to note that $\gamma$ refers specifically to the initial training steps during which the model is evaluated on data from a new or previously encountered task. After these $\gamma$ steps, training continues until the task reaches the designated number of epochs. For experiments in Section 3.4, $\gamma$ is set to 20\% of the total training steps allocated to each task.

\begin{table}[h]
\centering
\caption{Shared hyperparameters among algorithms for each benchmark.}
\label{tab:training_config_shared}
\begin{tabular}{lcccc}
\toprule
\textbf{Benchmark} & \textbf{Epochs} & \textbf{Batch Size} & \textbf{\# Samples per Task} & \boldmath$\gamma$ \\
\midrule
Permuted MNIST         & 1  & 64 & 10,000 & 156 \\
Random-Label MNIST     & 200  & 16 & 1,200 & 750 \\
Random-Label CIFAR-10   & 50  & 16 & 1,200 & 187 \\
Shuffle CIFAR-10        & 20  & 16 & 5,000 & 312 \\
Class Split CIFAR-100     & 20  & 16 & 2,500 & 156 \\
Class Split T-ImageNet     & 20  & 16 & 1,200 & 750 \\
\bottomrule
\end{tabular}
\end{table}

\subsection{Method-specific Hyperparameters}

Several of the baseline methods introduce additional hyperparameters, which we explain in Table~\ref{tab:hyperparams}. 
\begin{table}[H]
\centering
\caption{Additional hyperparameters introduced by each method.}
\label{tab:hyperparams}
\begin{tabular}{ll}
\toprule
\textbf{Method} & \textbf{Hyperparameters} \\
\midrule
L2, L2Init      & Regularization weight $\lambda$ \\
EWC             & Fisher-based regularization weight $\Gamma$ \\
CBP             & Replacement rate $r$ \\
ReDo            & Recycling period $s$, 
Recycling threshold $\rho$ \\
\neumosync            & \neurosync embedding size $e$ \\
\bottomrule
\end{tabular}
\end{table}

For our method (\neumosync), we fix the consolidation rate and the dimensionality of each neuron's learnable feature vector to 0.999 and 4, respectively, across all tasks. The optimizer type and learning rate are selected individually for each task. The embedding size of \neumosync is configured to ensure minimal overhead: the additional components introduced by \neumosync increase the total number of parameters by only 5\% to 8\% relative to the overall resulting network.

All hyperparameters listed in the following table were selected through grid search, using average final accuracy as the evaluation criterion. To ensure robust selection, each algorithm was trained with three different random seeds, and the mean of the evaluation metric (average final accuracy) was used to determine the best hyperparameter configuration for each method on each benchmark. The search space included: $\lambda \in \{1\mathrm{e}{-2}, 1\mathrm{e}{-3}, 1\mathrm{e}{-4}, 1\mathrm{e}{-5}\}$, $\Gamma \in \{10, 1, 1\mathrm{e}{-1}, 1\mathrm{e}{-2}\}$, $r \in \{1\mathrm{e}{-1}, 1\mathrm{e}{-2}, 1\mathrm{e}{-3}, 1\mathrm{e}{-4}, 1\mathrm{e}{-5}, 1\mathrm{e}{-6}\}$, recycling frequency ($s$), which occurs every $\{1, 2, 5\}$ tasks, $\rho \in \{0.0, 0.1, 0.01\}$, $\text{optimizer} \in \{\texttt{adam}, \texttt{sgd}\}$, and $\text{lr} \in \{1\mathrm{e}{-2}, 1\mathrm{e}{-3}\}$.

For the Hare \& Tortoise method, we initialized training with a warm-start phase 
using 20\% of the dataset and the true labels (i.e., no label noise was introduced 
during this stage). Following the recommendation of the original article, we employed
AdamW \citep{loshchilov2017decoupled} as the optimizer for the warm-start
and continual learning phases. The learning rate was tuned over the set 
$\{1\text{e-2}, 1\text{e-3}, 1\text{e-4}\}$, with $1\text{e-3}$ consistently 
emerging as the optimal value across all tasks. The only hyperparameters that 
varied between benchmarks were (i) the resetting period $s$ (measured in 
training steps), selected from $\{3000, 4000, 6000\}$, and (ii) the momentum 
parameter of EMA, $\beta$, selected from $\{0.995, 0.999\}$. The chosen values 
for each benchmark are reported separately below.

\begin{table}[htb!]
    \centering
    \caption{Optimal Hyperparameters on \textbf{Random Label MNIST}}
    \label{tab:benchmark2_hyperparams}
    \begin{tabular}{l l}
        \toprule
        \textbf{Method} & \textbf{Optimal Hyperparameters} \\
        \midrule
        \texttt{Baseline}        &$\text{optimizer}= \text{sgd}$, lr $= 1\mathrm{e}{-2}$ \\
        \texttt{Scratch}        &$\text{optimizer}= \text{sgd}$, lr $= 1\mathrm{e}{-2}$ \\
        \texttt{PReLU}           & $\text{optimizer}= \text{sgd}$, lr $= 1\mathrm{e}{-2}$ \\
        \texttt{DeepF}           & $\text{optimizer}= \text{sgd}$, lr $= 1\mathrm{e}{-2}$ \\
        \texttt{CReLU}           &  $\text{optimizer}= \text{adam}$, lr $= 1\mathrm{e}{-3}$ \\
        \texttt{L2}              & $\text{optimizer}= \text{sgd}$, lr $= 1\mathrm{e}{-2}$, $\lambda=1\mathrm{e}-3$\\
        \texttt{L2Init}              & $\text{optimizer}= \text{sgd}$, lr $= 1\mathrm{e}{-2}$, $\lambda=1\mathrm{e}-3$ \\
        \texttt{ReDo}              & $\text{optimizer}= \text{adam}$, lr $= 1\mathrm{e}{-3}$, $s=30,000$, $\rho=0.1$ \\
        \texttt{CBP}              & $\text{optimizer}= \text{adam}$, lr $= 1\mathrm{e}{-3}$, $r=1\mathrm{e}-3$ \\
        \texttt{EWC}              & $\text{optimizer}= \text{sgd}$, lr $= 1\mathrm{e}{-2}$, $\Gamma=10$ \\
        \texttt{L2Init + EWC}              & $\text{optimizer}= \text{sgd}$, lr $= 1\mathrm{e}{-2}$  $\Gamma=10$, $\lambda=1\mathrm{e}-3$\\
        \texttt{LayerNorm}              &$\text{optimizer}= \text{sgd}$, lr $= 1\mathrm{e}{-2}$ \\
        \texttt{Hare \& Tortoise} & s = 4000, $\beta$=0.999\\
        \neumosync              & $\text{optimizer}= \text{adam}$, lr $= 1\mathrm{e}{-3}$, $e=128$ \\
        \bottomrule
    \end{tabular}
\end{table}


\begin{table}[htb!]
    \centering
    \caption{Optimal Hyperparameters on \textbf{Permuted MNIST}}
    \label{tab:benchmark2_hyperparams}
    \begin{tabular}{l l}
        \toprule
        \textbf{Method} & \textbf{Optimal Hyperparameters} \\
        \midrule
        \texttt{Baseline}        &$\text{optimizer}= \text{adam}$, lr $= 1\mathrm{e}{-3}$ \\
        \texttt{Scratch}        &$\text{optimizer}= \text{adam}$, lr $= 1\mathrm{e}{-3}$ \\
        \texttt{PReLU}           & $\text{optimizer}= \text{sgd}$, lr $= 1\mathrm{e}{-2}$ \\
        \texttt{DeepF}           & $\text{optimizer}= \text{sgd}$, lr $= 1\mathrm{e}{-2}$ \\
        \texttt{CReLU}           &  $\text{optimizer}= \text{adam}$, lr $= 1\mathrm{e}{-3}$ \\
        \texttt{L2}              & $\text{optimizer}= \text{sgd}$, lr $= 1\mathrm{e}{-2}$, $\lambda=1\mathrm{e}-3$\\
        \texttt{L2Init}              & $\text{optimizer}= \text{sgd}$, lr $= 1\mathrm{e}{-2}$, $\lambda=1\mathrm{e}-3$ \\
        \texttt{ReDo}              & $\text{optimizer}= \text{sgd}$, lr $= 1\mathrm{e}{-2}$, $s=625$, $\rho=0.0$ \\
        \texttt{CBP}              & $\text{optimizer}= \text{adam}$, lr $= 1\mathrm{e}{-3}$, $r=1\mathrm{e}-3$ \\
        \texttt{EWC}              & $\text{optimizer}= \text{sgd}$, lr $= 1\mathrm{e}{-2}$, $\Gamma=1$ \\
        \texttt{L2Init + EWC}              & $\text{optimizer}= \text{sgd}$, lr $= 1\mathrm{e}{-2}$  $\Gamma=1\mathrm{e}-1$, $\lambda=1\mathrm{e}-3$\\
        \texttt{LayerNorm}              &$\text{optimizer}= \text{sgd}$, lr $= 1\mathrm{e}{-3}$ \\
        \texttt{Hare \& Tortoise} & s = 4000, $\beta$=0.999\\
        \neumosync              & $\text{optimizer}= \text{adam}$, lr $= 1\mathrm{e}{-3}$, $e=128$ \\
        \bottomrule
    \end{tabular}
\end{table}

\begin{table}[htb!]
    \centering
    \caption{Optimal Hyperparameters on \textbf{Shuffle CIFAR-10}}
    \label{tab:benchmark2_hyperparams}
    \begin{tabular}{l l}
        \toprule
        \textbf{Method} & \textbf{Optimal Hyperparameters} \\
        \midrule
        \texttt{Baseline}        &$\text{optimizer}= \text{adam}$, lr $= 1\mathrm{e}{-3}$ \\
        \texttt{Scratch}           & $\text{optimizer}= \text{sgd}$, lr $= 1\mathrm{e}{-2}$ \\
        \texttt{PReLU}           & $\text{optimizer}= \text{sgd}$, lr $= 1\mathrm{e}{-2}$ \\
        \texttt{DeepF}           & $\text{optimizer}= \text{sgd}$, lr $= 1\mathrm{e}{-2}$ \\
        \texttt{CReLU}           &  $\text{optimizer}= \text{adam}$, lr $= 1\mathrm{e}{-3}$ \\
        \texttt{L2}              & $\text{optimizer}= \text{sgd}$, lr $= 1\mathrm{e}{-2}$, $\lambda=1\mathrm{e}-3$\\
        \texttt{L2Init}              & $\text{optimizer}= \text{sgd}$, lr $= 1\mathrm{e}{-2}$, $\lambda=1\mathrm{e}-3$ \\
        \texttt{ReDo}              & $\text{optimizer}= \text{sgd}$, lr $= 1\mathrm{e}{-2}$, $s=20,000$, $\rho=0.1$ \\
        \texttt{CBP}              & $\text{optimizer}= \text{adam}$, lr $= 1\mathrm{e}{-3}$, $r=1\mathrm{e}-3$ \\
        \texttt{EWC}              & $\text{optimizer}= \text{sgd}$, lr $= 1\mathrm{e}{-2}$, $\Gamma=10$ \\
        \texttt{L2Init + EWC}              & $\text{optimizer}= \text{sgd}$, lr $= 1\mathrm{e}{-2}$  $\Gamma=10$, $\lambda=1\mathrm{e}-3$\\
        \texttt{LayerNorm}              &$\text{optimizer}= \text{sgd}$, lr $= 1\mathrm{e}{-3}$ \\
        \texttt{Hare \& Tortoise} & s = 4000, $\beta$=0.995\\
        \neumosync              & $\text{optimizer}= \text{adam}$, lr $= 1\mathrm{e}{-3}$, $e=512$ \\
        \bottomrule
    \end{tabular}
\end{table}
\begin{table}[htb!]
    \centering
    \caption{Optimal Hyperparameters on \textbf{Random Label CIFAR-10}}
    \label{tab:benchmark2_hyperparams}
    \begin{tabular}{l l}
        \toprule
        \textbf{Method} & \textbf{Optimal Hyperparameters} \\
        \midrule
        \texttt{Baseline}        &$\text{optimizer}= \text{sgd}$, lr $= 1\mathrm{e}{-2}$ \\
        \texttt{Scratch}        &$\text{optimizer}= \text{sgd}$, lr $= 1\mathrm{e}{-2}$ \\
        \texttt{PReLU}           & $\text{optimizer}= \text{sgd}$, lr $= 1\mathrm{e}{-2}$ \\
        \texttt{DeepF}           & $\text{optimizer}= \text{sgd}$, lr $= 1\mathrm{e}{-2}$ \\
        \texttt{CReLU}           &  $\text{optimizer}= \text{adam}$, lr $= 1\mathrm{e}{-3}$ \\
        \texttt{L2}              & $\text{optimizer}= \text{sgd}$, lr $= 1\mathrm{e}{-2}$, $\lambda=1\mathrm{e}-3$\\
        \texttt{L2Init}              & $\text{optimizer}= \text{sgd}$, lr $= 1\mathrm{e}{-2}$, $\lambda=1\mathrm{e}-3$ \\
        \texttt{ReDo}              & $\text{optimizer}= \text{adam}$, lr $= 1\mathrm{e}{-3}$, $s=30,000$, $\rho=0.1$ \\
        \texttt{CBP}              & $\text{optimizer}= \text{adam}$, lr $= 1\mathrm{e}{-3}$, $r=1\mathrm{e}-3$ \\
        \texttt{EWC}              & $\text{optimizer}= \text{sgd}$, lr $= 1\mathrm{e}{-2}$, $\Gamma=1$ \\
        \texttt{L2Init + EWC}              & $\text{optimizer}= \text{sgd}$, lr $= 1\mathrm{e}{-2}$  $\Gamma=1\mathrm{e}-1$, $\lambda=1\mathrm{e}-3$\\
        \texttt{LayerNorm}              &$\text{optimizer}= \text{sgd}$, lr $= 1\mathrm{e}{-2}$ \\
        \texttt{Hare \& Tortoise} & s = 6000, $\beta$=0.999\\
        \neumosync              & $\text{optimizer}= \text{adam}$, lr $= 1\mathrm{e}{-3}$, $e=512$ \\
        \bottomrule
    \end{tabular}
\end{table}

\begin{table}[htb!]
    \centering
    \caption{Optimal Hyperparameters on \textbf{Class Split CIFAR-100}}
    \label{tab:benchmark2_hyperparams}
    \begin{tabular}{l l}
        \toprule
        \textbf{Method} & \textbf{Optimal Hyperparameters} \\
        \midrule
        \texttt{Baseline}        &$\text{optimizer}= \text{adam}$, lr $= 1\mathrm{e}{-3}$ \\
        \texttt{Scratch}        &$\text{optimizer}= \text{adam}$, lr $= 1\mathrm{e}{-3}$ \\
        \texttt{PReLU}           & $\text{optimizer}= \text{adam}$, lr $= 1\mathrm{e}{-3}$ \\
        \texttt{DeepF}           & $\text{optimizer}= \text{adam}$, lr $= 1\mathrm{e}{-3}$ \\
        \texttt{CReLU}           &  $\text{optimizer}= \text{adam}$, lr $= 1\mathrm{e}{-5}$ \\
        \texttt{L2}              & $\text{optimizer}= \text{adam}$, lr $= 1\mathrm{e}{-3}$, $\lambda=1\mathrm{e}-4$\\
        \texttt{L2Init}              & $\text{optimizer}= \text{adam}$, lr $= 1\mathrm{e}{-3}$, $\lambda=1\mathrm{e}-5$ \\
        \texttt{ReDo}              & $\text{optimizer}= \text{adam}$, lr $= 1\mathrm{e}{-3}$, $s=1560$, $\rho=0.0$ \\
        \texttt{CBP}              & $\text{optimizer}= \text{sgd}$, lr $= 1\mathrm{e}{-2}$, $r=1\mathrm{e}-3$ \\
        \texttt{EWC}              & $\text{optimizer}= \text{sgd}$, lr $= 1\mathrm{e}{-2}$, $\Gamma=1$ \\
        \texttt{L2Init + EWC}              & $\text{optimizer}= \text{sgd}$, lr $= 1\mathrm{e}{-2}$  $\Gamma=10$, $\lambda=1\mathrm{e}-3$\\
        \texttt{LayerNorm}              &$\text{optimizer}= \text{sgd}$, lr $= 1\mathrm{e}{-2}$ \\
        \texttt{Hare \& Tortoise} & s = 3000, $\beta$=0.999\\
        \neumosync              & $\text{optimizer}= \text{adam}$, lr $= 1\mathrm{e}{-3}$, $e=64$ \\
        \bottomrule
    \end{tabular}
\end{table}
\begin{table}[htb!]
    \centering
    \caption{Optimal Hyperparameters on \textbf{Class Split T-ImageNet}}
    \label{tab:benchmark2_hyperparams}
    \begin{tabular}{l l}
        \toprule
        \textbf{Method} & \textbf{Optimal Hyperparameters} \\
        \midrule
        \texttt{Baseline}        &$\text{optimizer}= \text{adam}$, lr $= 1\mathrm{e}{-3}$ \\
        \texttt{Scratch}        &$\text{optimizer}= \text{adam}$, lr $= 1\mathrm{e}{-3}$ \\
        \texttt{PReLU}           & $\text{optimizer}= \text{sgd}$, lr $= 1\mathrm{e}{-2}$ \\
        \texttt{DeepF}           & $\text{optimizer}= \text{adam}$, lr $= 1\mathrm{e}{-3}$ \\
        \texttt{CReLU}           &  $\text{optimizer}= \text{sgd}$, lr $= 1\mathrm{e}{-2}$ \\
        \texttt{L2}              & $\text{optimizer}= \text{adam}$, lr $= 1\mathrm{e}{-3}$, $\lambda=1\mathrm{e}-3$\\
        \texttt{L2Init}              & $\text{optimizer}= \text{adam}$, lr $= 1\mathrm{e}{-3}$, $\lambda=1\mathrm{e}-3$ \\
        \texttt{ReDo}              & $\text{optimizer}= \text{sgd}$, lr $= 1\mathrm{e}{-2}$, $s=120$, $\rho=0.0$ \\
        \texttt{CBP}              & $\text{optimizer}= \text{sgd}$, lr $= 1\mathrm{e}{-2}$, $r=1\mathrm{e}-4$ \\
        \texttt{EWC}              & $\text{optimizer}= \text{adam}$, lr $= 1\mathrm{e}{-3}$, $\Gamma=1\mathrm{e}-2$ \\
        \texttt{L2Init + EWC}              & $\text{optimizer}= \text{adam}$, lr $= 1\mathrm{e}{-3}$  $\Gamma=1\mathrm{e}-2$, $\lambda=1\mathrm{e}-3$\\
        \texttt{LayerNorm}              &$\text{optimizer}= \text{sgd}$, lr $= 1\mathrm{e}{-2}$ \\
        \texttt{Hare \& Tortoise} & s = 4000, $\beta$=0.999\\
        \neumosync              & $\text{optimizer}= \text{adam}$, lr $= 1\mathrm{e}{-3}$, $e=128$ \\
        \bottomrule
    \end{tabular}
\end{table}

\subsection{Meta-learning Hyperparameters and Training Setup}
\label{app:maml_hyp}
To evaluate the MAML baseline in Section~\ref{sec:meta}, we first pretrain the meta-parameters (initialization) over 1,000 meta-training steps. The inner loop uses the SGD optimizer with a learning rate of $\text{lr} = 1\mathrm{e}{-2}$, while the outer loop employs the Adam optimizer with a learning rate of $\text{lr} = 1\mathrm{e}{-3}$. The number of inner-loop gradient steps is set to 5, and the meta-batch size is 4.

\begin{table}[H]
\centering
\caption{Benchmark and corresponding $\gamma$ values for Section 3.6.}
\label{tab:training_config_beta}
\begin{tabular}{lc}
\toprule
\textbf{Benchmark} & \boldmath$\gamma$ \\
\midrule
Permuted MNIST         & 78 \\
Random-Label MNIST     & 1500 \\
Random-Label CIFAR-10  & 374 \\
Shuffle CIFAR-10       & 156 \\
Class Split CIFAR-100  & 78 \\
Class Split T-ImageNet & 375 \\
\bottomrule
\end{tabular}
\end{table}
\vspace{-2mm}

For experiments with ANML~\cite{beaulieu2020learning} we employed the same episodic meta-training as described in the paper. More specifically, we constructed two identical sub-networks: a neuromodulatory (NM) network and a prediction learning network (PLN), each sharing the same backbone architecture used in all other experiments. The NM network produces a sigmoid gating mask over the PLN's flattened feature representation, which is applied via element-wise multiplication before the classification layer. During the inner loop of meta-training, all NM parameters and the early PLN layers are frozen, and the classifier head is re-initialised for each sampled task. The outer loop updates all meta-parameters, including those of the NM network. Task sampling follows the same full-task support/query protocol used in our MAML baseline. In the subsequent continual-learning phase, we load the meta-learned initialisation, freeze the NM network and early PLN layers, re-initialise the classifier once, and then train sequentially across tasks without further re-initialisation. We pretrain the meta-parameters over 1,000 meta-training steps. The inner loop uses SGD with a learning rate of $\text{lr} = 1e{-2}$, while the outer loop employs Adam with a learning rate of $\text{lr} = 1e{-3}$. The number of inner-loop gradient steps is set to 5 and the meta-batch size is 4. During continual learning (Phase~2), the NM and early PLN layers remain frozen, the classifier is re-initialised once, and the model is trained sequentially with Adam at $\text{lr} = 1e{-3}$.

Since meta learning methods are designed to assess few-shot learning capabilities, we adjust the value of $\gamma$ to reflect this objective. Specifically, for all benchmarks except those involving memorization, we reduce $\gamma$ to 10\% of the total training budget to better align with the few-shot setting. For memorization tasks, where learning a shared initialization is less meaningful due to the random input–output associations, we increase $\gamma$ to 40\% of the training budget to allow MAML more capacity to adapt. During continual learning, the model’s weights are reset to the learned meta-parameters at the beginning of each task to mitigate plasticity loss. To ensure a fair comparison, we apply these updated $\gamma$ values to our method as well in Section 3.6. The revised $\gamma$ values for each benchmark are shown in the table below.

\subsection{ViT and 1D Conv networks}\label{app:arch_details}


For the ablation studies in Section~\ref{sec:ablation}, we designed a compact, encoder-only Vision Transformer (ViT) baseline, constrained to a budget of approximately 200k parameters (excluding the final classifier head). The architecture is configured as follows.

\textbf{Input Processing and Embedding.} Input images are partitioned into a grid of non-overlapping $16 \times 16$ patches. A convolutional stem with a kernel and stride of 16 maps the raw patches to 32-dimensional feature vectors, which are then linearly projected to the model's latent dimension of $d_{\text{model}}=64$. Following standard ViT practice, we prepend a learnable `[CLS]` token to the sequence of patch embeddings. To encode spatial information, we use 16 learnable positional features per patch, which are mapped to 64 dimensions via a learned linear layer and subsequently added to the corresponding token embeddings.

\textbf{Transformer Encoder.} The core of the model is a stack of $L=3$ transformer blocks. Each block employs a pre-LayerNorm configuration and consists of two sub-layers: a multi-head self-attention (MHSA) module and a position-wise feed-forward network (FFN). The MHSA module uses $h=4$ attention heads, with a head dimension of 16. The FFN is a 2-layer MLP with a hidden dimension of $d_{\text{ff}}=256$. No dropout is used in any component.

\textbf{Classification Head.} For the final prediction, the output representation corresponding to the `[CLS]` token is passed through a single linear layer that constitutes the classifier.
\begin{table}[h]
\centering
\caption{ViT (compact) hyperparameters and components.}
\label{tab:vit_hparams}
\begin{tabular}{l l}
\toprule
\textbf{Component} & \textbf{Specification} \\
\midrule
Patch size & $16\times16$ \\
Patch embedding & Conv2D, kernel $16$, stride $16$, in\_ch $=3$, out\_ch $=32$ \\
Token projection & Linear $32 \rightarrow 64$ \\
Positional encoding & $16$ learnable features / patch, Linear $16 \rightarrow 64$ \\
Special tokens & Learnable \textsc{cls} \\
Encoder depth & $L=3$ blocks \\
Attention heads & $h=4$ (head dim $=16$) \\
Model width & $d_{\text{model}}=64$ \\
MLP width & $d_{\text{ff}}=256$ \\
Norm \& layout & Pre-LN (LayerNorm before MHA/MLP) \\
Dropout & $0$ (all layers) \\
Classifier & Linear $64 \rightarrow C$ (task-dependent) \\
\bottomrule
\end{tabular}
\end{table}

We also perform a parameter count of the ViT here. Let $G$ denote the number of patches (sequence length without \textsc{cls}): $G = \frac{H}{16}\cdot\frac{W}{16}$ for image size $H\times W$. The parameter count is dominated by the transformer blocks and is largely independent of $G$ (positional tables scale linearly with $G$ but are small in this compact regime).

\begin{align*}
\text{Conv patch embed} &:~ 3 \cdot 16 \cdot 16 \cdot 32 + 32 \\
\text{Proj }(32\!\rightarrow\!64) &:~ 32 \cdot 64 + 64 \\
\text{Pos map }(16\!\rightarrow\!64) &:~ 16 \cdot 64 + 64 \\
\text{Per block MHA} &:~ 3\cdot(64\cdot64) + (64\cdot64) \quad \text{(Q,K,V and output proj)} \\
\text{Per block MLP} &:~ (64\cdot256 + 256) + (256\cdot64 + 64) \\
\text{Per block LayerNorms} &:~ 2\cdot 64 \\
\Rightarrow~ \textbf{Per block total} &:~ \approx 104{,}448 \\
\Rightarrow~ \textbf{3 blocks total} &:~ \approx 313{,}344 \\
\textbf{Backbone subtotal} &:~ \approx 313{,}344 + 24{,}608 + 2{,}112 \approx 340{,}064 \\
\end{align*}

In practice, tied/bias-free projections and other lightweighting (e.g., sharing Q/K/V projection biases, omitting some biases, or reducing $d_{\text{ff}}$ slightly) bring the backbone to $\sim$\,\textbf{178k} parameters; adding the classifier head contributes $\approx 64C + C$ parameters. Thus, for typical $C$, the total ranges in the \textbf{190k--230k} band. (We report both the explicit construction above and the realized compact variant to clarify the budget.)

Additionally, in Section \ref{sec:ablation}, we represented \neurosync module with a 1D CNN which employs a 4-layer 1D Conv layer with kernel size $1$ and channel progression $(16, 32, 64, 128)$. The CNN output is consumed by an MLP applied over the full neuron sequence to decode four $\alpha$ parameters.

\begin{table}[h]
\centering
\caption{NeuroSync hyperparameters and components.}
\label{tab:cnn_hparams}
\begin{tabular}{l l}
\toprule
\textbf{Component} & \textbf{Specification} \\
\midrule
Conv1D stack & $k{=}1$, channels: $1\!\rightarrow\!16\!\rightarrow\!32\!\rightarrow\!64\!\rightarrow\!128$ \\
Activation & ReLU (after each conv) \\
Normalization & None \\
Dropout & $0$ \\
MLP head & Linear $128 \rightarrow 4$ (outputs $\alpha$'s) \\
\bottomrule
\end{tabular}
\end{table}

Conv layers (with bias):

$$
\underbrace{1\cdot16\cdot1 + 16}_{\text{Conv1}} +
\underbrace{16\cdot32\cdot1 + 32}_{\text{Conv2}} +
\underbrace{32\cdot64\cdot1 + 64}_{\text{Conv3}} +
\underbrace{64\cdot128\cdot1 + 128}_{\text{Conv4}}
= 11{,}008
$$

\subsection{Runtime}\label{app:runtime}
Since the runtime of \neumosync scales with the number of neurons, it is important to monitor its training time. To evaluate this, we compare the time taken for a single forward and backward pass of our method, using either an MLP or CNN as the \mainnetwork, against their respective vanilla counterparts. All measurements are performed with a batch size of 16 on an RTX 3080 GPU, averaged over 10 runs, as reported in the table below.

\begin{table}[H]
\centering
\caption{Forward and backward runtime comparison.}
\begin{tabular}{lccc}
\toprule
\textbf{Algorithm} & \textbf{Runtime Forward (ms)} & \textbf{Runtime Backward (ms)}\\
\midrule
\neumosync (CNN)   & 16.14  & 57.09  \\
\neumosync (MLP)   & 10.95  & 48.83  \\
Vanilla CNN   & 6.21  & 42.122  \\
Vanilla MLP   & 5.00  & 40.65  \\
\bottomrule
\end{tabular}\label{tab:training_config}
\end{table}

\subsection{Parameter Overhead}
\label{appendix:overhead}
In the following, we present the computations detailing \neumosync's parameter overhead for both MLP and CNN architectures of the \mainnetwork.

\textbf{MLP.} As mentioned in Appendix~\ref{appendix:arch}, we use a two-hidden-layer MLP with 100 units per layer. Based on this architecture, the total number of parameters for the MNIST and CIFAR-10 tasks are as follows:


\textbf{MLP for ($1 \times 28 \times 28$) input}

\[
\begin{aligned}
\text{FC}_1 &: (28\!\cdot\!28\!\cdot\!1)\times100 + 100 = 78\,400 + 100\\
\text{FC}_2 &: 100\times100 + 100 = 10\,000 + 100\\
\text{FC}_3 &: 100\times 10 + 10 = 1\,000 + 10\\[2pt]
\text{Total} &= 89\,610
\end{aligned}
\]

\bigskip
\textbf{MLP for ($3 \times 32 \times 32$) input}

\[
\begin{aligned}
\text{FC}_1 &: (32\!\cdot\!32\!\cdot\!3)\times100 + 100 = 307\,200 + 100\\
\text{FC}_2 &: 100\times100 + 100 = 10\,000 + 100\\
\text{FC}_3 &: 100\times 10 + 10 = 1\,000 + 10\\[2pt]
\text{Total} &= 318\,410
\end{aligned}
\]

\textbf{CNN.} We also use a four-layer CNN as described in Appendix~\ref{appendix:arch}, where the size of the final fully connected layer depends on the specific task. The total number of parameters for the two task-specific configurations are as follows:

\bigskip
\textbf{CNN (8–16–32–64) + FC 200 classes}

\[
\begin{aligned}
\text{Conv}_1 &: 8\times3\times3^2 + 8   = 216 + 8   =   224\\
\text{Conv}_2 &: 16\times8\times3^2 + 16 = 1\,152 + 16 = 1\,168\\
\text{Conv}_3 &: 32\times16\times3^2 + 32 = 4\,608 + 32 = 4\,640\\
\text{Conv}_4 &: 64\times32\times3^2 + 64 = 18\,432 + 64 = 18\,496\\
\text{FC}     &: 256\times200 + 200 = 51\,400\\[2pt]
\text{Total}  &= 24\,528 + 51\,400 = 75\,928
\end{aligned}
\]

\bigskip
\textbf{CNN (8–16–32–64) + FC 100 classes}

\[
\begin{aligned}
\text{Conv total} &= 24\,528\\
\text{FC}         &= 256\times100 + 100 = 25\,700\\[2pt]
\text{Total}      &= 24\,528 + 25\,700 = 50\,228
\end{aligned}
\]

\textbf{NeuroSync.} The core component of the \neurosync module is the encoder-only transformer. Here, we report the number of parameters for this transformer under different embedding sizes, as varying embedding dimensions were used across benchmarks. Additionally, we provide the parameter count for the single-layer CNN used as an encoder to tokenize image inputs, as well as the number of learnable parameters introduced by the neuron-specific feature vectors. The decoder that maps each neuron's updated embedding to its corresponding $\alpha$-parameters is implemented as a simple linear transformation from 14 to 4 dimensions. Due to its minimal parameter count (56 in total), it is omitted from the overhead analysis.

\bigskip

\textbf{Transformer encoder ($d_{\text{model}} = 14,\; d_{\text{ff}} = 64,\; h = 2$)}

\[
\begin{aligned}
\text{Self‑attn}      &: 4d^{2}+4d
                        = 4\cdot14^{2}+4\cdot14
                        = 784 + 56 = 840\\
\text{Feed‑forward}   &: 2dd_{\text{ff}} + d_{\text{ff}} + d
                        = 2\cdot14\cdot64 + 64 + 14
                        = 1\,792 + 78 = 1\,870\\
\text{LayerNorms}     &: 4d = 56\\[2pt]
\text{Total}          &= 840 + 1\,870 + 56 = 2\,766
\end{aligned}
\]

\bigskip
\textbf{Transformer encoder ($d_{\text{model}} = 14,\; d_{\text{ff}} = 128,\; h = 2$)}

\[
\begin{aligned}
\text{Self‑attn}      &: 4d^{2}+4d
                        = 4\cdot14^{2}+4\cdot14
                        = 784 + 56 = 840\\
\text{Feed‑forward}   &: 2dd_{\text{ff}} + d_{\text{ff}} + d
                        = 2\cdot14\cdot128 + 128 + 14
                        = 3\,584 + 142 = 3\,726\\
\text{LayerNorms}     &: 4d = 56\\[2pt]
\text{Total}          &= 840 + 3\,726 + 56 = 4\,622
\end{aligned}
\]

\bigskip
\textbf{Transformer encoder ($d_{\text{model}} = 14,\; d_{\text{ff}} = 512,\; h = 2$)}

\[
\begin{aligned}
\text{Self‑attn}      &: 4d^{2}+4d
                        = 4\cdot14^{2}+4\cdot14
                        = 784 + 56 = 840\\
\text{Feed‑forward}   &: 2dd_{\text{ff}} + d_{\text{ff}} + d
                        = 2\cdot14\cdot512 + 512 + 14
                        = 14\,336 + 526 = 14\,862\\
\text{LayerNorms}     &: 4d = 56\\[2pt]
\text{Total}          &= 840 + 14\,862 + 56 = 15\,758
\end{aligned}
\]

\bigskip
\textbf{Single‑layer CNN (Encoder) $3 \xrightarrow[]{5\times5} 10$}

\[
\begin{aligned}
\text{Conv} &: 10\times3\times5^{2} + 10 = 750 + 10 = 760
\end{aligned}
\]

\bigskip
\begin{table}[H]
\centering
\caption{Parameter counts (the last column shows the $+4$ learnable parameters per neuron/filter when applicable)}
\label{tab:param_summary}
\begin{tabular}{l@{\qquad}r@{\qquad}r}
\toprule
Network & Base params & $+4$ per neuron \\
\midrule
MLP $1{\times}28{\times}28 \!\to\! 100 \!\to\! 100 \!\to\! 10$ & 89\,610 & 840 \\
MLP $3{\times}32{\times}32 \!\to\! 100 \!\to\! 100 \!\to\! 10$ & 318\,410 & 840 \\
CNN (8–16–32–64) + FC 200                                      & 75\,928 & 1\,280 \\
CNN (8–16–32–64) + FC 100                                      & 50\,228 & 880 \\
Transformer enc.\ $(d=14,\;d_{\text{ff}}=64)$                 & 2\,766 & – \\
Transformer enc.\ $(d=14,\;d_{\text{ff}}=128)$                 & 4\,622 & – \\
Transformer enc.\ $(d=14,\;d_{\text{ff}}=512)$                 & 15\,758 & – \\

Single‑layer CNN $3\!\xrightarrow[]{5\times5}\!10$             & 760 & – \\
\bottomrule
\end{tabular}
\end{table}

\begin{table}[h
]
\centering
\caption{Overhead parameters relative to the base architecture}
\label{tab:overhead_params}
\begin{tabular}{lrrr}
\toprule
Architecture & Base params & Overhead params & Overhead share (\%) \\
\midrule
MLP $1{\times}28{\times}28 \!\to\! 100\!\to\!100\!\to\!10$ & 89\,610 & 6\,222 & 6.06 \\
MLP $3{\times}32{\times}32 \!\to\! 100\!\to\!100\!\to\!10$ & 318\,410 & 17\,398 &  5.18 \\
CNN (8–16–32–64) + FC\,200                                & 75\,928 &  6\,702 &  8.11 \\
CNN (8–16–32–64) + FC\,100                                & 50\,228 &  4\,406 & 8.05 \\
\bottomrule
\end{tabular}
\end{table}

\subsection{Runtime, Computation, and Parameters for Forgetting Experiments}\label{app:compute}
In Appendix \ref{app:forrrr}, we evaluated and compared the methods using a larger network as the \mainnetwork. This choice provides the agents with sufficient capacity to assess forgetting, while also allowing us to examine the computational and runtime overhead of our method on a larger \mainnetwork than the one used in Section \ref{sec:exp}. Specifically, for the experiments in this section, we used a 4-layer CNN with 256, 128, 64, and 32 filters in successive layers, followed by two fully connected layers, each with a width of 256. As a result, the model now contains approximately $528$k parameters, nearly $20\times$ more than the original CNN. We report the runtime and memory usage below.

As seen in the Table \ref{app:agent_comparison_flops}, \neumosync incurs additional overhead compared to the base network (approximately +2.2$\times$ in forward and backward pass time), but this overhead remains practical in absolute terms (forward: 97.9ms; backward: 148.5ms). The increase is modest considering the added adaptive modulation capability introduced by \neurosync. Furthermore, the number of parameters remains nearly unchanged, and the total parameter size is comparable to other baseline methods.

Moreover, Scaling the model from 27k to 528k parameters resulted in a corresponding increase in FLOPs and parameter memory size. Yet, the \neumosync's forward and backward runtime only increased proportionally compared to Table \ref{tab:training_config} and remained within tractable bounds.

This moderate rise is attributable to the fact that the \neurosync module itself remains compact, and most of the computational burden arises from the Transformer’s sequence operations. Importantly, the parameter count increase in \neumosync is only around $12$k more than Baseline, even in the large CNN.


\begin{table}[h!]
\centering
\caption{Comparison of agents in terms of FLOPs, Parameters, Parameter Size, Forward and Backward Time.}
\label{app:agent_comparison_flops}
\begin{tabular}{l|c|c|c|c|c}
\toprule
\textbf{Agent} & \textbf{FLOPs} & \textbf{Params} & \textbf{Param Size} & \textbf{\makecell{Forward Time\\(ms)}} & \textbf{\makecell{Backward Time\\(ms)}} \\
\midrule
Baseline & 30.86 MFLOPs & 528.52 k & 2.02 MB & 44.6 & 67.9 \\
\neumosync & 43.94 MFLOPs & 540.86 k & 2.06 MB & 97.9 & 148.5 \\
CReLU & 28.48 MFLOPs & 529.17 k & 2.02 MB & 45.7 & 70.0 \\
CBP & 30.86 MFLOPs & 528.52 k & 2.02 MB & 52.7 & 79.5 \\
Layer norm & 30.98 MFLOPs & 580.12 k & 2.10 MB & 43.5 & 66.1 \\
L2Init & 30.86 MFLOPs & 528.52 k & 2.02 MB & 51.3 & 77.4 \\
\bottomrule
\end{tabular}
\end{table}

\section{Further Experiments and Comprehensive Results}\label{app:more_exp}
\subsection{Additional Benchmarks and Test Accuracies}
Figures \ref{fig:normal_train_complete} and \ref{fig:normal_test} summarize our main experimental results. Figure \ref{fig:normal_train_complete} presents the average task accuracy across all six continual learning benchmarks and for all evaluated baselines, providing a comprehensive overview of performance. Figure \ref{fig:normal_test} further assesses the generalizability of the approaches by reporting test accuracy for the four benchmarks where it could be meaningfully measured (excluding the random label datasets). In both figures, NeuMoSync consistently ranks among the top performers, demonstrating its superior ability in both average task accuracy and generalization across diverse continual learning scenarios.
\begin{figure}[htb]
  \centering
  \includegraphics[width=0.9\columnwidth,clip,trim=0 0 0 0]{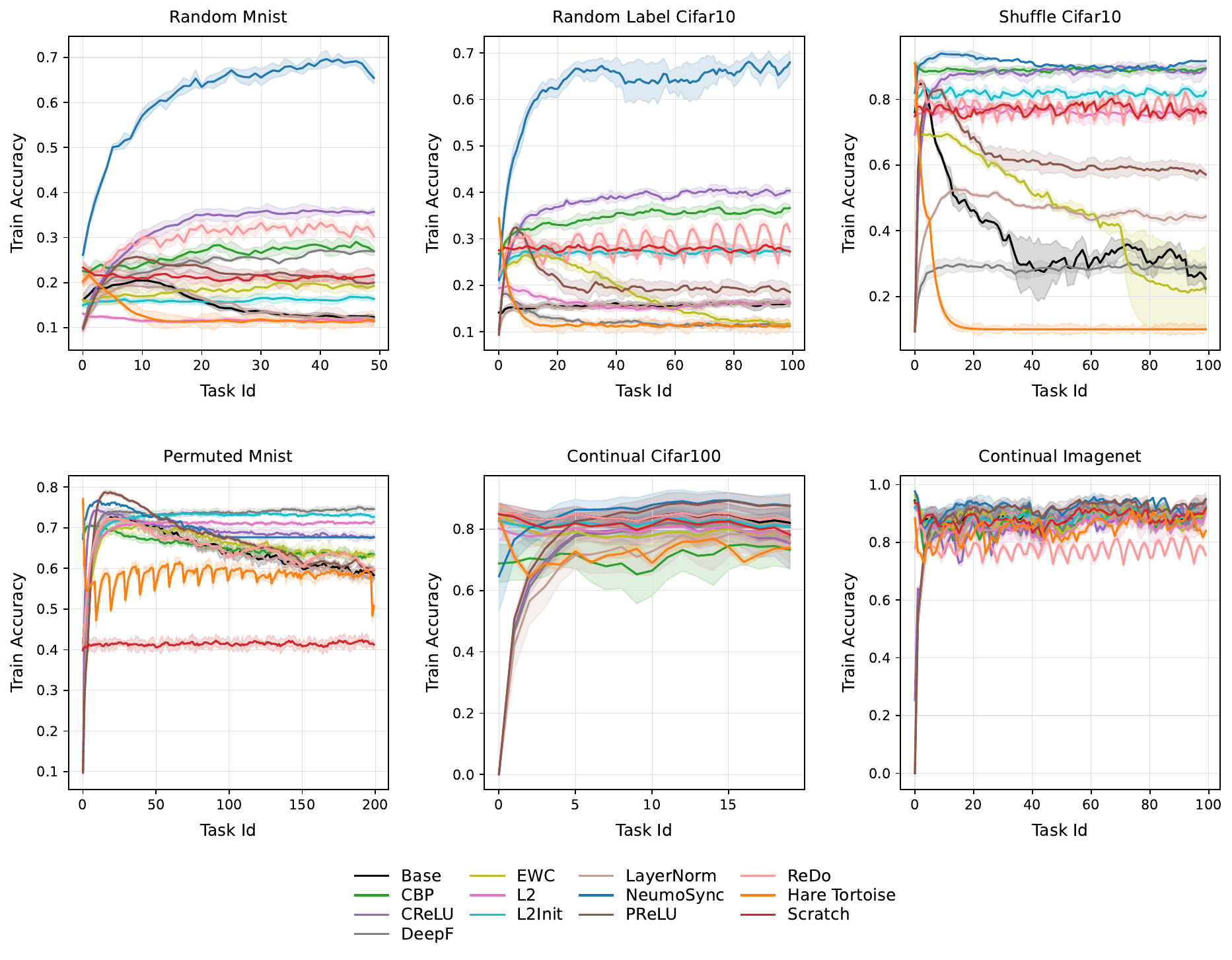}
  \caption{Average Task Accuracy Across Continual Learning Benchmarks. This figure presents the average task accuracy for \neumosync and all baseline methods across the six diverse continual learning benchmarks evaluated: Random Label CIFAR-10, Random Label MNIST, Shuffle CIFAR-10, Class Split T-ImageNet, CIFAR-100, and Permuted MNIST. \neumosync consistently demonstrates top-tier performance, highlighting its effectiveness in retaining knowledge and adapting across various continual learning scenarios.}
  
  \label{fig:normal_train_complete}
\end{figure}

\begin{figure}[!ht]
  \centering
  \includegraphics[width=0.9\columnwidth,clip,trim=0 0 0 0]{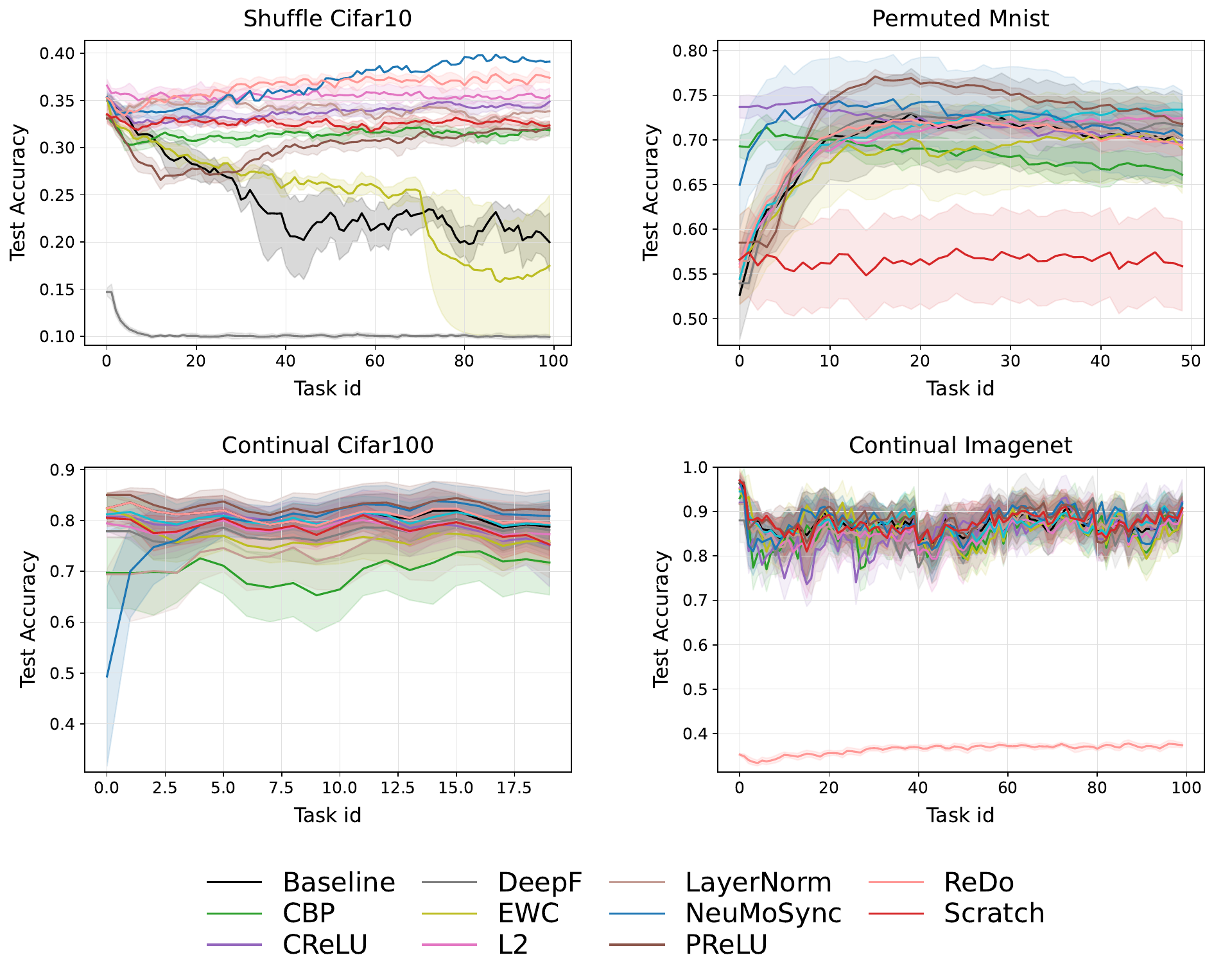}
  \caption{Test Accuracy for Generalization Assessment. This figure illustrates the test accuracy for \neumosync and other baseline methods on the four continual learning benchmarks where test accuracy could be robustly measured (excluding Random Label CIFAR-10 and Random Label MNIST). This metric directly assesses the generalization ability of each approach to new, unseen data within the learned tasks. Consistent with overall performance, \neumosync demonstrates strong generalization capabilities, positioning it among the top methods for each benchmark.}
  \vspace{-5mm}
  \label{fig:normal_test}
\end{figure}

{\subsection{The Recurring Knowledge Setting}\label{sec:reocc}
\vspace{-3mm}
It is well established that repetition enhances human learning~\citep{zhan2018effects}, improving both memorization quality and skill acquisition with each encounter. However, the role of repetition is rarely addressed in the CL literature, despite its prevalence in real-world scenarios where previously encountered concepts often reappear in varying contexts~\citep{hemati2025continual,vray2025reservoirtta}. 
To investigate this setting, we design a task sequence in which a specific task $T_\text{rep}$ reoccurs every $3$ tasks, while the remaining tasks are randomly sampled without duplicates.  We report the training accuracy on the repeating tasks after two epochs of training.

\begin{figure}[!h]
  \vspace{1mm}
  \centering
  \includegraphics[width=0.8\columnwidth,clip,trim=0 0 0 0]{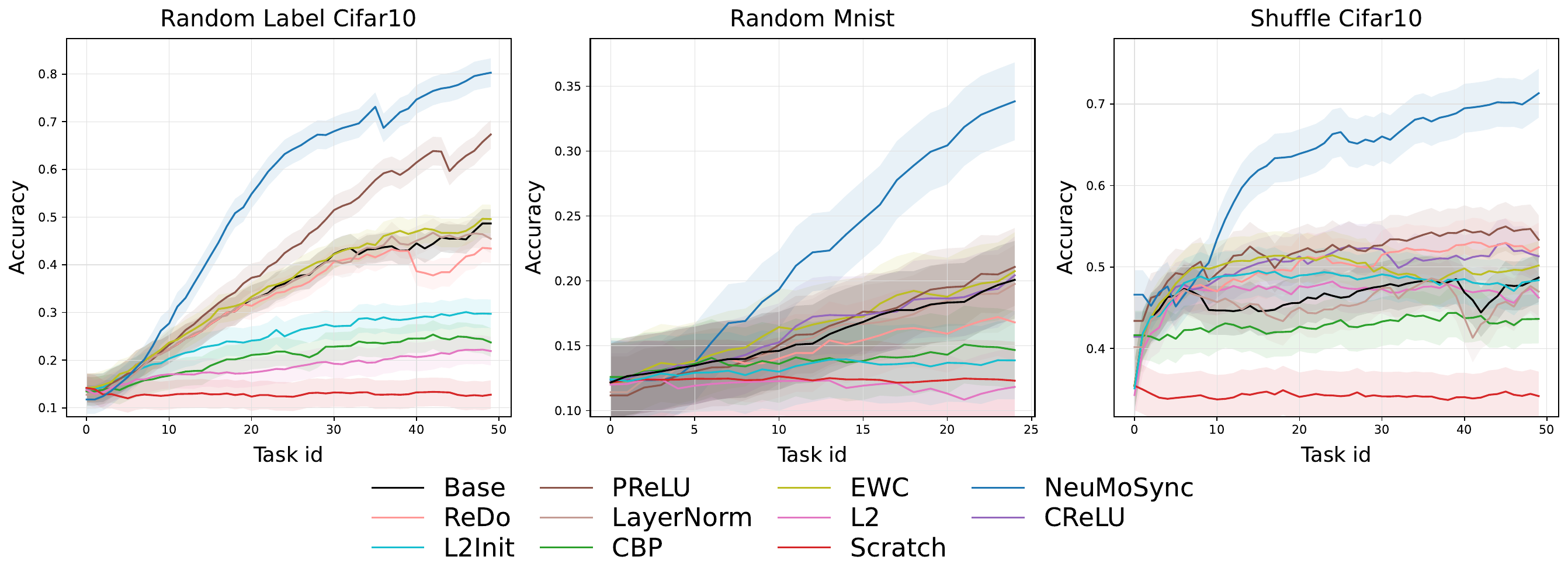}
  \caption{Learning curves on three families of tasks, measuring knowledge accumulation over repeated encounters with a specific task $T_\text{rep}$. Two different random tasks are interleaved between each repeat of $T_\text{rep}$ and the curves report the accuracy only on each encounter of $T_\text{rep}$. We see that \neumosync is best able to accumulate knowledge, even compared to plasticity-focused methods. }
  
  \label{fig:reoccure}
\end{figure}
As shown in Figure~\ref{fig:reoccure}, although none of the baselines exhibit a drop in accuracy, their performance gains with each revisit of the recurring task are limited. Most baselines, except for \texttt{PReLU}, reach a performance plateau quickly. In contrast, our method continues to improve across all re-encountering steps, demonstrating a superior ability to consolidate and build upon recurring knowledge.

\section{Scaling \neumosync}
\label{app:scale_controller}
\subsection{Encoder-decoder-based Controller Architecture}

Up to Section~\ref{sec:method}, we have used either an encoder-only Transformer or a stack of 1D convolutional layers, both of which operate over all neurons to produce the modulatory signals. The Transformer architecture incurs a quadratic computational cost in the number of neurons, while the 1D CNN variant still yields a parameter count that grows with the classifier size due to the final MLP-based fusion layer. In this section, we introduce a controller variant that scales more favorably with the number of neurons and is effectively agnostic to the size of the classifier in terms of parameter growth (Figure~\ref{fig:new_var}).

The controller follows an encoder–decoder design. Rather than feeding the full set of neuron feature vectors into the encoder, we uniformly sample a small subset of neurons for each input instance. Concretely, we select $K$ percentage of neurons across all layers of the classifier (with $K \ll 1$) and pass their feature vectors to the encoder. The encoder has the same configuration as in our earlier experiments: a single-layer self-attention Transformer with two attention heads that produces updated embeddings for these sampled neurons.

To obtain modulation parameters for all neurons, we then apply a lightweight decoder layer. We construct query vectors for all $N$ neurons from their learnable features and positional encodings, and use a cross-attention mechanism in which these $N$ queries attend to the $K \times N$ encoded neuron embeddings. The resulting $N$ output vectors are passed through a small two-layer MLP that produces the four modulation coefficients $\{\alpha_{\text{SM}}, \alpha_{\text{WC}}, \alpha_{\text{AL}}, \alpha_{\text{ARM}}\}$ for each neuron. 

While this structure is reminiscent of latent-variable architectures~\citep{jaegle2021perceiver}, we do not introduce an explicit low-dimensional latent bottleneck. Instead, the encoder acts as a shallow self-attention layer that directly defines embeddings for the sampled neurons, enabling fully parallelizable processing. The rest of the controller is deliberately lightweight, consisting of a single cross-attention block and a compact MLP. 

\subsection{Extending to ResNet}
In this part, we use a ResNet-18 model as the \texttt{MainNetwork} and evaluate this architecture on the \texttt{Shuffle Mini-ImageNet} benchmark, which features higher-resolution inputs ($84 \times 84$), approximately $40{,}000$ samples, and $100$ classes. In our experiments, this controller accounts for only about $0.002\%$ of the total learnable parameters of the full network. Due to the high capacity of the classifier network, we observed that with a sufficiently large training budget almost all methods converge to very high accuracy. To make the setting more challenging, we therefore reduce the training budget to four epochs per task in order to evaluate performance under a fast-adaptation regime. 

We perform an ablation study on the choice of $K$, the fraction of neurons selected for modulation, considering $K \in \{10\%, 5\%,1\%, 0\%\}$ of the total neurons, shown in Figure~\ref{fig:k_abl}. The results for different values of $K$ show that as the encoder receives a sparser subset of neurons, performance degrades; in particular, when $K = 0$ the model achieves the lowest training accuracy among other values. Nonetheless, even in this extreme case, the method does not exhibit catastrophic failure. It is worth noting that, even when $K = 0$, the controller still has access to neuron-specific learnable features through the cross-attention decoder, despite not receiving any neuron feature vectors in the encoder stage. The corresponding results are shown in  Figure~\ref{fig:scale_controller}, reports the case $K = 0.1$, which corresponds to approximately $450$ sampled neurons for ResNet-18.

\begin{figure}[!ht]
  \centering
  \includegraphics[width=0.6\columnwidth,clip,trim=0 0 0 0]{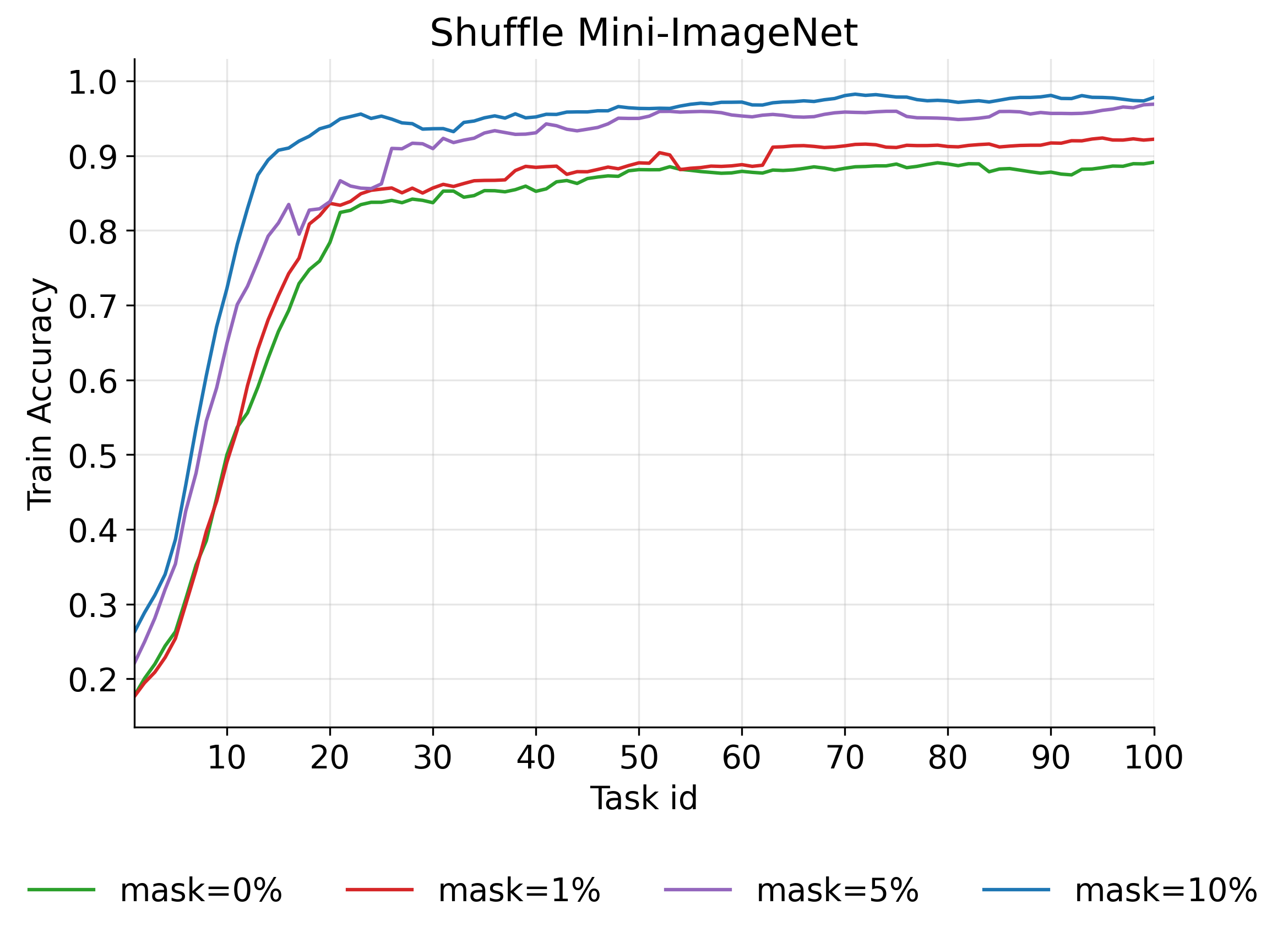}
  \caption{Comparison of training accuracy on Shuffle Mini-ImageNet under different sampeling ratios using an encoder–decoder controller.}
  \vspace{-5mm}
  \label{fig:k_abl}
\end{figure}
 We reused the same hyperparameters as in the \texttt{Shuffle CIFAR10} benchmark, due to the similar nature of the underlying non-stationarity. The only method-specific hyperparameters we tuned were the learning rate and optimizer, selected from $\text{lr} \in \{0.01, 0.001\}$ and $\text{optimizer} \in \{\text{adam}, \text{sgd}\}$, respectively, using a single seed and the average final-task accuracy as the selection criterion. The results of this hyperparameter tuning is reported in Table~\ref{tab:tab1}. In terms of parameter and runtime overhead, we evaluated the network size, inference runtime, and forward-pass computational cost for each method using a batch size of 16 reported in Table \ref{tab:tab2}.

\subsection{Extending to Transformers}
\label{app:extend_transformer}
To further validate the scalability and generality of the proposed global controller, we extend our framework to Vision Transformers~\citep{dosovitskiy2020image}. We investigate three distinct strategies for integrating \neumosync into the Transformer architecture. The first strategy restricts modulation to the neurons within the MLP blocks of each encoder layer, following the same protocol used for standard MLPs in Sections~\ref{sec:method} and~\ref{sec:new_var}. The second strategy targets the multi-head self-attention mechanism. Here, we treat each attention head as a functional unit analogous to a single neuron. We apply $\alpha_{\text{SM}}$ and $\alpha_{\text{WC}}$ to dynamically regulate the reliance of each head on its current versus consolidated weights (specifically, the query, key, and value matrices). To support this, we construct a feature vector for each head consisting of a learnable embedding, positional information, and an EMA of the head's activity statistics. The third strategy combines both approaches, applying modulation simultaneously to both the MLP neurons and the attention heads.

For the controller, we utilize the sparse encoder-decoder architecture we described in Appendix~\ref{app:scale_controller}. The input to the controller consists of feature vectors from all attention heads alongside a random 10\% subsample of the MLP neurons. By doing so, this controller accounts for about $1\%$ of the total weights of the full network. We compare these strategies against a standard \texttt{Vanilla ViT} on the \texttt{Random-Label CIFAR10} benchmark, where, as previously reported in  Section \ref{sec:inductive}, a ViT alone suffers from loss of plasticity.

\begin{figure}[h!]
    \centering
    \begin{subfigure}{0.48\columnwidth}
        \centering
        \includegraphics[width=\linewidth]{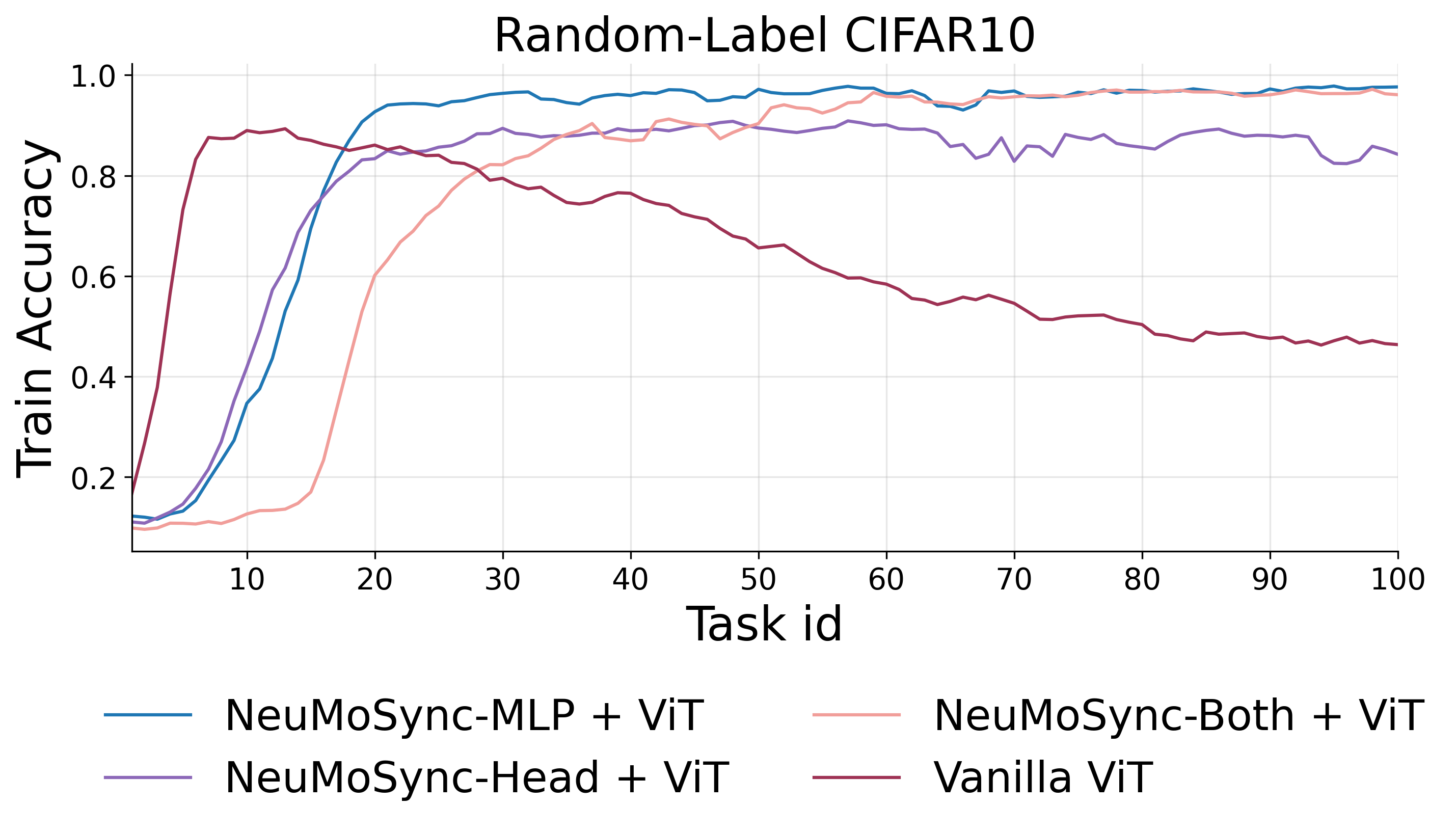}
        \caption{}
    \end{subfigure}
    \hfill
    \begin{subfigure}{0.48\columnwidth}
        \centering
        \includegraphics[width=\linewidth]{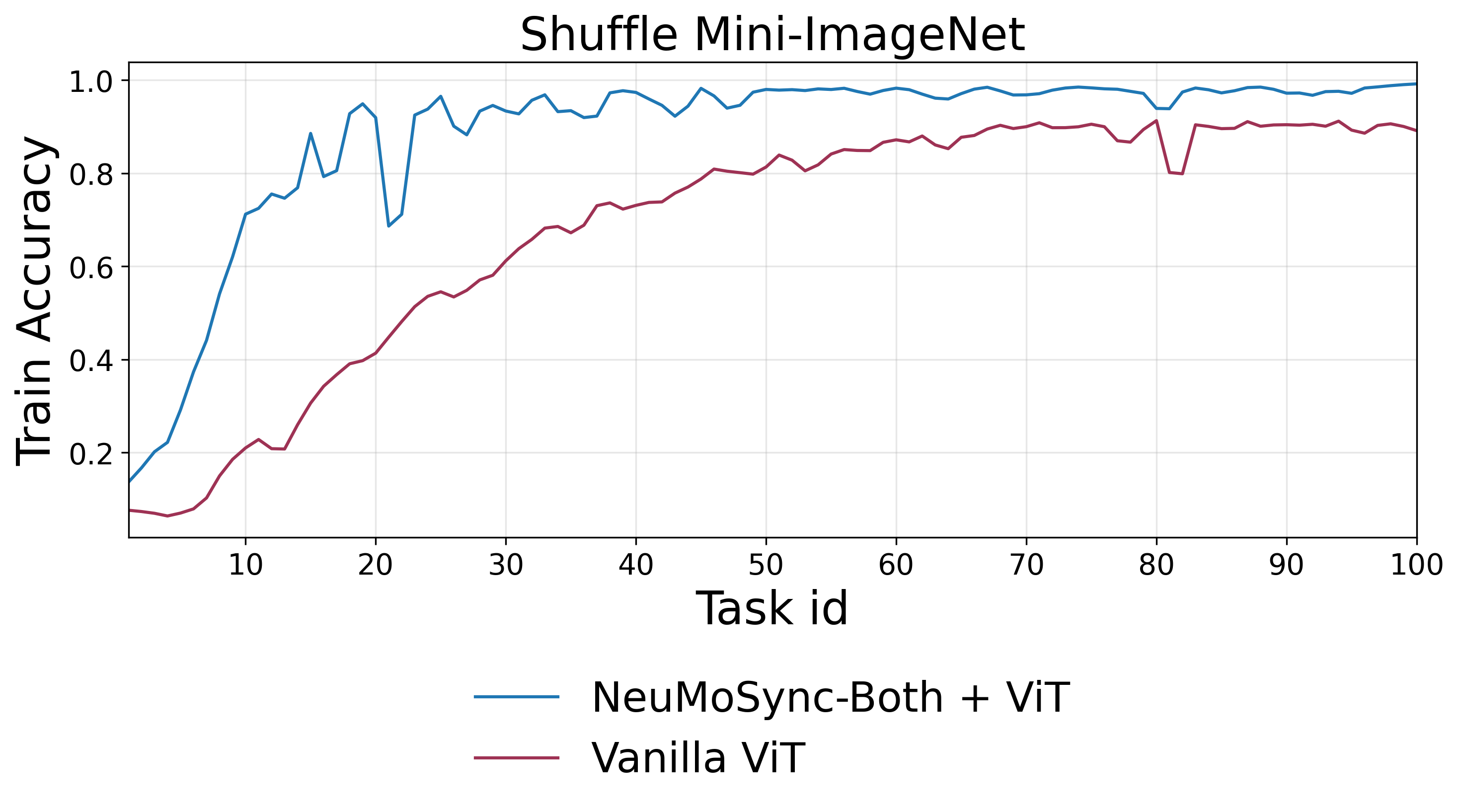}
        \caption{}
    \end{subfigure}

    \caption{\textbf{(a)} Performance comparison on \texttt{Random-Label CIFAR10}. While the \texttt{Vanilla ViT} exhibits a sharp loss of plasticity, all \neumosync-enhanced variants preserve adaptability. The combined approach (\texttt{NeuMoSync-Both}), which modulates both attention heads and MLP blocks, yields the highest performance. \textbf{(b)} Evaluation on \texttt{Shuffle Mini-ImageNet}. The combined modulation strategy (\texttt{NeuMoSync-Both})  outperforms the vanilla baseline, demonstrating scalability to larger architectures and more complex tasks.}
    \label{fig:vit_ran_shuf}
\end{figure}

As shown in Figure~\ref{fig:vit_ran_shuf}(a), all \neumosync-based variants substantially mitigate the loss of plasticity observed in the baseline. The most effective configuration is the combined strategy (\texttt{NeuMoSync-Both + ViT}), which applies four modulation signals to the MLP neurons and two to each attention head; this variant achieves the fastest learning speed and the highest final-task accuracy. These experiments provide evidence that learned global modulatory signals can be effectively integrated into attention-based architectures.

Building on the success of the combined strategy on CIFAR-10, we applied the same configuration (modulating both MLP and attention heads) to a larger ViT backbone on the \texttt{Shuffle Mini-ImageNet} benchmark (the controller in this case will also have roughly $0.002\%$ of the total learnable parameters of the entire network). As shown in Figure~\ref{fig:vit_ran_shuf}(b), \neumosync leads to a clear improvement over the \texttt{Vanilla ViT}, achieving higher accuracy and doing so more quickly. These findings underscore the utility of global modulation: the same high-level mechanism scales effectively to different architectures with significantly larger parameter counts, incurring only modest overhead while yielding meaningful performance gains.

\begin{table}[t]
\centering
\caption{Methods with corresponding optimizer and learning rate in \texttt{Shuffle Mini-ImageNet} benchmark for a ResNet-18 model and in \texttt{Split ImageNet} for ResNet-50 model.}
\begin{tabular}{ll}
\toprule
Method     & Optimizer / Learning rate \\
\midrule
Baseline       & adam (0.001)   \\
CBP        & sgd (0.01)  \\
CReLU      & adam (0.001)   \\
NeuMoSync  & adam (0.001) \\
PReLU      & adam (0.001) \\
ReDo       & sgd (0.01)  \\
ViT        & adam (0.001) \\
\bottomrule
\end{tabular}\label{tab:tab1}
\end{table}

\begin{table}[t]
\centering
\caption{Compute, parameter, memory, and runtime statistics for each method in \texttt{Shuffle Mini-ImageNet} benchmark with ResNet-18 model.}
\begin{tabular}{lcccc}
\toprule
Method    & FLOPs        & \# Params & Memory & Time (ms) \\
\midrule
NeuMoSync & 2.18 GFLOPs      & 11.27 M   & 42.98 MB & 201.6 \\
Base      & 2.14 GFLOPs      & 11.23 M   & 42.85 MB & 86.1  \\
PReLU     & 2.14 GFLOPs      & 22.47 M   & 85.73 MB & 87.5  \\
CReLU     & 1.86 GFLOPs      & 17.73 M   & 67.64 MB & 82.5  \\
CBP       & 2.14 GFLOPs      & 11.23 M   & 42.85 MB & 91.5  \\
ReDo      & 2.14 GFLOPs      & 11.23 M   & 42.85 MB & 89.0  \\
ViT       & 12.78 GFLOPs     & 12.91 M   & 49.26 MB & 36.1  \\
\bottomrule
\end{tabular}\label{tab:tab2}
\end{table}

\begin{table}[t]
\centering
\caption{Compute, parameter, memory, and runtime statistics for each method in \texttt{Split ImageNet} benchmark with a ResNet-50 backbone (except ViT).}
\begin{tabular}{lcccc}
\toprule
Method    & FLOPs        & \# Params & Memory & Time (ms) \\
\midrule
NeuMoSync & 4.29 GFLOPs  & 25.60 M   & 97.73 MB  & 330.9 \\
Base      & 4.09 GFLOPs  & 25.56 M   & 97.50 MB  & 164.6 \\
PReLU     & 4.11 GFLOPs  & 36.86 M   & 140.61 MB & 168.2 \\
CReLU     & 3.70 GFLOPs  & 29.69 M   & 113.26 MB & 164.0 \\
CBP       & 4.12 GFLOPs  & 25.56 M   & 97.50 MB  & 176.3 \\
ReDo      & 4.11 GFLOPs  & 25.56 M   & 97.50 MB  & 171.0 \\
ViT       & 18.08 GFLOPs & 25.19 M   & 84.65 MB  & 81.1  \\
\bottomrule
\end{tabular}
\label{tab:tab2}
\end{table}

\begin{figure}[!h]
  \centering
  \includegraphics[width=0.7\columnwidth,clip,trim=0 0 0 0]{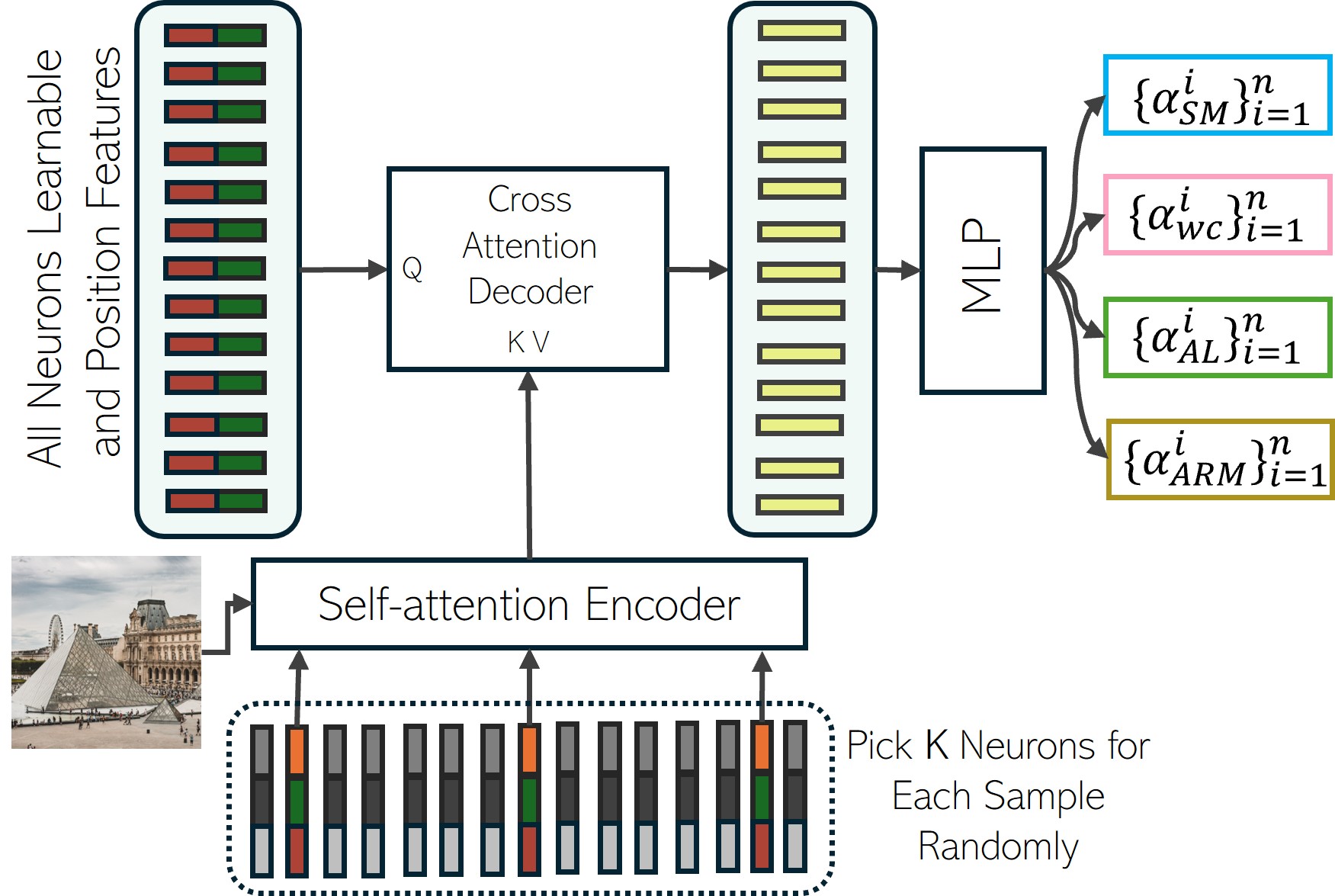}
  \caption{The sparse encoder–decoder architecture for the \neurosync module used in Resnet-based classifier experiments in Section~\ref{sec:new_var}}
  \label{fig:new_var}
\end{figure}

\section{Forgetting and Stability Experiments}\label{app:forrrr}
In this section, we integrated \neumosync with Experience Replay~\citep{rolnick2019experience} (ER), using reservoir sampling with a buffer size of 4,000, one of the most common forgetting mitigation techniques in the literature. We applied the same integration strategy to CBP and CReLU, which, like NeuMoSync, are designed to preserve plasticity but lack explicit forgetting mechanisms. This allowed us to fairly assess how well these plasticity-oriented methods perform when paired with a standard approach to stabilize prior knowledge. In this section, we ran experiments with three random seeds, and report the mean and variance.

We evaluated all models across four standard continual learning benchmarks:
\begin{itemize}
    \item Permuted MNIST (domain-incremental - 10 tasks)
    \item Class-Split CIFAR-10 (class incremental - 5 tasks)

    \item Class-Split CIFAR-100 (class incremental - 20 tasks)

    \item Class-Split Tiny ImageNet (class incremental - 10 tasks)
\end{itemize}

To make the models more suitable for forgetting assessment and to increase network capacity, we adopted a larger CNN architecture, ensuring sufficient capacity to learn and potentially retain all tasks. Specifically, for the experiments in this section, we used a 4-layer CNN with 256, 128, 64, and 32 filters in successive layers, followed by two fully connected layers, each with a width of 256. Furthermore, we considered four common forgetting baselines:
\begin{enumerate}
    \item A-GEM~\citep{chaudhry2018efficient}, a task-agnostic memory-based method that projects gradients to avoid interference,
    \item HAT~\citep{serra2018overcoming}, a task-aware method leveraging privileged task-ID information, which is a particularly strong baseline in class/task-incremental setting.  With this baseline, we want to analyze how close our fully task-agnostic method can get to a best-case, task-aware upper bound,
    \item EWC~\citep{kirkpatrick2017overcoming}, a regularization-based approach that constrains parameter drift.
    \item Experience Replay with a buffer  of 4000 examples and reservoir sampling with vanilla MLP or CNN.
\end{enumerate}

For assessing forgetting, we used \textbf{Average Forgetting (AF)}~\cite{chaudhry2018riemannian}, which is the average accuracy of a continual learner on all previously seen tasks, 
evaluated at the end of training on the entire task sequence. Formally, if there 
are $T$ tasks and $A^{T-1}_{\text{0},k}$ denotes the accuracy on task $k$ 
after training on all $T$ tasks, then

$$
\mathrm{AF} = \frac{1}{T-1}\sum_{k=0}^{T-2} A^{T-1}_{\text{0},k}.
$$

\textbf{Results – Forgetting.}
In Table \ref{app:forgetting}, we report the average forgetting (higher is better in this setup) for different tasks. Despite not being designed for stability, NeuMoSync+ER achieves competitive or superior performance in many cases. It outperforms A-GEM and EWC on three of the four benchmarks and even matches or exceeds HAT in Permuted MNIST and CIFAR-10. This demonstrates that NeuMoSync’s plasticity-focused design complements memory-based forgetting mitigation techniques effectively. In contrast, CBP+ER and CReLU+ER generally underperform NeuMoSync+ER, showing that the modulation dynamics introduced by NeuMoSync can lead to more robust retention when combined with memory replay.

\vspace{1em}

\begin{table}[H]
\centering
\caption{Average Forgetting Results Comparison}
\label{app:forgetting}
\begin{tabular}{l|c|c|c|c}
\toprule
\textbf{Method}   & \textbf{Permuted MNIST} & \textbf{\makecell{Class-split\\CIFAR 10}} & \textbf{\makecell{Class-split\\CIFAR 100}} & \textbf{\makecell{Class-split\\T-ImageNet}} \\
\midrule
EWC        & 46.23(1.34) & 18.00(0.92) & 04.57(1.28) & 08.41(0.73) \\
HAT        & 22.19(0.84) & 58.44(1.67) & \textbf{25.00}(0.91) & \textbf{51.62}(1.42) \\
A-GEM      & 50.71(1.11) & 20.36(0.87) & 01.29(1.58) & 12.88(0.93) \\
ER         & 63.00(0.95) & 61.55(1.21) & 07.42(0.82) & 32.18(1.36) \\
CBP + ER   & 49.83(1.12) & 30.27(0.79) & 04.00(1.47) & 28.64(0.86) \\
CReLU + ER & 63.39(0.97) & 60.00(1.53) & 01.78(0.88) & 40.22(1.26) \\
\rowcolor[gray]{0.9}
\neumosync + ER & \textbf{64.00}(1.05) & \textbf{66.47}(0.96) & 18.91(1.38) & 41.00(0.99) \\
\bottomrule
\end{tabular}
\end{table}

\vspace{1em}
\noindent\textbf{Results – Plasticity.}
In the following, we report average task accuracy to measure plasticity, including the average performance across all tasks in Table \ref{app:forgetting_full}, and during the last 25\% of tasks (when plasticity is most critical) in Table \ref{app:forgetting_25}. NeuMoSync+ER consistently ranks among the top-performing methods. On CIFAR-100 and Tiny ImageNet, it achieves the highest average task accuracy, indicating its strong adaptability even when learning late-stage tasks. In contrast, stability-focused methods like A-GEM and EWC show signs of degraded plasticity, confirming the importance of explicitly targeting plasticity in continual learning.

\begin{table}[H]
\centering
\caption{Average performance across all tasks}
\label{app:forgetting_full}
\begin{tabular}{l|c|c|c|c}
\toprule
\textbf{Method} &
\textbf{Permuted MNIST} &
\textbf{\makecell{Class-split\\CIFAR 10}} &
\textbf{\makecell{Class-split\\CIFAR 100}} &
\textbf{\makecell{Class-split\\T-ImageNet}} \\
\midrule
EWC        & 65.05(1.23) & 93.50(0.94) & 90.33(1.11) & 90.99(0.86) \\
HAT        & 37.69(0.77) & 89.20(1.45) & 61.71(0.92) & 81.52(1.36) \\
A-GEM      & 69.03(1.18) & \textbf{93.76}(0.83) & 58.21(1.42) & 69.03(0.91) \\
ER         & 71.35(0.95) & 92.21(1.07) & 75.81(0.89) & 90.18(1.27) \\
CBP + ER   & 63.24(1.14) & 89.25(0.93) & 71.32(1.39) & 63.24(0.84) \\
CReLU + ER & 69.96(0.98) & 92.07(1.31) & 81.32(0.87) & 84.98(1.25) \\
\rowcolor[gray]{0.9}
\neumosync + ER  & \textbf{75.58}(1.09) & 93.17(0.96) & \textbf{94.35}(1.22) & \textbf{91.25}(0.88) \\
\bottomrule
\end{tabular}
\end{table}

\begin{table}[H]
\centering
\caption{Average performance across last 25\% of tasks}
\label{app:forgetting_25}
\begin{tabular}{l|c|c|c|c}
\toprule
\textbf{Method} &
\textbf{Permuted MNIST} &
\textbf{\makecell{Class-split\\CIFAR 10}} &
\textbf{\makecell{Class-split\\CIFAR 100}} &
\textbf{\makecell{Class-split\\T-ImageNet}} \\
\midrule
EWC        & 62.09 (0.84) & 94.32 (1.12) & 89.98 (0.65) & 92.40 (1.27) \\ 
HAT        & 25.83 (1.05) & 87.05 (0.78) & 49.53 (1.44) & 78.73 (0.91) \\ 
A-GEM      & 70.69 (0.96) & \textbf{95.68 (1.38)} & 17.27 (0.72) & 70.69 (1.09) \\ 
ER         & 68.12 (1.22) & 91.22 (0.86) & 37.90 (1.41) & 91.60 (0.67) \\ 
CBP + ER   & 68.69 (0.94) & 82.81 (1.19) & 53.47 (0.73) & 68.69 (1.06) \\ 
CReLU + ER & 68.87 (1.11) & 92.72 (0.82) & 68.65 (1.36) & 83.89 (0.97) \\ 
\rowcolor[gray]{0.9}
\neumosync + ER  & \textbf{75.58 (0.88)} & 94.73 (1.25) & \textbf{95.68 (0.91)} & \textbf{95.30 (1.14)} \\
\bottomrule
\end{tabular}
\end{table}

For a fair comparison, we performed a new hyperparameter tuning procedure, optimizing Average Forgetting with respect to the hyperparameters. The optimal hyperparameters for each method on each benchmark are reported below. We used a batch size of 64 for the current task data and an additional 64 samples from the replay buffer for rehearsal-based methods. In the following table, $\lambda$ denotes the regularization coefficient (for EWC and HAT).

\begin{table}[H]
\centering
\caption{Optimal hyperparameters tuned for minimizing Average Forgetting across benchmarks.}
\label{tab:hp_tuning}
\begin{tabular}{l|c|c|c|c}
\toprule
\textbf{Method} &
\textbf{Permuted MNIST} &
\textbf{\makecell{Class-split\\CIFAR 10}} &
\textbf{\makecell{Class-split\\CIFAR 100}} &
\textbf{\makecell{Class-split\\T-ImageNet}} \\
\midrule
EWC   & \makecell{optimizer = Adam\\ lr = $0.001$ \\ $\lambda = 10$} &
        \makecell{optimizer = Adam\\ lr = $0.001$ \\ $\lambda = 1$} &
        \makecell{optimizer = SGD\\ lr = $0.01$ \\ $\lambda = 10$} &
        \makecell{optimizer = Adam\\ lr = $0.001$ \\ $\lambda = 10$} \\ 
\midrule
HAT   & \makecell{optimize = Adam \\ lr = $0.001$\\ s\_max = 400 \\ $\lambda = 0.1$} &
        \makecell{optimize = Adam\\ lr = $0.001$\\ s\_max = 400 \\ $\lambda = 0.1$} &
        \makecell{optimize = Adam\\ lr = $0.001$\\ s\_max = 400 \\ $\lambda = 1$} &
        \makecell{optimize = Adam\\ lr = $0.001$\\ s\_max = 400 \\ $\lambda = 0.1$} \\
\midrule
A-GEM & \makecell{optimizer = SGD \\ lr = $0.01$\\ } &
        \makecell{optimizer = SGD \\ lr = $0.01$\\  } &
        \makecell{optimizer = Adam \\ lr = $0.001$\\  } &
        \makecell{optimizer = Adam \\ lr = $0.001$\\  } \\
\bottomrule
\end{tabular}
\end{table}










\section{Further Ablation Studies}\label{app:fur_abl}
\subsection{Mechanisms in Isolation}
\label{app:mech_iso}
We conducted a comprehensive ablation study by activating only one mechanism at a time, using both the global controller (\neurosync) and also a scenario were we remove the controller and separately learn every $\alpha$ parameters for each neuron (to analyze the effect of having the controller). The average task accuracy over all tasks, as well as over the last 50\%, 25\%, and 10\% of tasks are reported for \texttt{Random-label CIFAR10} and \texttt{Shuffle CIFAR10} benchmarks in Tables \ref{app:abl_1} - \ref{app:abl_4}.

Our ablation studies, which analyze the effect of activating only one modulation mechanism at a time, yield several key insights:

\begin{itemize}
    \item \textbf{Adaptive Linearity ($\alpha_{\text{AL}}$) is a primary driver of plasticity.} In nearly all experimental configurations, both with and without the global \neurosync controller, activating $\alpha_{\text{AL}}$ alone was sufficient to prevent plasticity loss. This highlights its fundamental role in maintaining adaptability, aligning with recent findings on the benefits of adaptive activation functions for continual learning~\citep{razavi2025preserving}.
    
    \item \textbf{Synaptic Modulation ($\alpha_{\text{SM}}$) requires global coordination.} While $\alpha_{\text{SM}}$ performs poorly on its own, its effectiveness is dramatically amplified when paired with the global \neurosync controller, where it becomes the top-performing individual mechanism. This strongly suggests that a global, context-aware signal is crucial for unlocking the benefits of synaptic modulation; simply allowing neurons to learn independent modulation rates is insufficient.

    \item \textbf{Weight Consolidation ($\alpha_{\text{WC}}$) is a critical enabler, not a standalone solution.} By itself, $\alpha_{\text{WC}}$ yields performance comparable to a baseline MLP/CNN. However, its importance is revealed when it is removed from the full model, which causes a drastic drop in performance (Figure~\ref{fig:ablation}). This indicates that $\alpha_{\text{-WC}}$'s primary role is not to add capacity, but to enable the \neurosync module to effectively manage the stability-plasticity trade-off by gating the influence of the \consolidatednetwork. Its contribution is further evidenced in Figure 4B, where an increase in $\alpha_{\text{WC}}$ values correlates with improved performance.

    \item \textbf{Additive Offset ($\alpha_{\text{ARM}}$) is a complementary mechanism.} In isolation, $\alpha_{\text{ARM}}$ does not independently mitigate plasticity loss and performs poorly across all training regimes. This finding supports our broader conclusion that \neurosync's benefits stem from its ability to coordinate multiple, complementary modulation signals, rather than from the contribution of any single mechanism in isolation (as discussed further in Appendix~\ref{app:prev_know}).

    \item \textbf{The complete system is greater than the sum of its parts.} Ultimately, while individual mechanisms show distinct benefits, none of them in isolation match the performance of the full, integrated \neurosync architecture. This confirms that the synergy between the different modulation channels, orchestrated by the global controller, is essential for achieving the best results.
\end{itemize}
\begin{table}[H]
\centering
\caption{With \neurosync on Random-label CIFAR10. Average performance on the last 10\%, 25\%, 50\% tasks and all tasks are reported.}
\label{app:abl_1}
\begin{tabular}{l|c|c|c|c}
\toprule
\textbf{Variants} & \textbf{10\%} & \textbf{25\%} & \textbf{50\%} & \textbf{100\%} \\
\midrule
$\alpha_{\text{AL}}$-Only         & \textbf{24.69(1.73)} & \textbf{23.70(1.72)} & \textbf{23.28(1.84)} & \textbf{23.38(1.74)} \\
$\alpha_{\text{ARM}}$-Only       & 18.08(1.57) & 18.60(1.59) & 20.02(1.57) & 18.23(1.67) \\ 
$\alpha_{\text{SM}}$-Only        & 11.45(1.45) & 11.84(1.84) & 14.53(1.53) & 22.52(1.87) \\ 
$\alpha_{\text{WC}}$-Only         & 11.42(1.59) & 11.32(1.73) & 11.12(1.65) & 11.85(1.96) \\ 
\rowcolor[gray]{0.9}
\neumosync & 69.31(1.10) & 68.32(1.04) & 68.81(1.03) & 64.74(1.96) \\ 
\bottomrule
\end{tabular}
\end{table}

\begin{table}[H]
\centering
\caption{With \neurosync on Shuffle CIFAR10. Average performance on the last 10\%, 25\%, 50\% tasks and all tasks are reported.}
\label{app:abl_2}
\begin{tabular}{l|c|c|c|c}
\toprule
\textbf{Variants} & \textbf{10\%} & \textbf{25\%} & \textbf{50\%} & \textbf{100\%} \\
\midrule
$\alpha_{\text{AL}}$-Only         & 65.07(0.24) & 66.16(0.31) & 65.80(0.28) & 69.78(0.19)\\
$\alpha_{\text{ARM}}$-Only       & 43.92(0.15) & 43.56(0.18) & 43.14(0.22) & 45.22(0.16)\\ 
$\alpha_{\text{SM}}$-Only        & \textbf{71.35(0.03)} & \textbf{70.76(0.02)} & \textbf{73.25(0.04)}  & \textbf{78.11(0.25)}\\ 
$\alpha_{\text{SM}}$-Only        & 10.40(0.08) & 11.03(0.05) & 10.07(0.12) & 14.41(0.33) \\ 
\rowcolor[gray]{0.9}
\neumosync & 93.00(0.41) & 92.84(0.29) & 92.17(0.38) & 92.84(0.36) \\ 
\bottomrule
\end{tabular}
\end{table}

\begin{table}[H]
\centering
\caption{Without \neurosync Module on Shuffle CIFAR10. Average performance on the last 10\%, 25\%, 50\% tasks and all tasks are reported.}
\label{app:abl_3}
\begin{tabular}{l|c|c|c|c}
\toprule
\textbf{Variants} & \textbf{10\%} & \textbf{25\%} & \textbf{50\%} & \textbf{100\%} \\
\midrule
w/o \neurosync + $\alpha_{\text{AL}}$-Only        & \textbf{59.25(0.32)} & \textbf{60.24(0.28)} & \textbf{61.00(0.35)} & \textbf{61.04(0.21)}\\
w/o \neurosync + $\alpha_{\text{ARM}}$-Only       & 10.07(0.11) & 12.60(0.49) & 11.40(0.07) & 12.98(0.19)\\
w/o \neurosync + $\alpha_{\text{SM}}$-Only        & 10.20(0.33) & 11.01(0.03) & 11.00(0.4) & 12.52(0.17)\\
w/o \neurosync + $\alpha_{\text{WC}}$-Only        & 10.67(0.80) & 10.10(0.09) & 11.11(0.13) & 14.82(0.23)\\ 
\bottomrule
\end{tabular}
\end{table}

\begin{table}[H]
\centering
\caption{Without \neurosync Module on Random-label CIFAR10. Average performance on the last 10\%, 25\%, 50\% tasks and all tasks are reported.}
\label{app:abl_4}
\begin{tabular}{l|c|c|c|c}
\toprule
\textbf{Variants} & \textbf{10\%} & \textbf{25\%} & \textbf{50\%} & \textbf{100\%} \\
\midrule
w/o \neurosync + $\alpha_{\text{AL}}$-Only        & \textbf{24.02(0.18)} & \textbf{24.16(0.15)} & \textbf{24.29(0.22)} & \textbf{23.91(0.19)} \\
w/o \neurosync + $\alpha_{\text{ARM}}$-Only       & 11.46(0.12) & 11.48(0.08) & 11.42(0.11) & 11.73(0.14) \\
w/o \neurosync + $\alpha_{\text{SM}}$-Only        & 10.56(0.31) & 11.40(0.55) & 11.02(0.13) & 12.52(0.17) \\
w/o \neurosync + $\alpha_{\text{WC}}$-Only        & 11.40(0.09) & 11.41(0.07) & 11.41(0.10) & 11.70(0.13) \\
\bottomrule
\end{tabular}
\end{table}

\subsection{Is Performance Driven by Knowledge Transfer or the Transformer Architecture Alone?}\label{app:prev_know}

A critical question is whether \neumosync's superior performance is primarily a result of the architectural benefits of its controller on a per-task basis, or if it stems from the controller's ability to \emph{acquire and transfer knowledge across tasks}. To disentangle these two potential sources of performance, we designed a control experiment to explicitly prevent any cross-task knowledge transfer.

In this experiment, at the end of each task, we reset the parameters of the \neurosync module's Transformer to a random initialization. This procedure severs the primary channel for carrying information forward, effectively making the controller's knowledge task-specific and transient. To ensure the model still had sufficient capacity to learn within each task, we quadrupled the number of training epochs per task, training until performance saturated.

The results of this experiment are presented in Table \ref{tab:transformer_reset}. Despite being given ample training time on each individual task, the performance of the reset-enabled model stagnated and was significantly worse than that of the standard \neumosync architecture. Furthermore, the sharp decline in accuracy on the final 10\% of tasks indicates that, without the benefit of accumulated knowledge, the model still ultimately succumbs to plasticity loss.

This finding provides a crucial insight: the \neurosync module's primary contribution is not simply applying a powerful architecture to each task in isolation. Instead, its effectiveness is rooted in its ability to learn and accumulate a \emph{transferable meta-skill} for neuronal modulation over a sequence of tasks.

This conclusion is strongly supported by the convergence of evidence from across our experiments:
\begin{itemize}
    \item Resetting the controller (this experiment) cripples performance, proving that cross-task knowledge is essential.
    \item Removing the controller entirely (Section~\ref{sec:ablation}) also degrades performance, proving a controller is necessary.
    \item Replacing the Transformer with a non-attentional, parameter-sharing 1D CNN (Section~\ref{sec:ablation}) yields comparable results, proving the specific architecture is less important than the underlying inductive bias.
    \item Outperforming MAML (Section~\ref{sec:meta}) shows that learning a dynamic modulation policy is more effective than just learning a good initialization.
\end{itemize}
Collectively, these results lead to a conclusion: \emph{\neumosync's core strength is its ability to learn a generalizable, architecture-agnostic policy for neuronal modulation, a capability that is fundamentally dependent on the transfer of knowledge across tasks.}

\begin{table}[ht]
\centering
\caption{Comparison of \neumosync with \neurosync reset at the end of each task on Random-label CIFAR10. Average performance on the last 10\%, 25\%, 50\%, 75\% tasks and all tasks are reported.}
\begin{tabular}{l|c|c|c|c|c}
\toprule
\textbf{Variants} & \textbf{10\%} & \textbf{25\%} & \textbf{50\%} & \textbf{75\%} & \textbf{100\%} \\
\midrule
\neumosync w resetting        & 57.94(1.33) & 58.32(1.53) & 61.23(1.87) & 60.13(1.43) & 59.38(1.87) \\
\rowcolor[gray]{0.9}
\neurosync w/o resetting & \textbf{69.31(1.10)} & \textbf{68.32(1.04)} & \textbf{68.81(1.03)} & \textbf{67.33(1.46)} & \textbf{64.74(1.96)} \\ 
\bottomrule
\end{tabular}\label{tab:transformer_reset}
\end{table}

Furthermore, we want to discuss the nature of knowledge transfer we aim to accomplish in \neumosync. Continual learning literature often distinguishes between two types of knowledge: task-specific knowledge, which is unique to a particular task, and generalizable knowledge, which is shared and applicable across tasks~\citep{gao2023dkt}. The \neumosync architecture is explicitly designed to excel at acquiring and leveraging the latter.

The primary objective of our approach is not to achieve zero-shot retention of task-specific details, which is the focus of methods that combat catastrophic forgetting. Instead, \neumosync's goal is to learn a transferable, general-purpose \emph{policy for modulation}. This policy, which takes the form of a learned inductive bias, enables the model to adapt rapidly to both novel and previously seen tasks. This places our work in a conceptual space similar to meta-learning, where the primary objective is optimizing for fast adaptation rather than memory preservation. The strong performance on our fast adaptation metrics ($\text{LCA}_F$ and $\text{LCA}_B$) provides direct evidence for the success of this approach.

The critical role of this learned, generalizable knowledge is further underscored by our ablation studies. As demonstrated in Appendices \ref{app:prev_know} and \ref{app:fur_abl}, performance collapses when the global controller is either removed or reset between tasks, preventing it from accumulating and transferring its modulation policy. This confirms that the transferable knowledge learned within the controller is the central mechanism behind \neumosync's effectiveness.

\subsection{Sensitivity Analysis on $\beta$}\label{app:beta}

To investigate the impact of the consolidation rate $\beta$, we conducted a sensitivity analysis by varying $\beta$ across four values: $0.9$, $0.99$, $0.999$ (the value used in the main paper), and $0.9999$. For each setting, we ran the full Random Label CIFAR-10 and Shuffle CIFAR-10 experiments and measured average task accuracy over multiple windows of training: the last 100\%, 75\%, 50\%, 25\%, and 10\% of tasks. The results are shown in Tables \ref{app:beta_1} and \ref{app:beta_2}, respectively. This evaluation provides insight into both long-term performance and final adaptation quality.

The results, reported in the tables below, reveal a clear trend:

\begin{itemize}
    \item For $\beta = 0.9$ and $\beta = 0.99$, the average task accuracy decreases over time, particularly in the later stages of training. This indicates insufficient retention of earlier tasks, i.e., a loss of plasticity.
    \item For $\beta = 0.9999$, performance is poor in general, especially in later stages. In this case, we observed that $\alpha_{\text{WC}}$ remained close to zero throughout training, suggesting that the system effectively ignored the \consolidatednetwork. This leads to degraded long-term memory and poor consolidation of generalizable knowledge.
    \item $\beta = 0.999$ (the value used in our main experiments) provides the most stable and high-performing results across both benchmarks, maintaining consistent accuracy throughout training.
\end{itemize}

These results demonstrate that $\beta$ plays a critical role in balancing plasticity and stability. A smaller $\beta$ causes the \consolidatednetwork to track the \mainnetwork too closely, preventing it from retaining stable long-term information. Conversely, a larger $\beta$ prevents the system from effectively leveraging the \consolidatednetwork, as it accumulates knowledge that is not useful for future tasks, thereby making $\alpha_{\text{WC}}$ ineffective.

Furthermore, as discussed in Section \ref{sec:emerg} and illustrated in Figure \ref{fig:emerge}, the \consolidatednetwork becomes increasingly important as training progresses, especially on tasks like \texttt{Random-label CIFAR10}. In these cases, the \neurosync module begins to rely more heavily on the \consolidatednetwork via rising values of $\alpha_{\text{WC}}$, leveraging the accumulated, stable knowledge. This results in a recovery from the slight early drop in accuracy and leads to sustained performance over time.

In summary, this sensitivity analysis confirms that the consolidation rate $\beta$ is a key hyperparameter for enabling adaptation and mitigating plasticity loss. The selected value of $\beta = 0.999$ strikes a practical balance, enabling both the retention of stable knowledge and the modulation required for continual learning.

\begin{table}[h]
\centering
\caption{Average performance on the last 10\%, 25\%, 50\%, and 75\% of tasks, as well as on all tasks, in the Random-label CIFAR10 benchmark.}
\label{app:beta_1}
\begin{tabular}{l|c|c|c|c|c}
\toprule
\textbf{Values} & \textbf{10\%} & \textbf{25\%} & \textbf{50\%} & \textbf{75\%} & \textbf{100\%} \\
\midrule
\textbf{0.999 (chosen)} & \textbf{69.31(1.10)} & \textbf{68.32(1.04)} & \textbf{68.81(1.46)} & \textbf{67.33(1.46)} & \textbf{64.74(1.96)} \\
0.9 & 35.50(1.45) & 33.63(1.28) & 36.67(1.34) & 38.96(1.82) & 40.35(2.15) \\
0.99 & 36.56(1.38) & 37.77(1.22) & 39.67(1.41) & 39.24(1.76) & 40.04(2.08) \\
0.9999 & 21.90(1.02) & 21.82(0.95) & 25.78(1.18) & 27.15(1.52) & 30.43(1.89) \\
\bottomrule
\end{tabular}
\end{table}

\begin{table}[H]
\centering
\caption{Average performance on the last 10\%, 25\%, 50\%, and 75\% of tasks, as well as on all tasks, in the Shuffle CIFAR10 benchmark.}
\label{app:beta_2}
\begin{tabular}{l|c|c|c|c|c}
\toprule
\textbf{Values} & \textbf{10\%} & \textbf{25\%} & \textbf{50\%} & \textbf{75\%} & \textbf{100\%} \\
\midrule
\textbf{0.999 (chosen)} & \textbf{93.00(0.41)} & \textbf{92.84(0.36)} & \textbf{92.17(0.38)} & \textbf{91.56(0.29)} & \textbf{92.84(0.36)}\\
0.9 & 83.55(0.33) & 83.90(0.40) & 83.39(0.31) & 86.19(0.35) & 87.92(0.32) \\
0.99 & 83.80(0.35) & 81.45(0.39) & 82.82(0.34) & 85.26(0.37) & 86.37(0.30) \\
0.9999 & 57.80(0.37) & 56.45(0.32) & 57.82(0.36) & 60.26(0.41) & 64.37(0.38) \\
\bottomrule
\end{tabular}
\end{table}

\subsection{Coarse-grained Plasticity Modulation}\label{app:global_mech}

\begin{figure}[h!]
    \centering
    \begin{subfigure}{0.48\columnwidth}
        \centering
        \includegraphics[width=\linewidth]{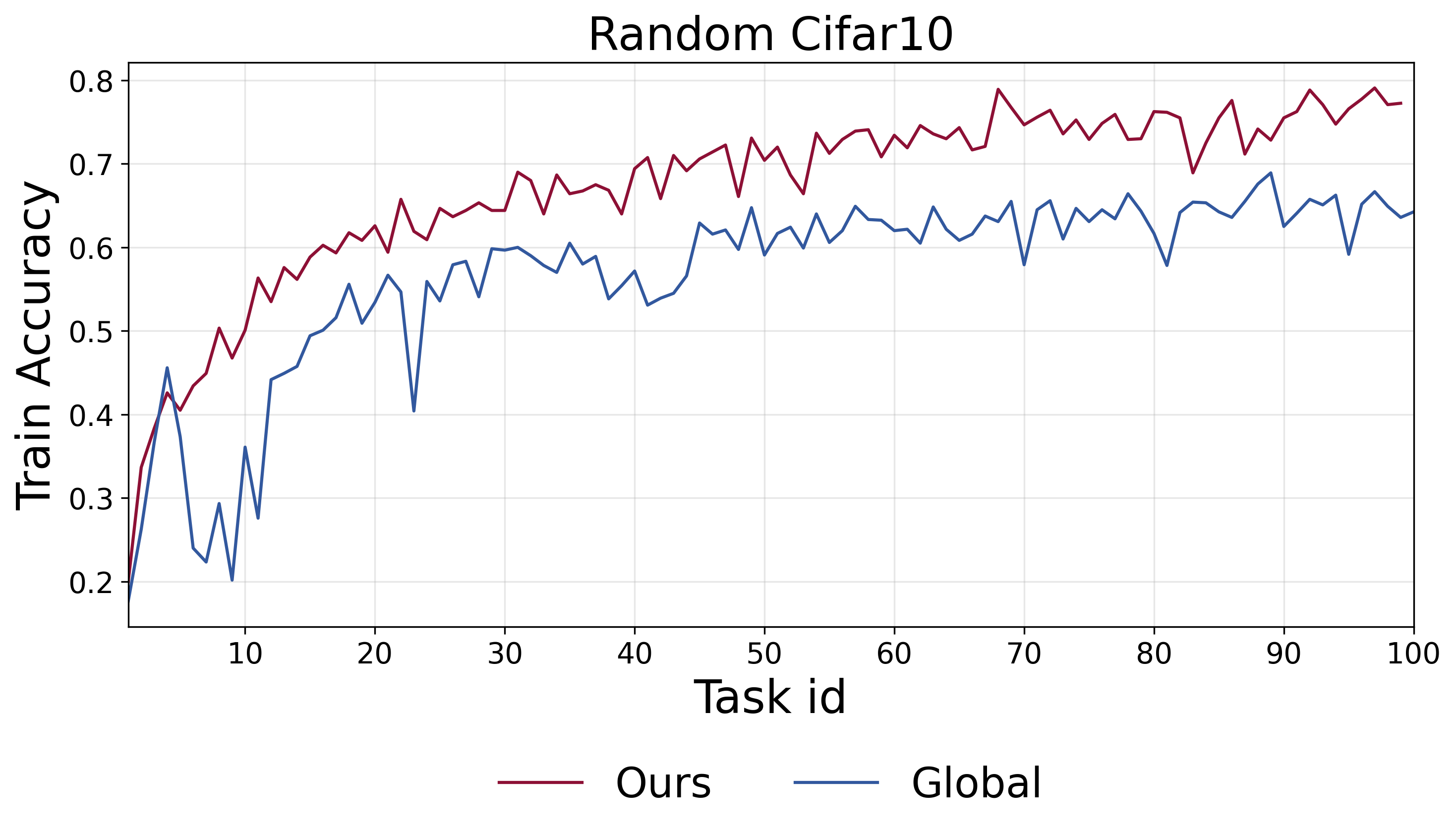}
    \end{subfigure}
    \hfill
    \begin{subfigure}{0.48\columnwidth}
        \centering
        \includegraphics[width=\linewidth]{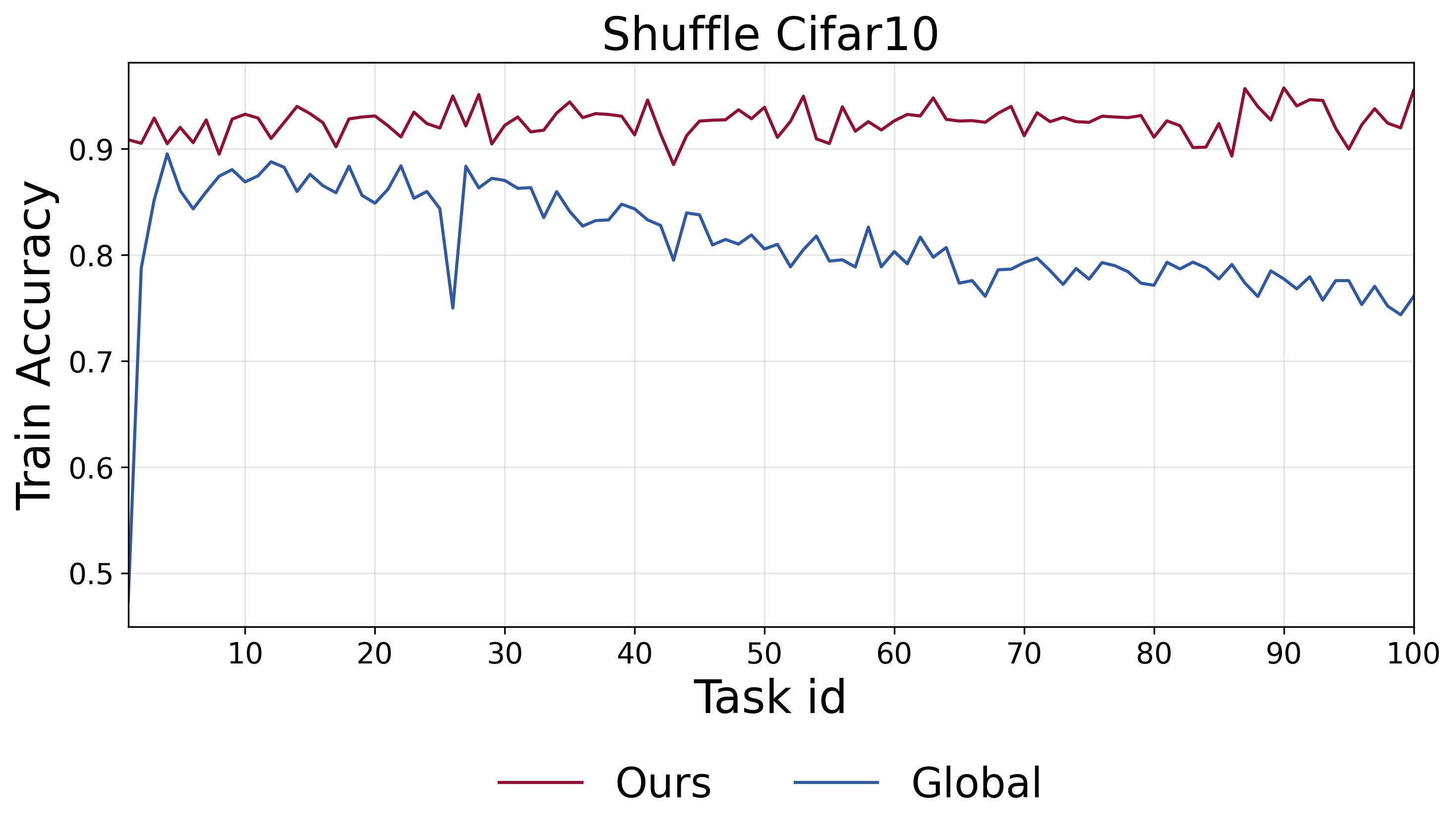}
    \end{subfigure}

    \caption{Effect of global (\texttt{Global}) versus per-neuron (\texttt{Ours}) synaptic modulation signals, $\alpha_{\text{SM}}$ and $\alpha_{\text{WC}}$.}
    \label{fig:global_mech}
\end{figure}

To investigate the effect of using a shared plasticity-modulation signal instead of neuron-specific, fine-grained signals, we replaced the per-neuron plasticity parameters with a single, shared signal for all neurons in the network. Specifically, we applied global $\alpha_{\text{SM}}$ and $\alpha_{\text{WC}}$ parameters across the entire network, while preserving the two neuron-specific activity-modulation parameters ($\alpha_{\text{ARM}}$ and $\alpha_{\text{APM}}$) as in the original setup. This design isolates the impact of introducing global plasticity-modulation parameters. We evaluated this variant on two tasks, \texttt{Random-label CIFAR10} and \texttt{Shuffle CIFAR10}, with results shown in Figure~\ref{fig:global_mech}.

The comparison shows that replacing fine-grained signals with a global modulation leads to a decrease in performance, which on \texttt{Shuffle CIFAR10} becomes a noticeable drop, particularly on the final tasks. However, performance remains within a relatively acceptable range and does not exhibit a catastrophic collapse. Taken together with the similarity in the modulation trends of neurons within a layer shown in Figures ~\ref{fig:app_beh_shuffle} and \ref{fig:app_beh_random}, this ablation suggests a potential path to improve the scalability of our method, especially when combined with the controller introduced in Section~\ref{sec:new_var}, by moving toward population-level modulatory signals, opening opportunities for future work on designs that are more efficient in both performance and runtime.

\subsection{Input-conditioned and Input-unconditioned Controller}
To better assess the impact of conditioning the controller on each input image, we conducted an ablation study in which the controller was conditioned only on neuron features and not on the input. Because the primary motivation of this work is to capture input-driven modulatory effects, this ablation clarifies the practical importance of sample-specific modulation.

To test this, we kept all configurations fixed and evaluated models with and without input-image conditioning on two benchmarks, namely \texttt{Random-label CIFAR10} and \texttt{Shuffle CIFAR10}. The reported training accuracies were computed using the same random seed for both settings. 

As shown in Figure~\ref{fig:no_img_abl}, removing the image input to the controller reduces performance on both evaluated benchmarks (\texttt{Shuffle CIFAR10} and \texttt{Random-label CIFAR10}), with a stronger effect in the memorization setting. On \texttt{Random-label CIFAR10}, the input-conditioned controller maintains high training accuracy across tasks, whereas the input-unconditioned variant gradually degrades as new tasks are introduced, indicating a marked loss of plasticity. On \texttt{Shuffle CIFAR10}, both variants achieve relatively high accuracy, but a consistent gap remains in favor of the input-conditioned controller. Taken together, these results suggest that conditioning the controller on the input is important for enhancing plasticity by providing a modulatory signal that supports sustained learning performance over long task sequences.

\begin{figure}[h!]
    \centering
    \begin{subfigure}{0.48\columnwidth}
        \centering
        \includegraphics[width=\linewidth]{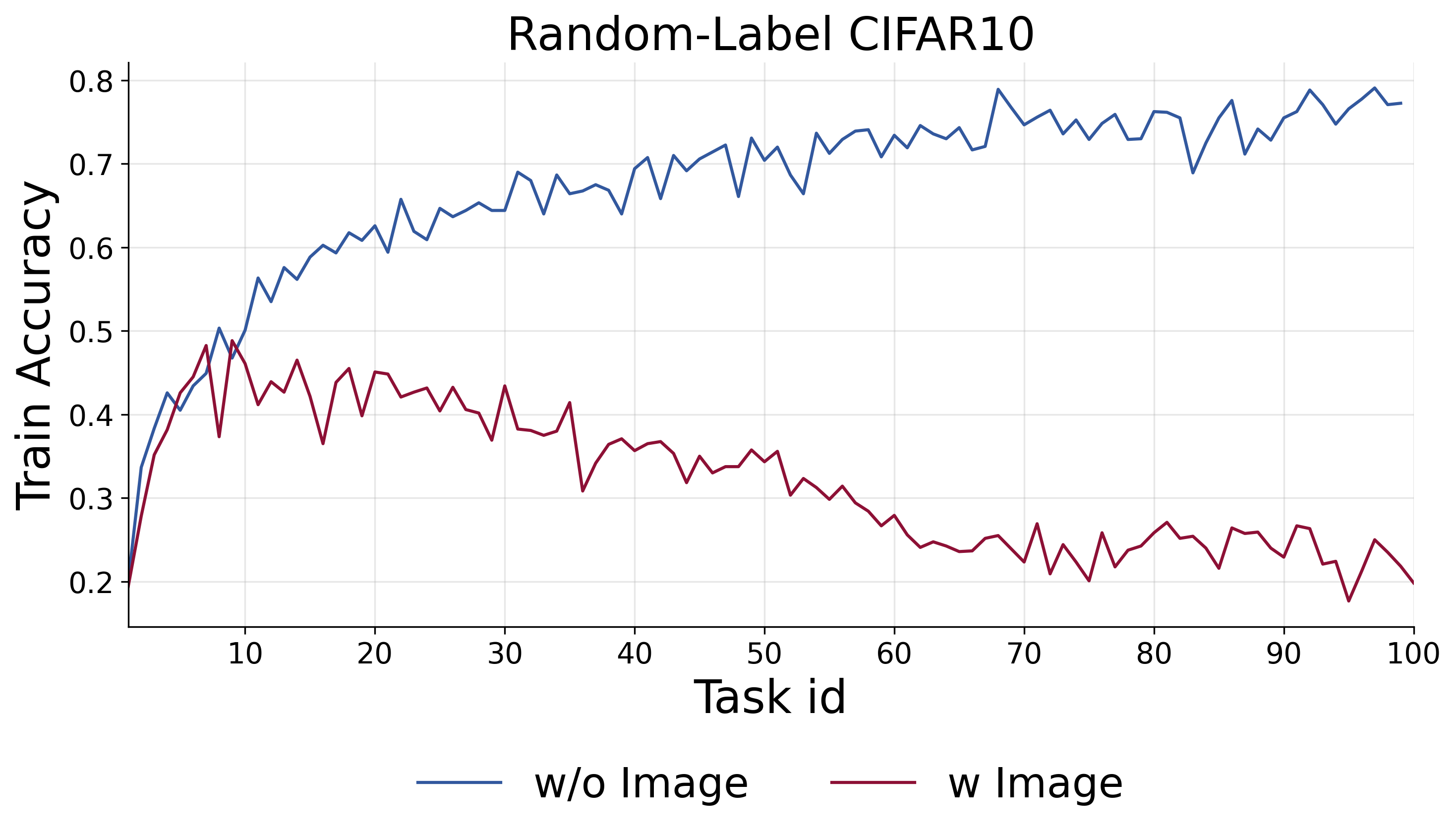}
    \end{subfigure}
    \hfill
    \begin{subfigure}{0.48\columnwidth}
        \centering
        \includegraphics[width=\linewidth]{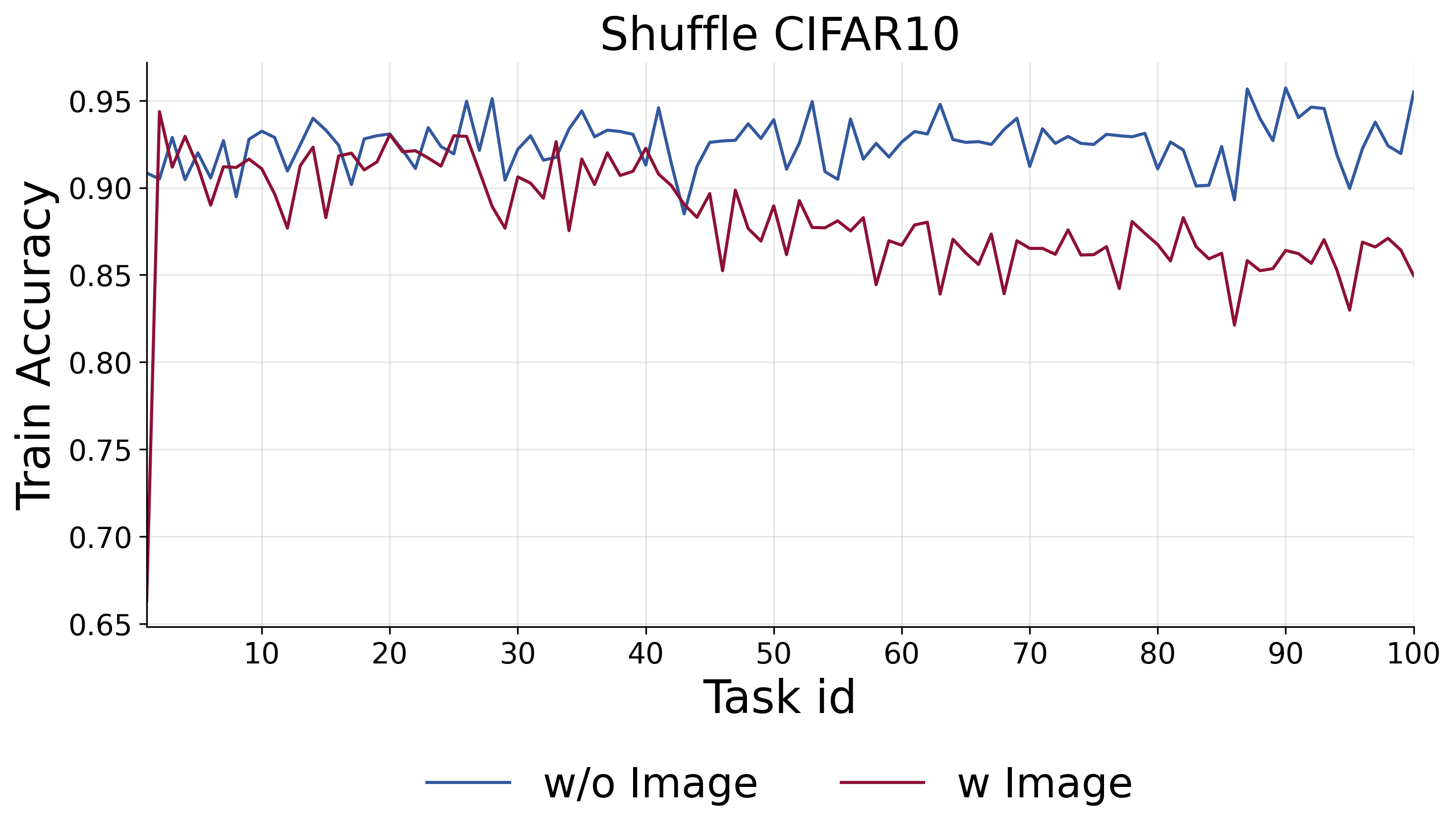}
    \end{subfigure}

    \caption{Effect of conditioning the controller on the input image, evaluated on \texttt{Random-label CIFAR10} and \texttt{Shuffle CIFAR10}. Removing image conditioning (\texttt{w/o image}), i.e., using an input-unconditioned controller, reduces performance, with a more pronounced drop on \texttt{Random-label CIFAR10}, indicating a severe loss of plasticity.}
    \label{fig:no_img_abl}
\end{figure}
\section{Analysis of Modulation Dynamics and Training Stability}\label{app:HH}
\subsection{Learned Modulation Dynamics}\label{app:beh_alpha}

To dissect the learned modulation strategies of \neurosync, we analyze the dynamics of the $\alpha$-parameters on a two-layer MLP across two benchmarks with different non-stationarities: Shuffle CIFAR-10 (concept drift with transferable features) and Random-label CIFAR-10 (unstructured memorization). The evolution of these parameters, shown in Figures~\ref{fig:app_beh_shuffle} and~\ref{fig:app_beh_random}, reveals structural patterns. The plots in this section represent the mean over three different seeds to ensure stable behavior across the dynamic of modulation parameters.

A key finding is that the controller learns a task-dependent strategy tailored to the nature of the non-stationarity. On Shuffle CIFAR-10, where feature representations are transferable across tasks, the module makes significantly greater use of the weight consolidation parameter ($\alpha_{\text{WC}}$), particularly in the first layer. This reflects a strategy of relying on and reusing stable, consolidated knowledge. In contrast, on Random-label CIFAR-10, where such transfer is less meaningful, the system relies less on consolidated knowledge. This task instead demands greater nonlinearity to memorize arbitrary mappings, a requirement reflected in the less negative values of $\alpha_{\text{AL}}$, which correspond to a more complex, non-linear activation landscape~\citep{razavi2025preserving}.

A consistent pattern of layer-wise functional specialization also emerges across both benchmarks. The first layer, responsible for lower-level features, consistently exhibits greater variability across all $\alpha$-parameters. Conversely, the second layer, which learns more abstract representations, is modulated to adapt more rapidly (higher $\alpha_{\text{SM}}$ values) and makes greater use of consolidated knowledge (higher $\alpha_{\text{WC}}$ values). This suggests a learned policy where higher-level, semantic features are quickly repurposed for new tasks, while lower-level feature extractors undergo more significant, but carefully controlled, modifications.

In summary, our analysis reveals a principle in the learned modulation strategy: the first layer exhibits high variance, reflecting task-specific adaptation, while the second layer converges to a more stable regime, focusing on the reuse of abstract knowledge. The consistently lower variance in the deeper layer suggests a direction for scaling \neurosync. Employing per-layer (rather than per-neuron) modulation coefficients, particularly for deeper layers of a network, could offer a more parameter-efficient approach with minimal expected performance loss, enhancing the method's applicability to very large models. IFurthermore, the analysis of the $\alpha$-parameter dynamics shows that \neurosync does not employ a single, fixed strategy, but rather learns an adaptive modulation that is sensitive to both the task structure and the hierarchical layer of the neurons it is controlling.


\begin{figure}[ht]
  \centering
  \includegraphics[width=0.85\columnwidth,clip,trim=0 0 0 0]{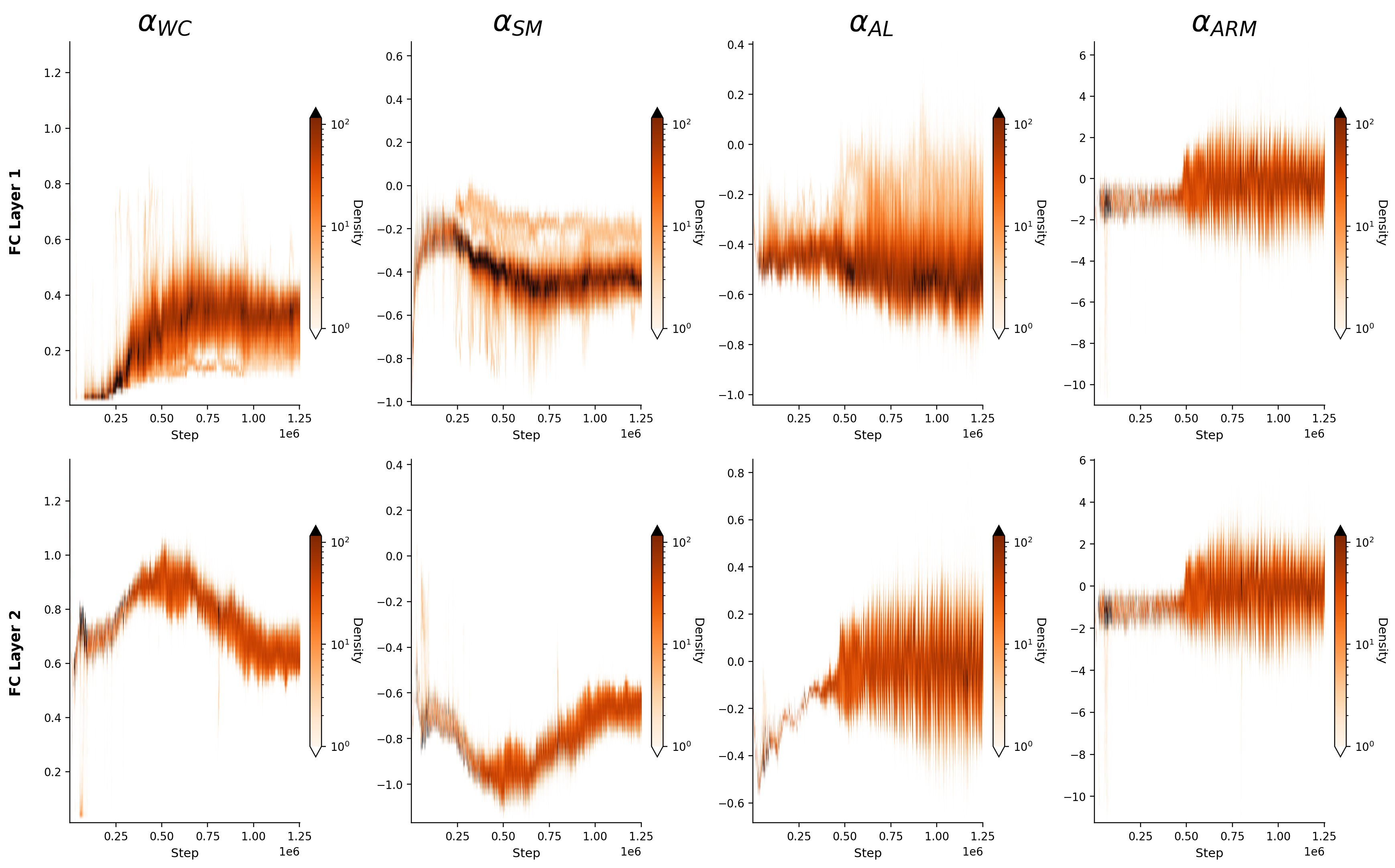}
  \caption{Heatmaps illustrating the evolution of $\alpha$ parameters in a continual learning setting on the \texttt{Shuffle CIFAR10} benchmark. The x-axis represents training steps, and the y-axis denotes the values of the $\alpha$ parameters.}
  \label{fig:app_beh_shuffle}
\end{figure}

\begin{figure}[ht]
  \centering
  \includegraphics[width=0.85\columnwidth,clip,trim=0 0 0 0]{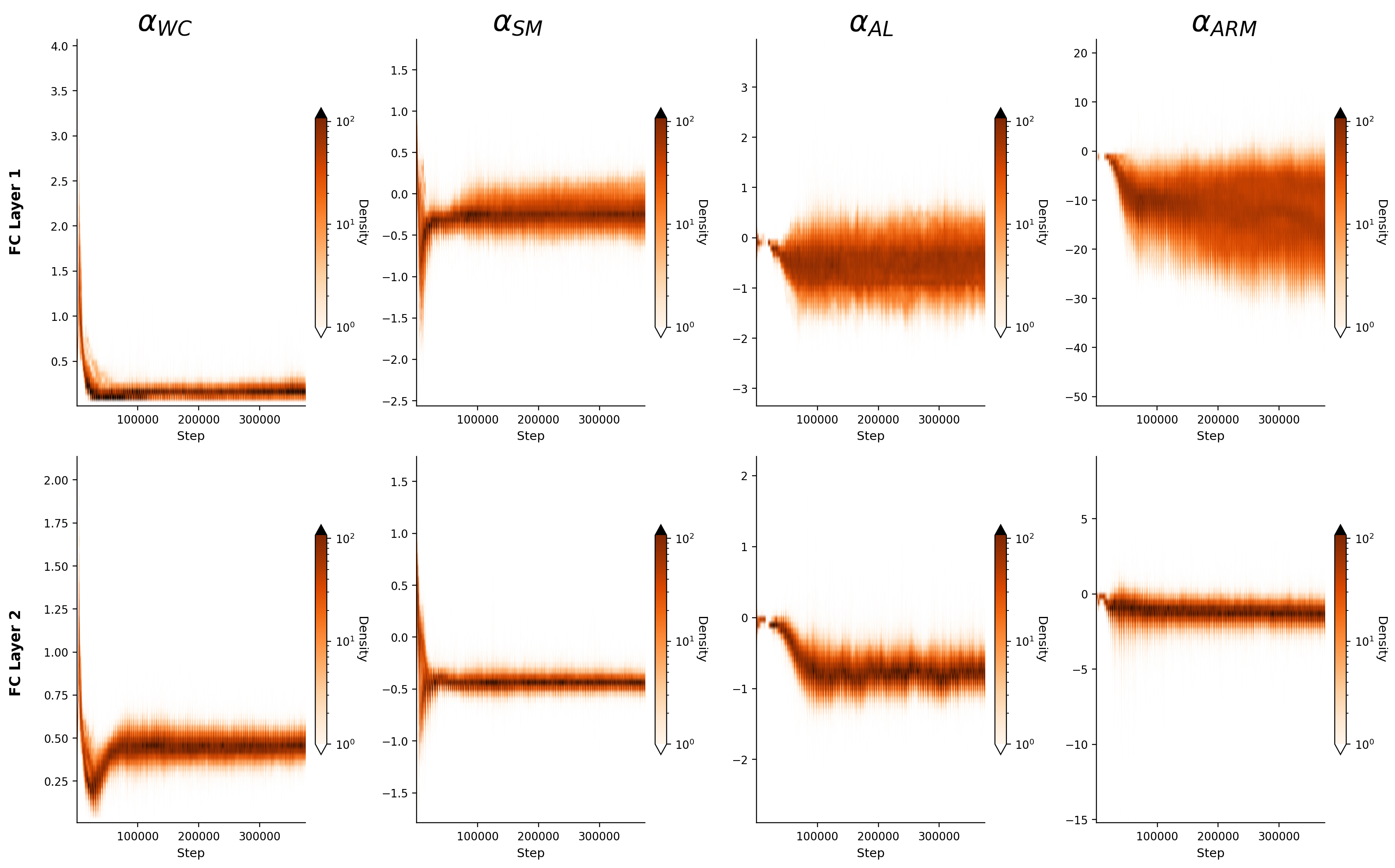}
  \caption{Heatmaps illustrating the evolution of $\alpha$ parameters in a continual learning setting on the \texttt{Random-label CIFAR10} benchmark. The x-axis represents training steps, and the y-axis denotes the values of the $\alpha$ parameters.}
  \label{fig:app_beh_random}
\end{figure}

\subsection{Training Stability}

To assess the stability of our method, we performed grid search over several key hyperparameters, learning rate, optimizer choice, and the exponential moving average parameter $\beta$. Except for $\beta$ (which we discuss in detail in Section \ref{app:beta}), different configurations of the other hyperparameters resulted in stable training behavior across tasks, and we simply selected the configuration that achieved the highest accuracy.

Additionally, we did not perform extensive tuning of other architectural parameters such as the embedding dimension of the transformer or the number of learnable features per neuron (which we selected to ensure that the \neurosync module constitutes only a small fraction of the total parameter count) . Despite this, the model achieved strong performance across benchmarks, suggesting a degree of robustness to these hyperparameters as well.

To further examine training stability and convergence behavior, we tracked several metrics for both our method and the continual learning baseline, separately for each layer of a simple two-layer MLP, including:
\begin{itemize}
    \item The $\ell_2$ norm of weights,
    \item The effective rank of the \mainnetwork’s weight matrices (S-rank) \citep{kumar2023maintaining},
    \item Feature rank \citep{lyle2024disentangling} of the \mainnetwork, 
    \item The fraction of dormant neurons in the \mainnetwork according to \citep{sokar2023dormant}.
\end{itemize}

\begin{figure}[htb]
  \centering
  \includegraphics[width=0.85\columnwidth,clip,trim=0 0 0 0]{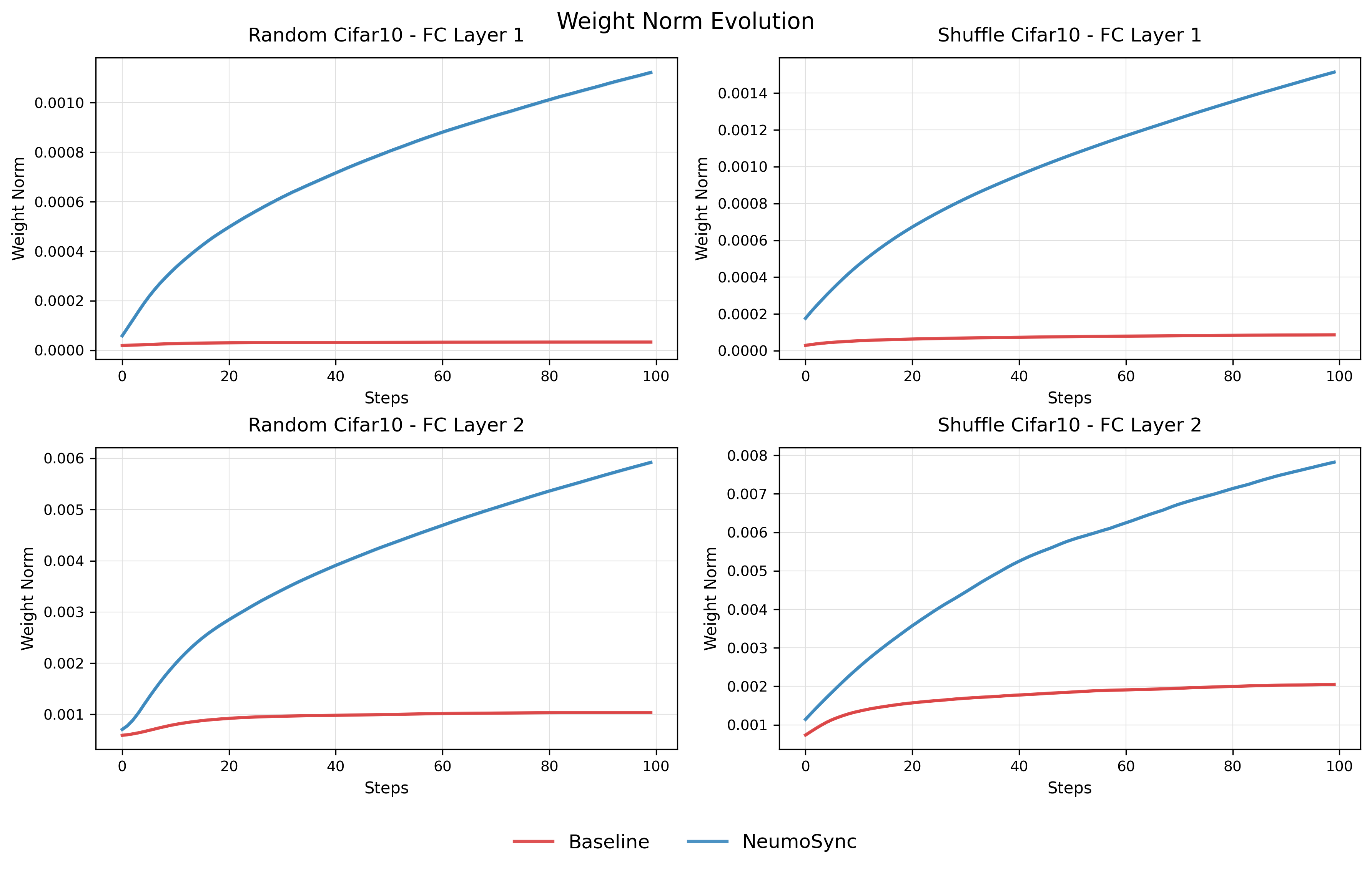}
  \caption{The $\ell_2$ norm of the weights across two layers and two benchmarks is shown. As training progresses, the weight norm of \mainnetwork continues to grow, whereas the baseline remains on a plateau, suggesting a loss of adaptability.}
  \label{fig:app_l1_norm}
\end{figure}

\begin{figure}[htb]
  \centering
  \includegraphics[width=0.85\columnwidth,clip,trim=0 0 0 0]{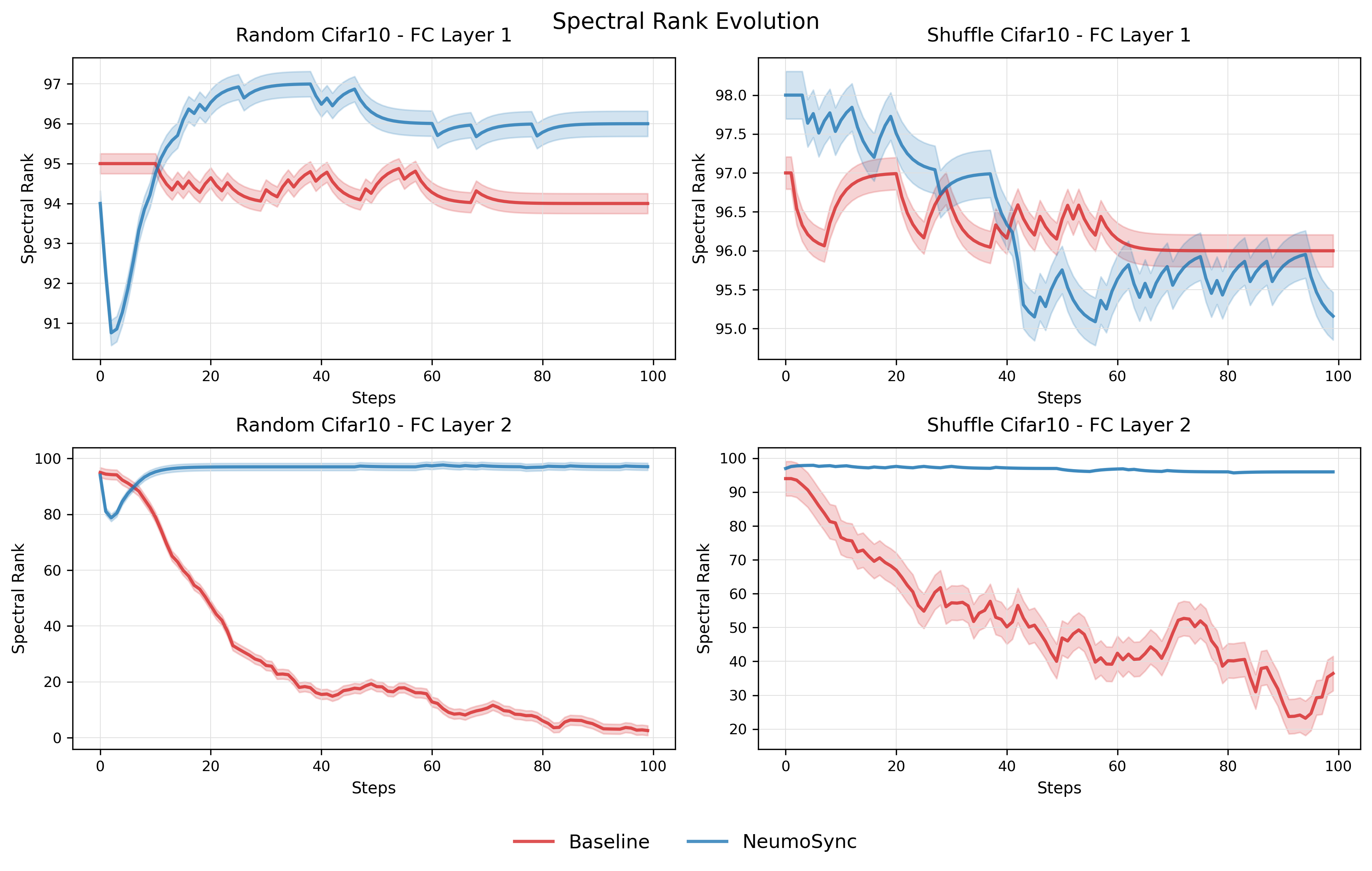}
  \caption{Spectral rank of weights across two fully connected layers and two benchmarks is shown. The figure demonstrates that \neumosync maintains a high and stable spectral rank, particularly in the second layer, indicating diverse and non-degenerate representations. In contrast, the baseline model shows a sharp decline in spectral rank, especially in the second layer, suggesting a progressive loss of representational capacity and adaptability during training.}
  \label{fig:app_srank}
\end{figure}

\begin{figure}[htb]
  \centering
  \includegraphics[width=0.85\columnwidth,clip,trim=0 0 0 0]{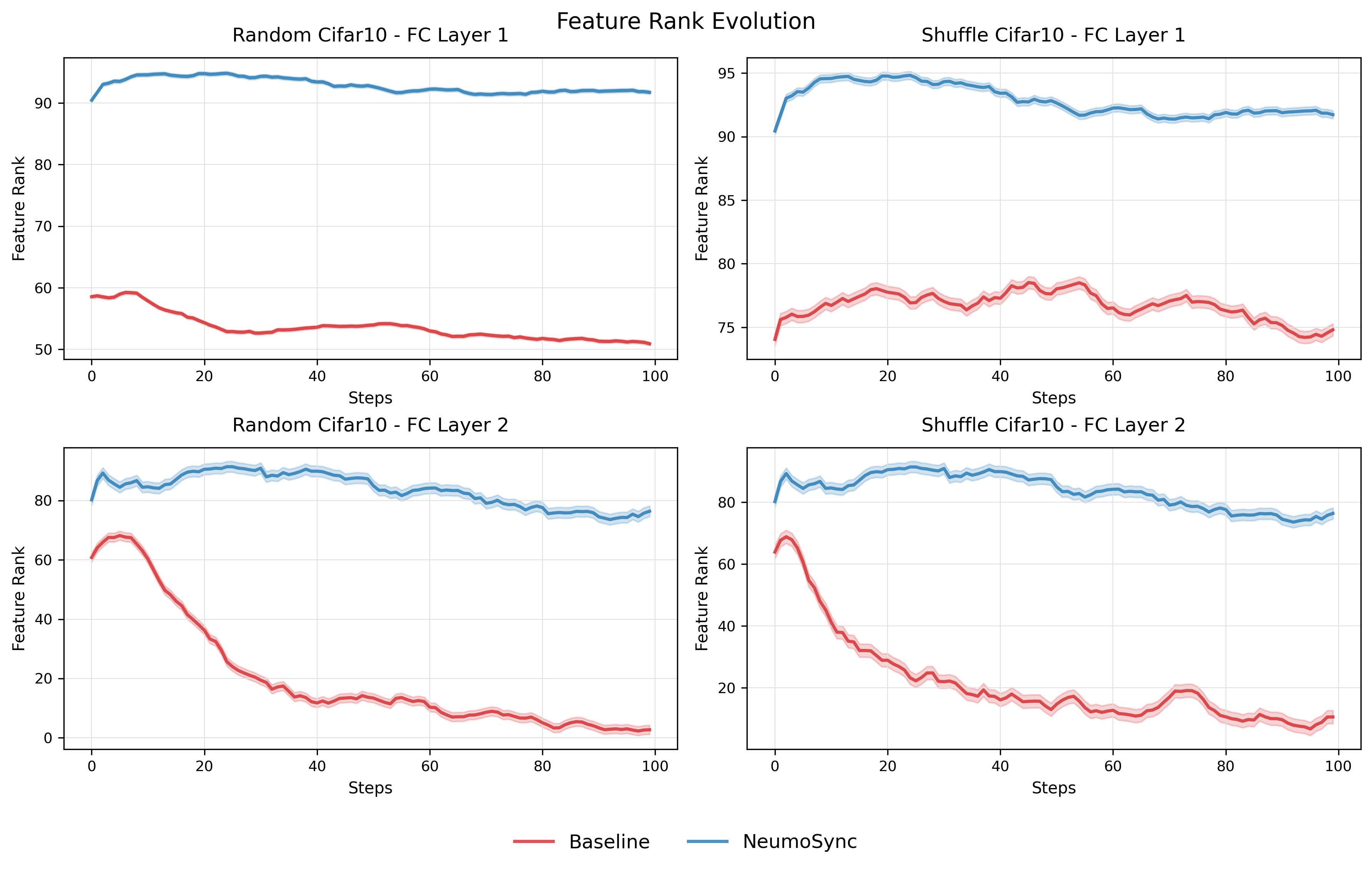}
  \caption{Feature rank across two fully connected layers and two benchmarks. \neumosync maintains a consistently high feature rank throughout training, reflecting its ability to represent rich and complex features across tasks. In contrast, the baseline model shows a steady decline in feature rank, most prominently in the second layer, indicating a progressive loss of representational capacity and diversity.}
  \label{fig:app_fea_rank}
\end{figure}

\begin{figure}[htb]
  \centering
  \includegraphics[width=0.85\columnwidth,clip,trim=0 0 0 0]{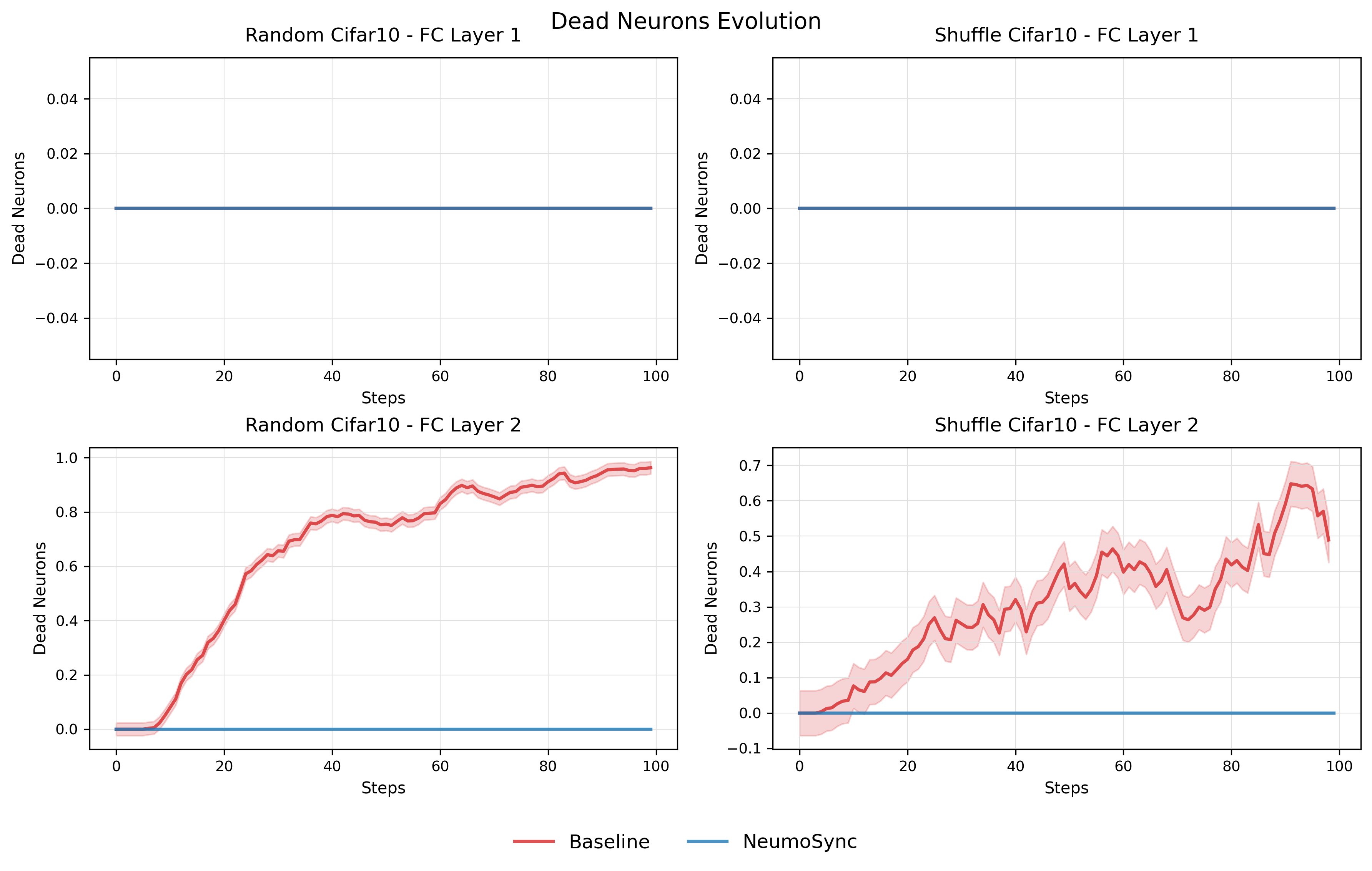}
  \caption{Dead neuron ratio across two fully connected layers and two benchmarks. \neumosync consistently maintains a near-zero ratio of dead neurons in both layers, preventing saturation and ensuring continued plasticity. In contrast, the baseline model exhibits a steadily increasing fraction of dead neurons, particularly in the second layer, ultimately leading to widespread neuron inactivity and loss of adaptability.}
  \label{fig:app_dormant_ne}
\end{figure}

Our analysis, based on measurements from the Shuffle CIFAR-10 and Random-label MNIST benchmarks, reveals the following.

\paragraph{Sustained Learning Activity.}
As illustrated in Figure~\ref{fig:app_l1_norm}, the $\ell_2$ norm of the \mainnetwork's weights exhibits continual growth under \neumosync's control. In contrast, the baseline's weight norms quickly saturate and plateau. This divergence indicates that the \mainnetwork remains dynamic and continues to incorporate new information from incoming tasks, whereas the baseline network enters a state of effective learning stagnation.

\paragraph{Preservation of Representational Richness.}
The stability of \neumosync is further evidenced by its ability to maintain high-dimensional, non-degenerate representations. As shown in Figures~\ref{fig:app_srank} and~\ref{fig:app_fea_rank}, both the S-rank of the weights and the rank of the features remain consistently high throughout training. Conversely, the baseline suffers a progressive decline in both metrics, particularly in the second layer, a clear sign of representational collapse, where the network loses its capacity to learn diverse features.

\paragraph{Mitigation of Neuron Dormancy.}
A primary contributor to plasticity loss is neuron saturation. Figure~\ref{fig:app_dormant_ne} demonstrates that \neumosync effectively mitigates this issue, maintaining a consistently low fraction of dormant neurons. The unmodulated baseline, however, exhibits a classic failure mode: a monotonic increase in dormant units across tasks, a clear manifestation of the "dormant neuron problem"~\citep{sokar2023dormant}, which is known to severely impede plasticity.

Together, these observations demonstrate that \neumosync maintains stable learning dynamics and avoids common pitfalls such as neuron saturation, degenerate features, or unstable gradient behavior, further confirming the robustness of our approach under varied training conditions.

\subsection{A Discussion of the Relationship Between $\alpha_{\text{ARM}}$ and the Tonic Behavior of Neurons}\label{app:tonic_arm}

To more closely examine the behavior of $\alpha_{\text{ARM}}$ and its effect on neuron activity, we consider two tasks: \texttt{Random-label CIFAR10} and \texttt{Shuffle CIFAR10}. The heatmaps of this modulator for each layer, along with the corresponding neuron activations in the \inferencenetwork, are shown in Figures~\ref{fig:arm_random} and \ref{fig:arm_shuffle}.

\begin{figure}[htbp]
    \centering

    \begin{subfigure}{\linewidth} 
        \centering
        \includegraphics[width=\linewidth]{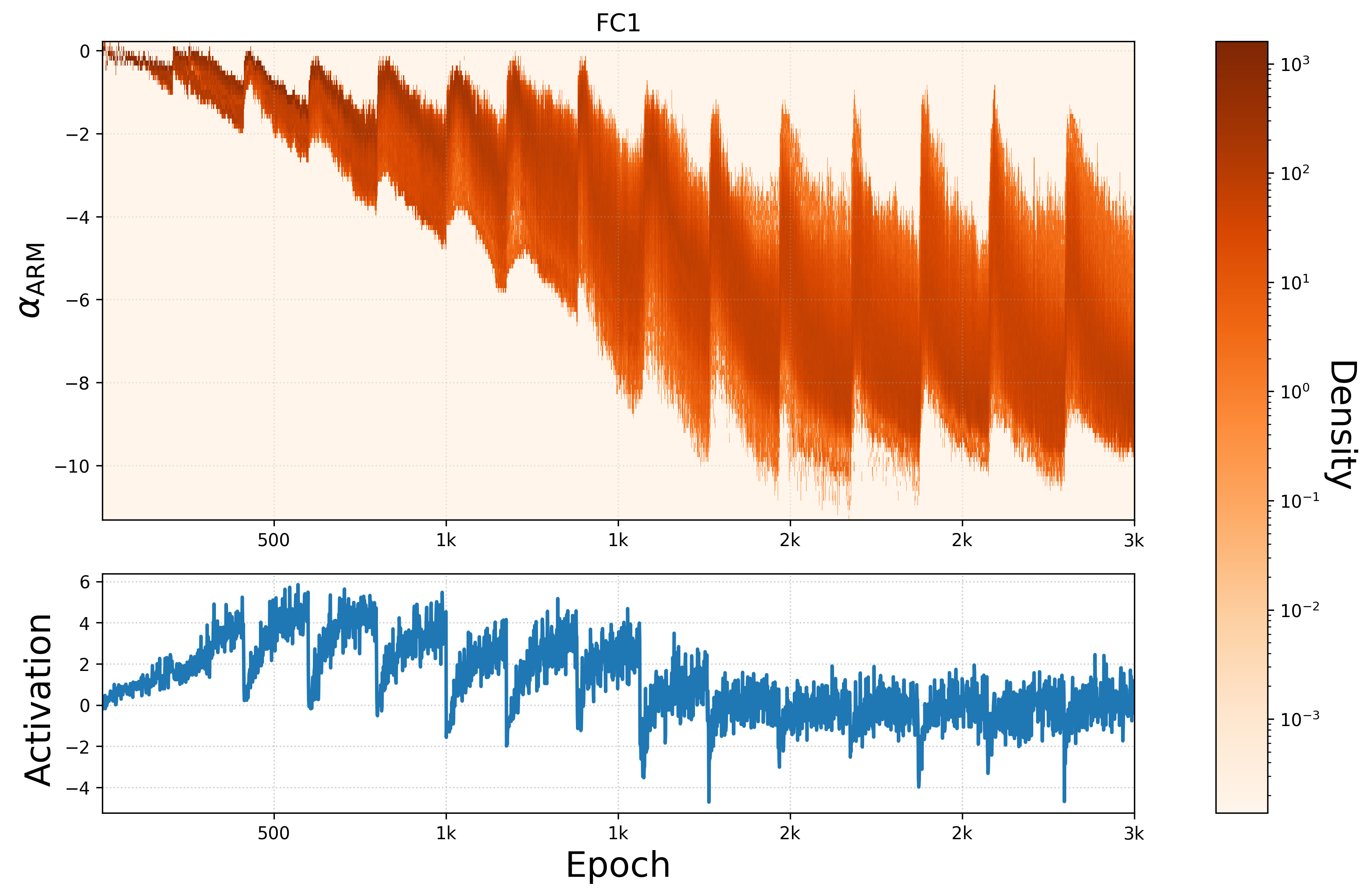}
    \end{subfigure}

    \vspace{0.5em} 

    \begin{subfigure}{\linewidth}
        \centering
        \includegraphics[width=\linewidth]{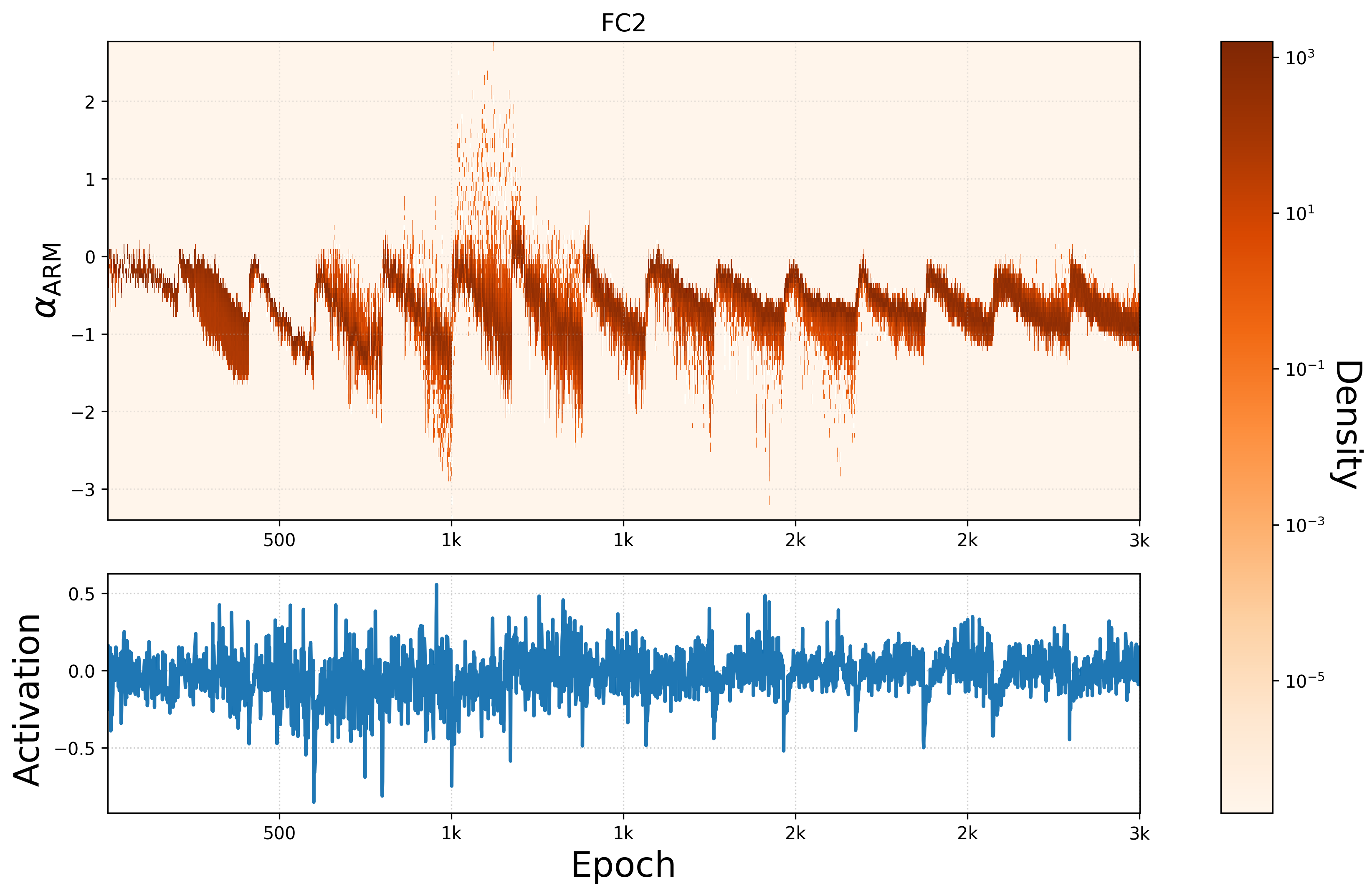}
    \end{subfigure}

    \caption{$\alpha_{\text{ARM}}$ and activation of \inferencenetwork in both first and second layer of MLP in \texttt{Random-label CIFAR10} benchmark.}
    \label{fig:arm_random}
\end{figure}

\begin{figure}[htbp]
    \centering

    \begin{subfigure}{\linewidth} 
        \centering
        \includegraphics[width=\linewidth]{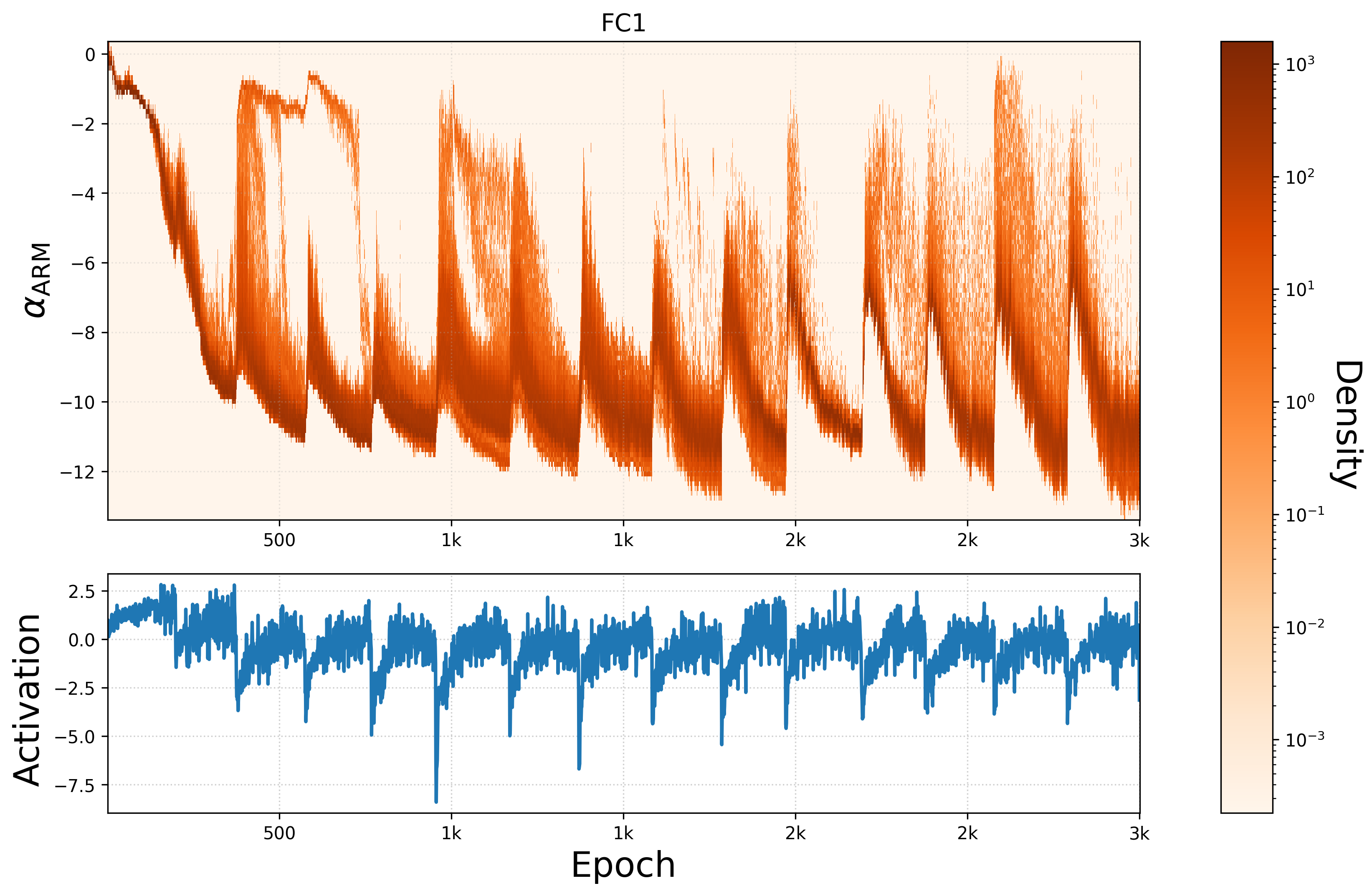}
    \end{subfigure}

    \vspace{0.5em} 

    \begin{subfigure}{\linewidth}
        \centering
        \includegraphics[width=\linewidth]{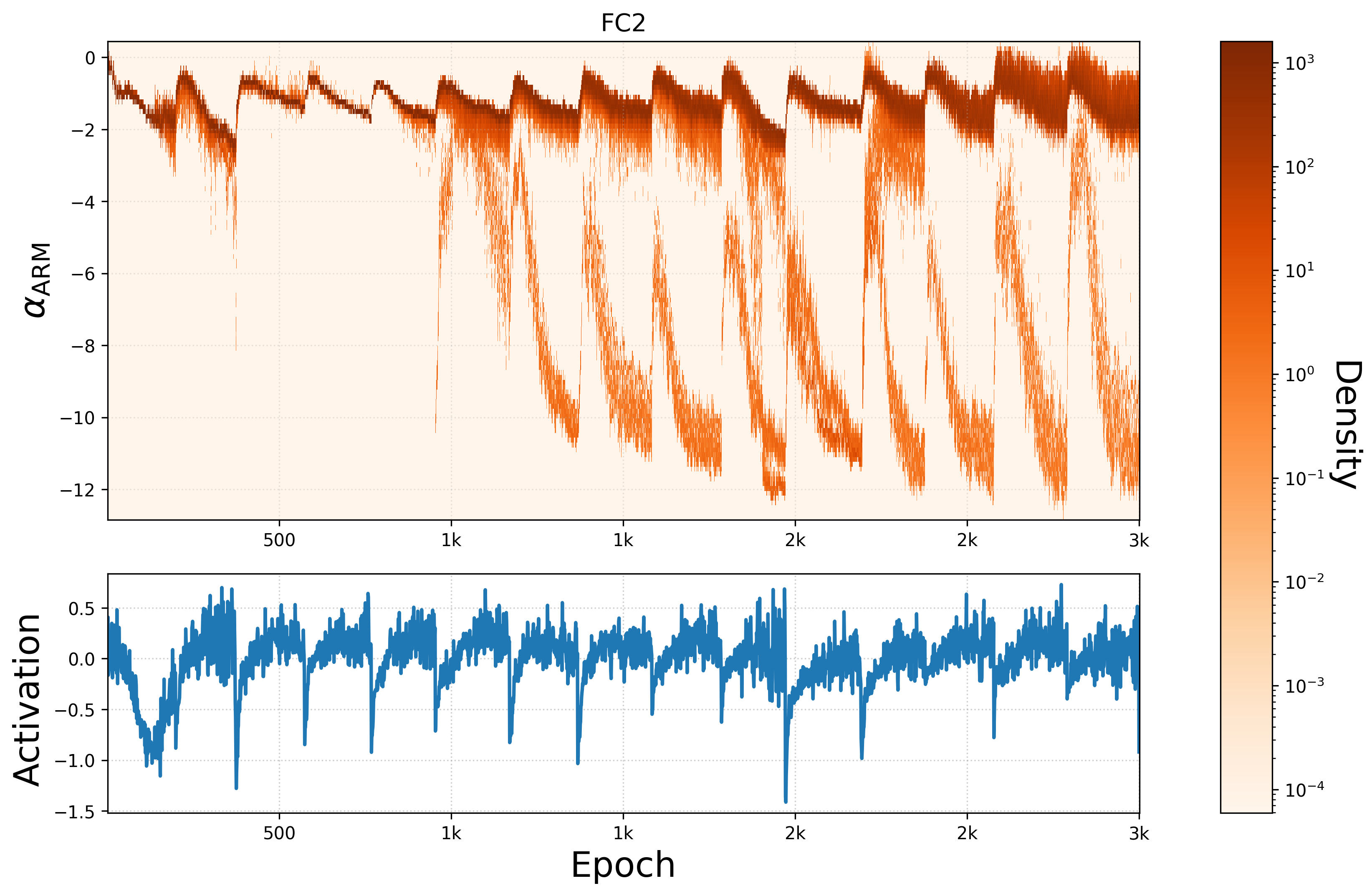}
    \end{subfigure}

    \caption{$\alpha_{\text{ARM}}$ and activation of \inferencenetwork in both first and second layer of MLP in \texttt{Shuffle CIFAR10} benchmark.}
    \label{fig:arm_shuffle}
\end{figure}

A noticeable pattern, consistent across both benchmarks, is that $\alpha_{\text{ARM}}$ increases at the beginning of each task and then gradually returns to a more stable value as training progresses. This temporal profile bears a qualitative resemblance to the modulatory influence attributed to neuromodulators such as norepinephrine (NE)~\citep{sara2009locus}, where transient increases in NE have been reported in response to novel or unexpected stimuli, followed by a decline due to diffusion and regulatory mechanisms. In biological systems, these transient increases are thought to affect the firing threshold of neurons expressing the appropriate receptors~\citep{sara2009locus, berridge2003locus}, thereby changing their tonic firing rate even in the absence of strong sensory input. Interestingly, in our method, we observe a speculative reminiscent pattern: as shown in the activation panels of Figures~\ref{fig:arm_random} and \ref{fig:arm_shuffle}, neurons exhibit a temporary decrease in activity following the initial rise in $\alpha_{\text{ARM}}$. As training proceeds and the task becomes more familiar, $\alpha_{\text{ARM}}$ gradually decrease while the activity of FC1 and FC2 gradually increase toward a more stable regime across both benchmarks, presenting an interesting parallel to the novelty-dependent dynamics observed in some neuromodulatory systems.

To further contextualize this observation, it is useful to recall how neuromodulatory signals operate in biological circuits. In biological systems, phasic NE release is often associated with a change in tonic firing in many neurons~\citep{zhang2013norepinephrine}. However, its effects are highly cell-type and receptor-specific and therefore do not uniformly elevate tonic activity~\citep{radnikow2018layer}. In our method, we observe a loosely similar behavior: the values of $\alpha_{\text{ARM}}$ vary across and within layers, yet they tend to induce a net positive offset in the state of many neurons.

It is also worth noting that this pattern contrasts with the behavior of $\alpha_{\text{SM}}$, which shows a reduction in magnitude at the onset of each task. This decrease, accompanied by a reduction in synaptic plasticity and an effective downscaling of the learning rate, might be in response to unexpected negative outcomes induced by distribution shifts (as reflected in drops in accuracy), as discussed in Section~\ref{sec:emerg}.

\subsection{Effect of $\alpha_{\text{AL}}$ on Gradient Flow and Trainability}\label{app:no_al_abl}

As shown in~\citep{razavi2025preserving}, adaptive linearity (implemented as PReLU when using ReLU as the base activation function) leads to improved gradient flow and, consequently, better trainability, even in continual learning settings. This result is consistent with other studies that aim to embed an approximately linear pathway within deep neural networks~\citep{lewandowski2024plastic}.

To more directly investigate the effect of this modulatory signal in our method on gradient flow, we consider a variant of our method in which this mechanism is disabled by reverting to standard ReLU activations. We evaluate this variant on two benchmarks, \texttt{Random-label CIFAR10} and \texttt{Shuffle CIFAR10}, and monitor both the gradient magnitude and the fraction of weights receiving zero gradient in each layer. The results of this ablation study are presented in Figure~\ref{fig:no_al_ablation}. 

\begin{figure}[!h]
  \centering
  \includegraphics[width=1\columnwidth,clip,trim=0 0 0 0]{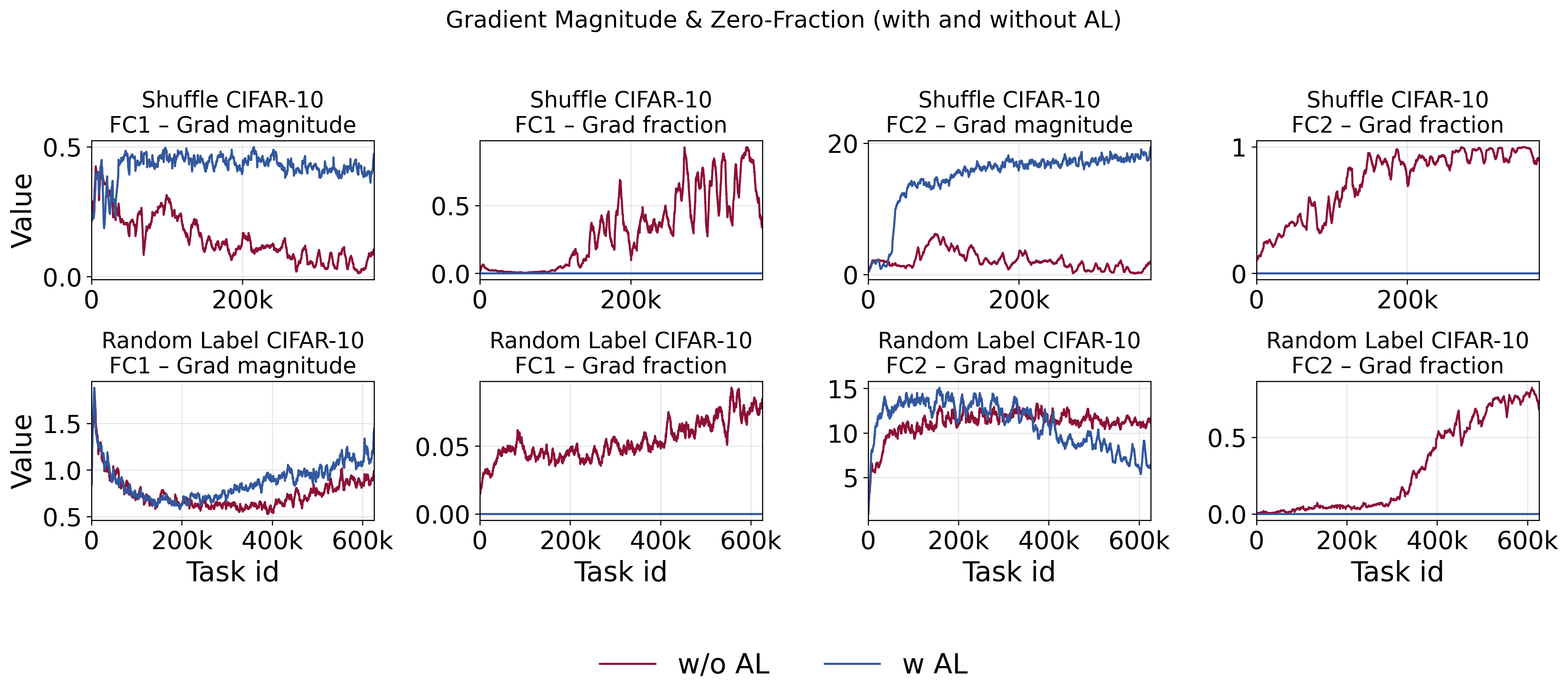}
  \caption{Effect of $\alpha_{\text{AL}}$ modulation on gradients in each layer for two benchmarks, \texttt{Random-label CIFAR10} and \texttt{Shuffle CIFAR10}. Both the gradient magnitude and the fraction of zero gradients are shown.}
  \label{fig:no_al_ablation}
\end{figure}

The results show that, when $\alpha_{\text{AL}}$ is disabled, the gradient magnitude decreases and the fraction of zero gradients increases across layers, consistently on both benchmarks. In contrast, when this mechanism is enabled, we observe almost no parameters with zero gradient. This finding is also consistent with Section~\ref{sec:ablation}, where removing this mechanism led to a loss of both plasticity and trainability.

\subsection{A Discussion on the Dynamics of \inferencenetwork and Adaptability}\label{app:explain}

As shown in Section~\ref{sec:exp}, our method strongly preserves plasticity, with particularly large gains on memorization-style tasks. In this section, we analyze the mechanisms underlying this behavior. Specifically, we track the cosine similarity of parameters over time for the three networks, \inferencenetwork, \consolidatednetwork, and \mainnetwork, and report the associated modulation coefficients (with emphasis on $\alpha_{\text{SM}}$ and $\alpha_{\text{WC}}$) that determine the final \inferencenetwork. When informative, we also include the $\ell_2$ norms of the relevant weight vectors. Based on these measurements, we propose a hypothesis explaining why our method adapts especially quickly on these tasks. Throughout this analysis, we use \texttt{Random-Label CIFAR10} as a representative benchmark.

To adapt to a new task with a shifted data distribution, a classifier must update its weights. Figure~\ref{fig:scratch_crelu} visualizes this process by reporting the cosine similarity between the weights of corresponding layers across consecutive tasks (at the end of training in each task) for two methods: \texttt{CReLU} (which was the second-best method in this benchmark according to Figure \ref{fig:normal_train}) and \texttt{Scratch}. The difference is qualitative. For \texttt{CReLU}, the first-layer parameters become increasingly similar across tasks as training progresses, whereas training the model from scratch at each task yields consistently low similarity in the first layer. This might indicate that, in addition to preventing loss of plasticity, \texttt{CReLU} is also leveraging
knowledge transfer across tasks by learning a shared representation in the first layer, while \texttt{Scratch} reinitializes parameters and therefore cannot reuse representations. Based on this, we want to investigate wether the final classification network produced by our method, \inferencenetwork, exhibits this transfer behavior as well.

\begin{figure}[h!]
    \centering
    \begin{subfigure}{0.48\columnwidth}
        \centering
        \includegraphics[width=\linewidth]{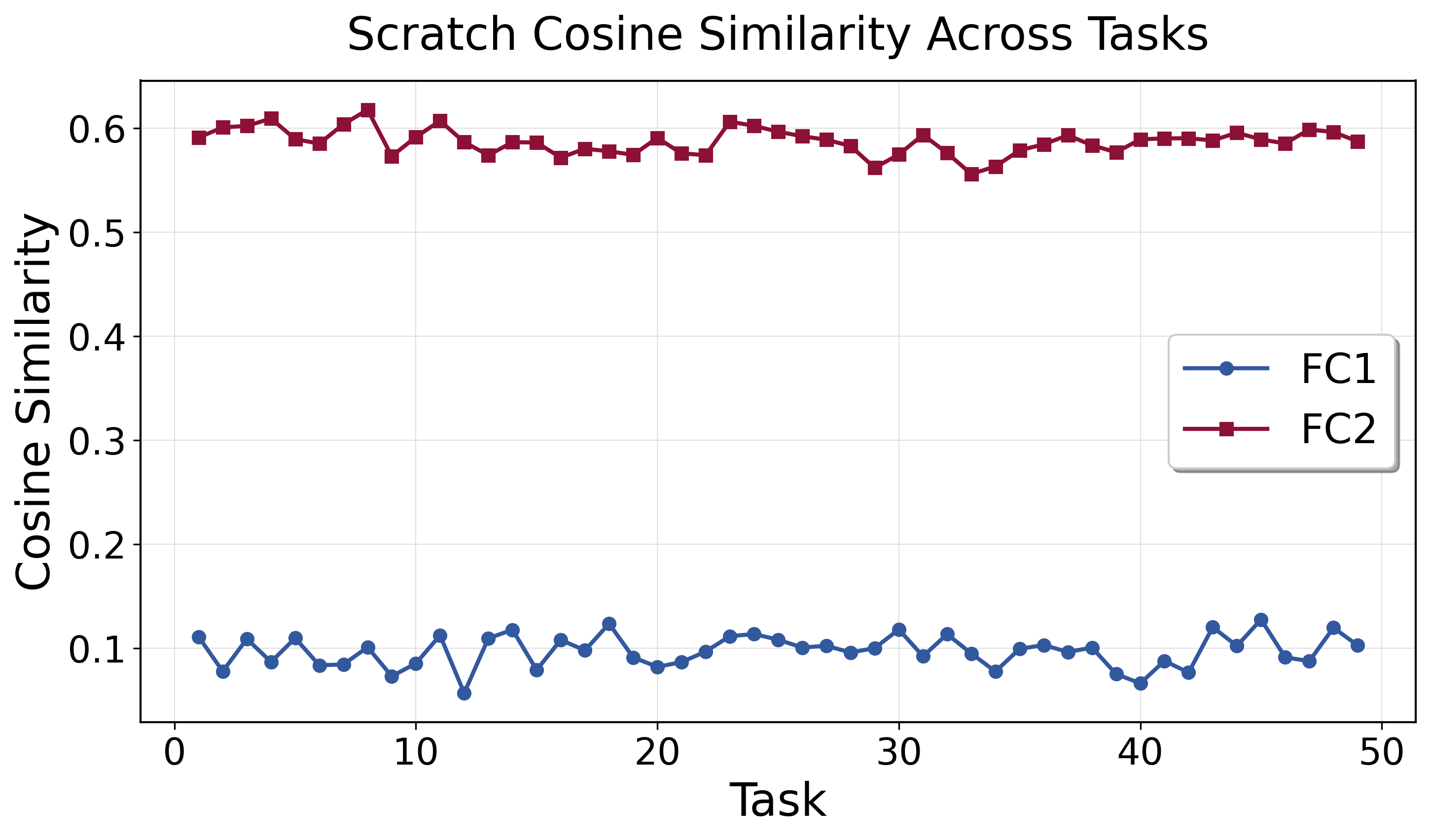}
    \end{subfigure}
    \hfill
    \begin{subfigure}{0.48\columnwidth}
        \centering
        \includegraphics[width=\linewidth]{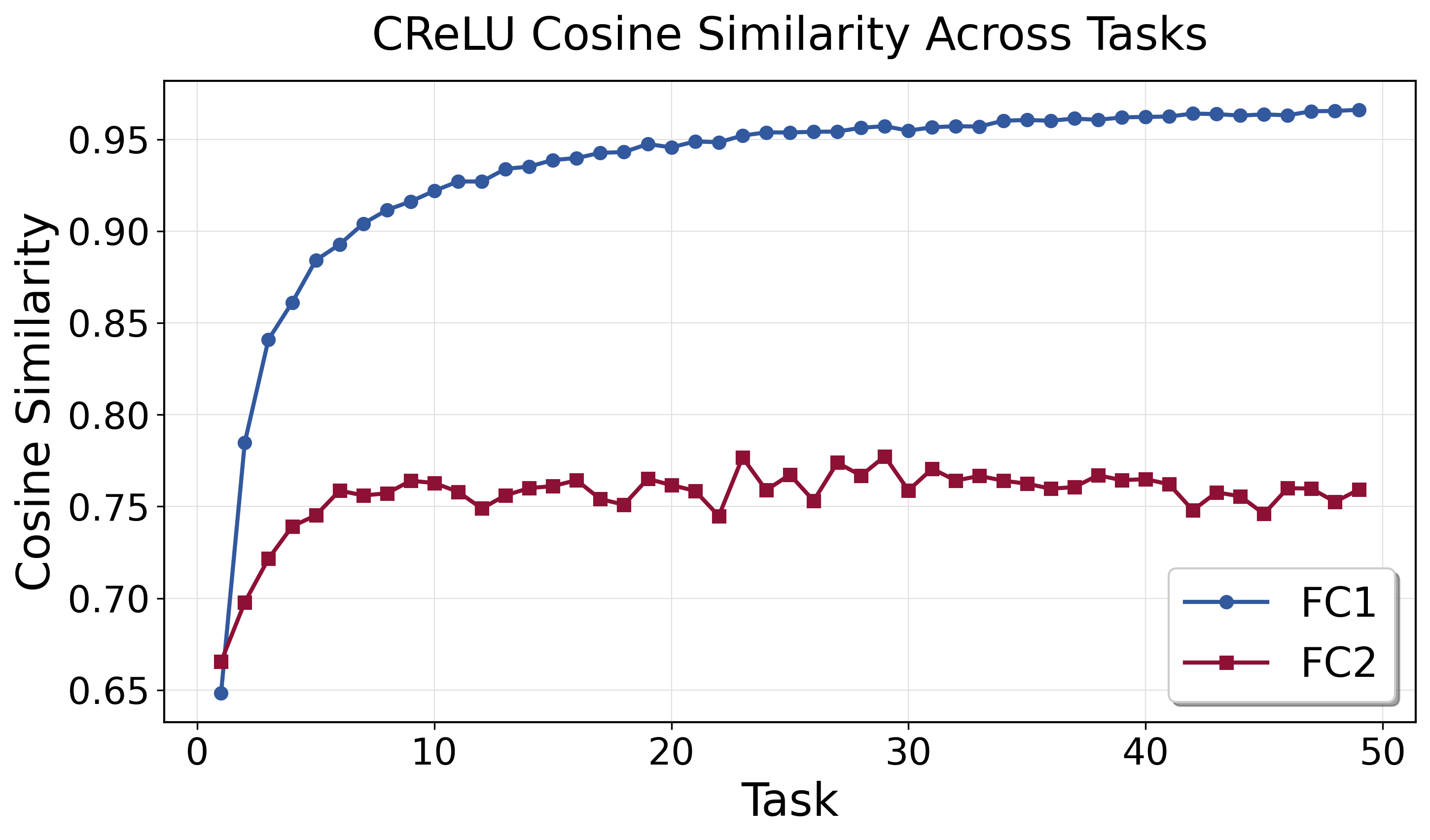}
    \end{subfigure}

    \caption{Cosine similarity between weights of the same layers across consecutive tasks,  shown separately for \texttt{Scratch} and \texttt{CReLU}. Both methods exhibit changes in parameter direction, but \texttt{Scratch} shows a qualitatively different pattern due to the absence of knowledge transfer.}
    \label{fig:scratch_crelu}
\end{figure}

To test this, we compute the cosine similarity of the layers of the \inferencenetwork across consecutive tasks (Figure~\ref{fig:inefer_time}). We observe a similar pattern as \texttt{CReLU}: the first layer becomes more aligned across tasks over time. In contrast, the second layer remains relatively dissimilar, indicating that it continues to reorient to each new task, consistent with the strong task-wise performance. Since the \inferencenetwork is formed by combining multiple components, we next analyze each component to identify which terms drive this behavior.

\begin{figure}[!h]
  \centering
  \includegraphics[width=0.8\columnwidth,clip,trim=0 0 0 0]{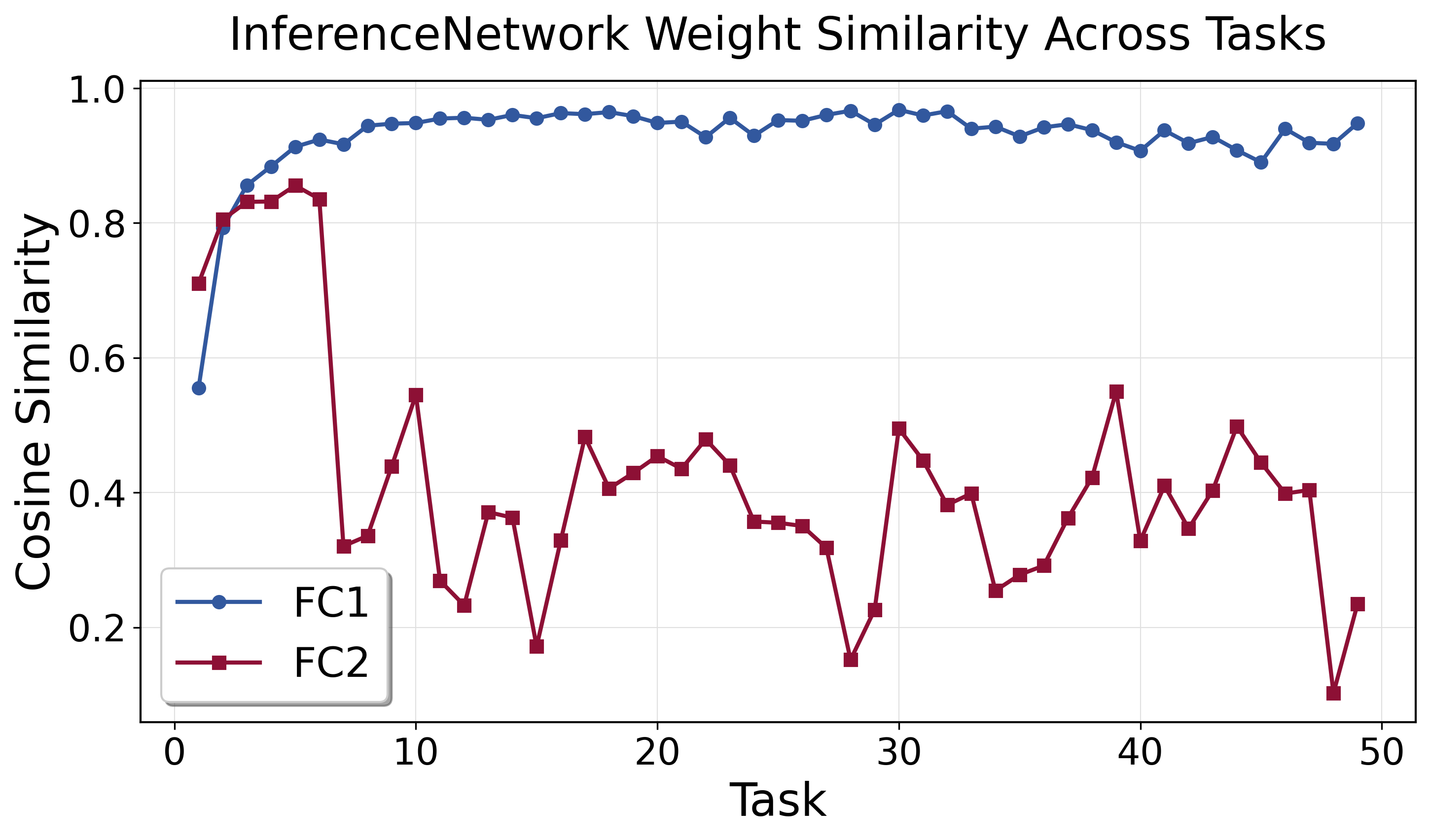}
  \caption{
A similar change in parameter direction to that observed in \texttt{CReLU} appears in \inferencenetwork: the first layer becomes increasingly similar over training, while the second layer’s parameter directions change substantially from one task to the next.}
  \label{fig:inefer_time}
\end{figure}

We first examine the \mainnetwork. Surprisingly, the directions of the weights in both of the layers change very little across continual training (Figure~\ref{fig:main_cosine}), which contrasts with \texttt{CReLU}, where the second layer of the network shifts direction to accommodate new tasks. Accordingly, since the \consolidatednetwork is an EMA of the \mainnetwork, its weights should converge to the same directions as the weights of the \mainnetwork. This is confirmed in Figure~\ref{fig:main_vs_cons}: both the across-networks cosine similarity and the $\ell_2$ distance show that \mainnetwork and \consolidatednetwork converge to highly similar (though not identical) parameters, particularly in the second layer, which is especially interesting given its changing direction in \inferencenetwork. Please note that, even in this case, the \consolidatednetwork is still aimed at slowly accumulating general knowledge across tasks (due to the high value of $\beta$); however, because task-specific knowledge remains highly similar from one task to the next in terms of parameter directions, the \consolidatednetwork also converges to essentially the same direction.

\begin{figure}[!h]
  \centering
  \includegraphics[width=0.8\columnwidth,clip,trim=0 0 0 0]{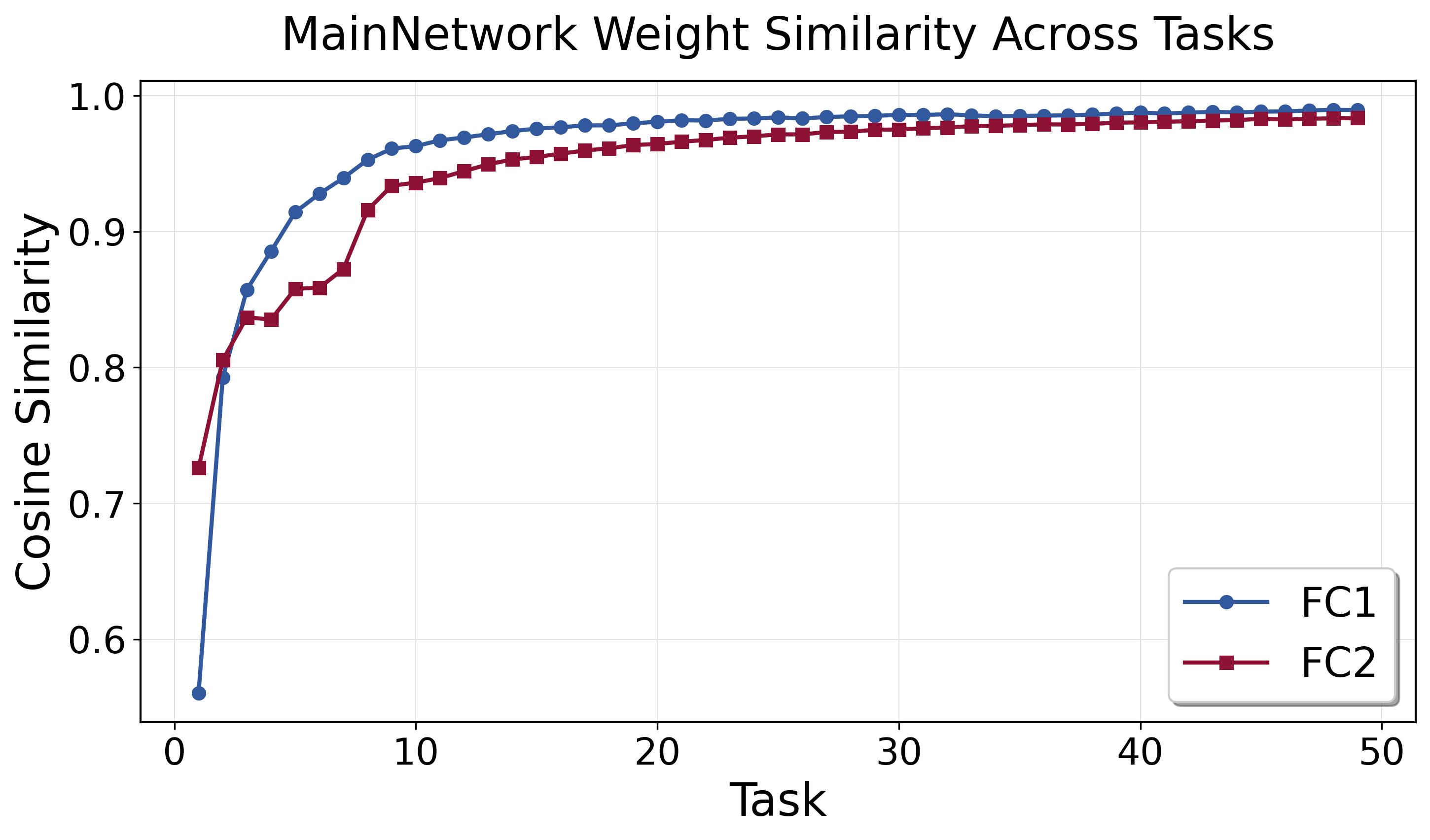}
  \caption{Both layers of \mainnetwork become increasingly similar in direction over time, suggesting that the weight directions remain effectively constant across tasks and that the gradients are largely aligned throughout training.}
  \label{fig:main_cosine}
\end{figure}

Since the \mainnetwork and consequently the \consolidatednetwork have the same direction over consequent tasks (but not in the direction of their corresponding \inferencenetwork), this might suggest that the modulation parameters as likely drivers of adaptation. We therefore group the layer-wise modulation parameters (all $\alpha_{\text{WC}}$ and $\alpha_{\text{SM}}$ values for neurons in a layer averaged over a batch of samples at the end of each task) and compute their cosine similarity across consecutive tasks for each layer, shown in Figure~\ref{fig:wc_sm_cosine}. Again, the result is unexpected: in the second layer, the modulation vectors are almost perfectly aligned across tasks (cosine similarity $\approx 1$). At first glance, this seems incompatible with the observed task-to-task directional changes in the \inferencenetwork. However, looking only at directions hides an important aspect. The $\ell_2$ norms may capture an important part of the story: Figure~\ref{fig:norms} shows that the magnitudes of \mainnetwork, $\alpha_{\text{WC}}$, and $\alpha_{\text{SM}}$ all change over time, even when their directions remain relatively fixed. This suggests a possible explanation: under certain conditions, changing magnitudes alone can induce large directional changes in the resulting combined weights. We now state a lemma that will help us derive a closed-form expression for the angle between $\theta$ (the weights of the \mainnetwork) and $w$ (the weights of the \inferencenetwork), and provide evidence for our hypothesis. 

\begin{lemma}\label{lemma:orth}
Let $x \in \mathbb{R}^n$ with $x \neq 0$, and let $y \in \mathbb{R}^n$ be a unit vector such that $x$ is not a scalar multiple of $y$.  
Let $\gamma \in (0,\pi)$ be the angle between $x$ and $y$, defined by
\[
\cos (\gamma) \;=\; \frac{\langle x,y\rangle}{\|x\|}.
\]
Then there exists a unit vector $z \in \mathbb{R}^n$ such that
\[
z \perp y
\quad\text{and}\quad
x = \|x\|\bigl(\cos(\gamma)\, y + \sin(\gamma)\, z\bigr).
\]
In particular, $y$ and $z$ form an orthonormal set and $x \in \operatorname{span}\{y,z\}$. The result follows immediately from the Gram-Schmidt process\citep{golub2013matrix}.
\end{lemma}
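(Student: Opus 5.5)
The plan is a single Gram–Schmidt step: split $x$ into its component along $y$ and the residual orthogonal to $y$, normalize the residual to get $z$, and then identify the two coefficients with $\|x\|\cos(\gamma)$ and $\|x\|\sin(\gamma)$.

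\emph{Decomposition.} Define the orthogonal residual
\[
r \;=\; x - \langle x,y\rangle\, y .
\]
Since $\|y\|=1$, we have $\langle r,y\rangle = \langle x,y\rangle - \langle x,y\rangle\|y\|^2 = 0$, so $r\perp y$. Moreover $r\neq 0$: if $r=0$, then $x=\langle x,y\rangle y$ would be a scalar multiple of $y$, which the hypothesis excludes. We may therefore set $z = r/\|r\|$, a unit vector orthogonal to $y$. This gives $x = \langle x,y\rangle y + \|r\| z$, so $\{y,z\}$ is orthonormal and $x\in\operatorname{span}\{y,z\}$.

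\emph{Coefficients.} By the definition of $\gamma$, $\langle x,y\rangle = \|x\|\cos(\gamma)$. For the second coefficient, Pythagoras applied to the orthogonal decomposition gives
\[
\|r\|^2 = \|x\|^2 - \langle x,y\rangle^2 = \|x\|^2\bigl(1-\cos^2(\gamma)\bigr) = \|x\|^2\sin^2(\gamma).
\]
Because $\gamma\in(0,\pi)$, $\sin(\gamma)>0$, so taking the positive square root yields $\|r\| = \|x\|\sin(\gamma)$. Substituting both coefficients into the decomposition gives $x = \|x\|\bigl(\cos(\gamma)\,y + \sin(\gamma)\,z\bigr)$, which is the claim.

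\emph{Main point of care.} The argument is routine; the only thing to check is that $\gamma$ really lies in $(0,\pi)$. Cauchy–Schwarz gives $|\langle x,y\rangle|\le \|x\|$, and equality holds exactly when $x$ is parallel to $y$. Non-collinearity therefore gives $|\cos(\gamma)|<1$, so $\gamma\in(0,\pi)$ is well defined. This is what justifies taking the positive root $\sin(\gamma)>0$ and guarantees that $z$ is nonzero.
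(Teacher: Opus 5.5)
Your proof is correct and takes the same approach as the paper, which only says the result follows from a single Gram--Schmidt step. You write that step out explicitly, including two details the paper leaves implicit: that the residual $r$ is nonzero, and that $\sin(\gamma)>0$ justifies taking the positive square root.
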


\begin{figure}[h!]
    \centering
    \begin{subfigure}{0.48\columnwidth}
        \centering
        \includegraphics[width=\linewidth]{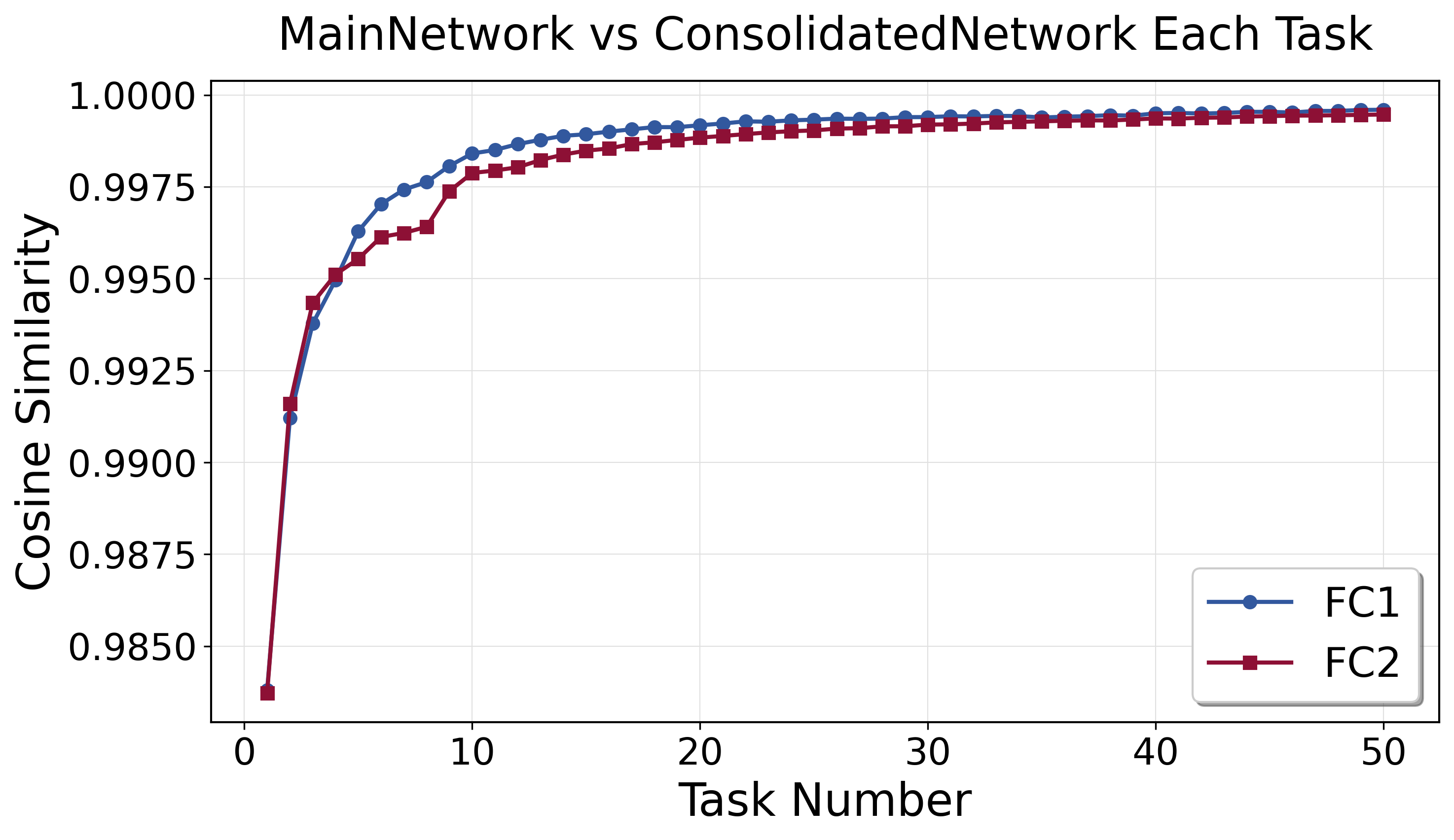}
    \end{subfigure}
    \hfill
    \begin{subfigure}{0.48\columnwidth}
        \centering
        \includegraphics[width=\linewidth]{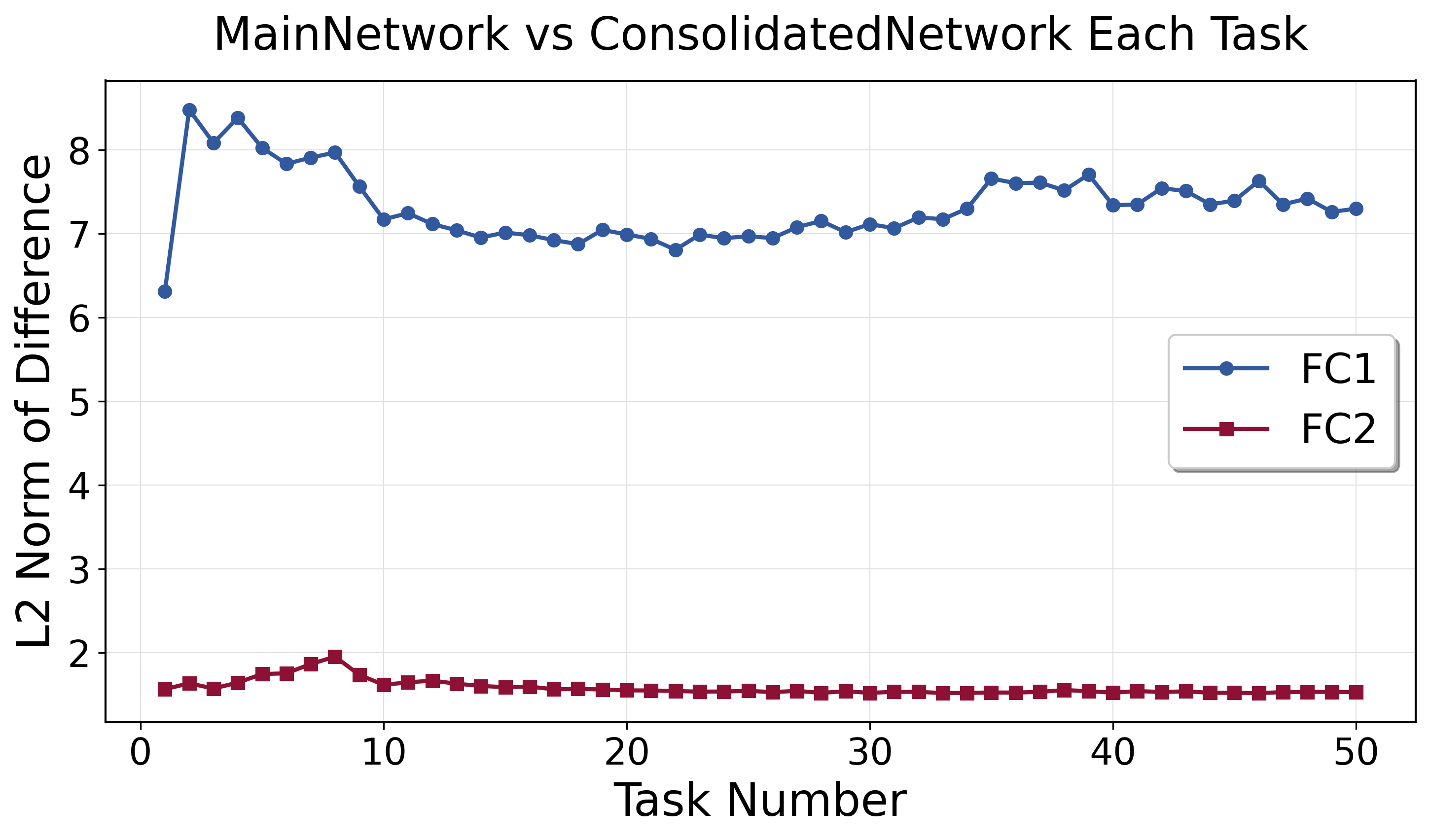}
    \end{subfigure}

    \caption{In terms of cosine similarity, the \mainnetwork and the \consolidatednetwork converge to nearly the same direction. In terms of magnitude, their difference is more pronounced in the first layer, whereas for the second layer (the one we focus on due to its changing direction in \inferencenetwork), the change in magnitude is very close to zero, implying that the two networks are extremely similar in that layer.}
    \label{fig:main_vs_cons}
\end{figure}

\begin{figure}[h!]
    \centering
    \begin{subfigure}{0.48\columnwidth}
        \centering
        \includegraphics[width=\linewidth]{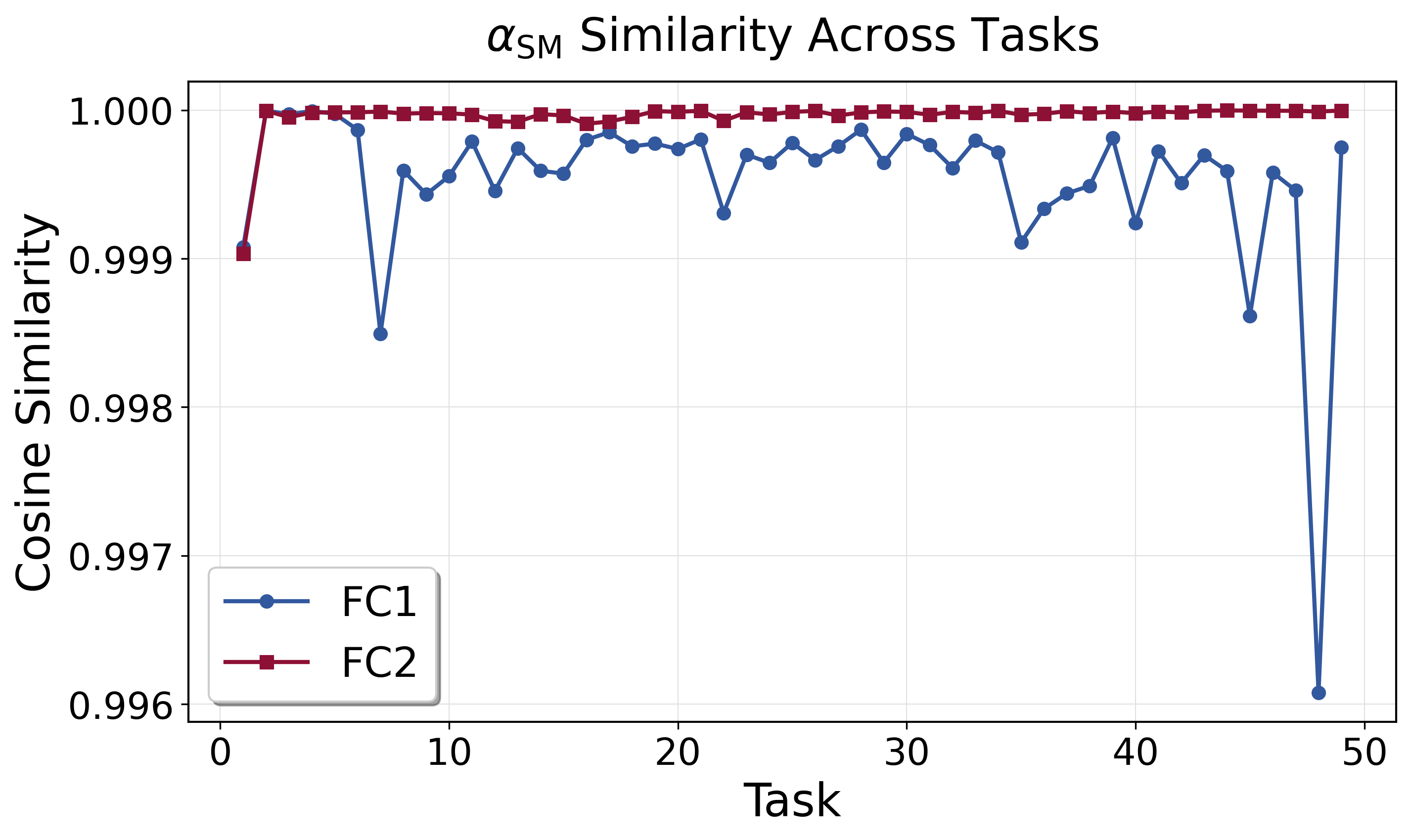}
    \end{subfigure}
    \hfill
    \begin{subfigure}{0.48\columnwidth}
        \centering
        \includegraphics[width=\linewidth]{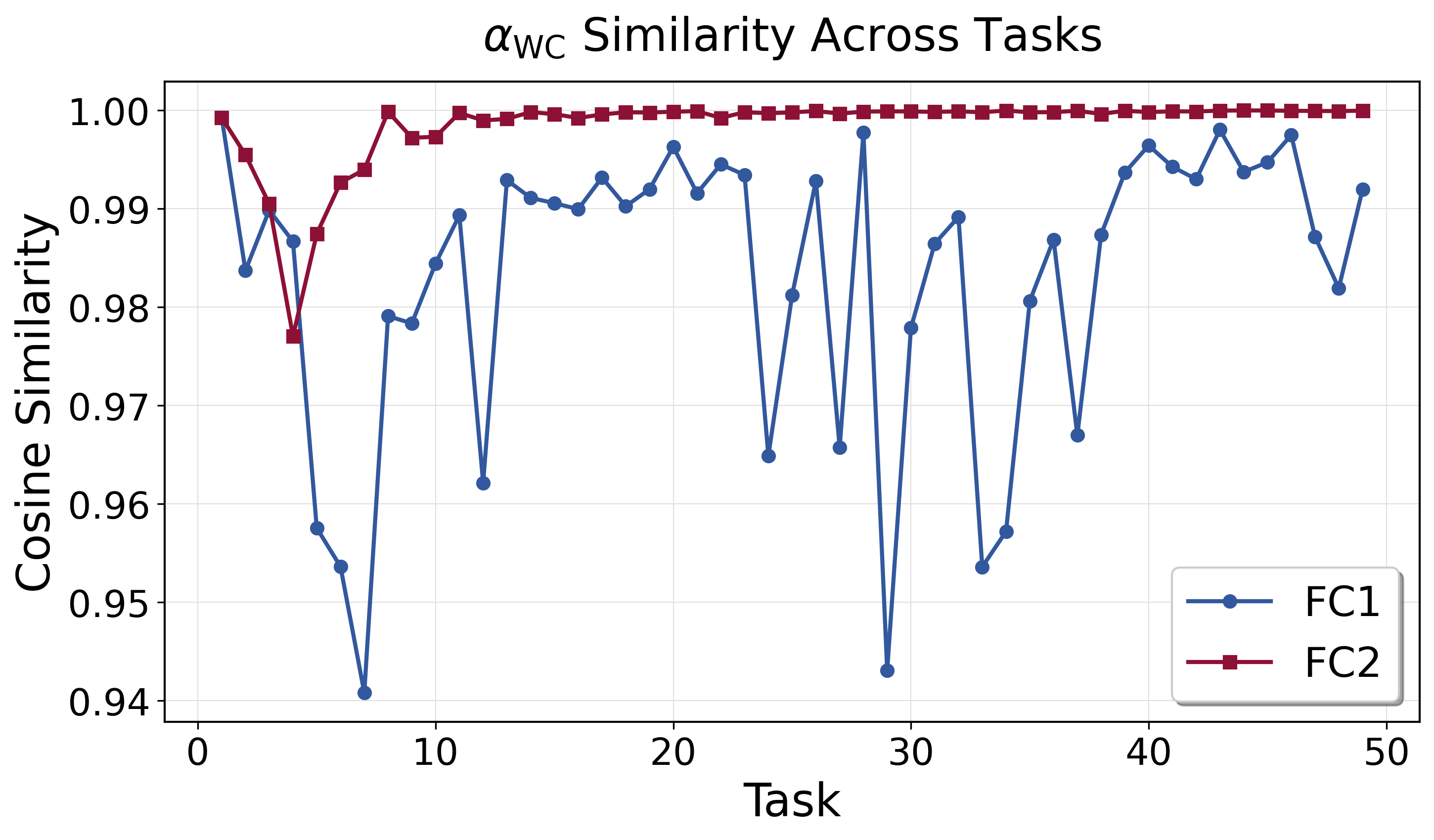}
    \end{subfigure}

    \caption{The direction of the vector of $\alpha_{\mathrm{WC}}$ values across neurons in each layer remains almost unchanged (cosine similarity close to 1), indicating that the modulation direction does not vary significantly from one task to the next. 
This effect is particularly pronounced in the second layer, where the cosine similarity of the modulation vector is very close to 1. Although the directions are very similar across tasks for both the modulation and the constituent networks, their magnitudes change over time, which can translate into changes in the direction of the final \inferencenetwork weight vectors.}
    \label{fig:wc_sm_cosine}
\end{figure}

\begin{figure}[h!]
    \centering

    \begin{subfigure}{0.32\columnwidth}
        \centering
        \includegraphics[width=\linewidth]{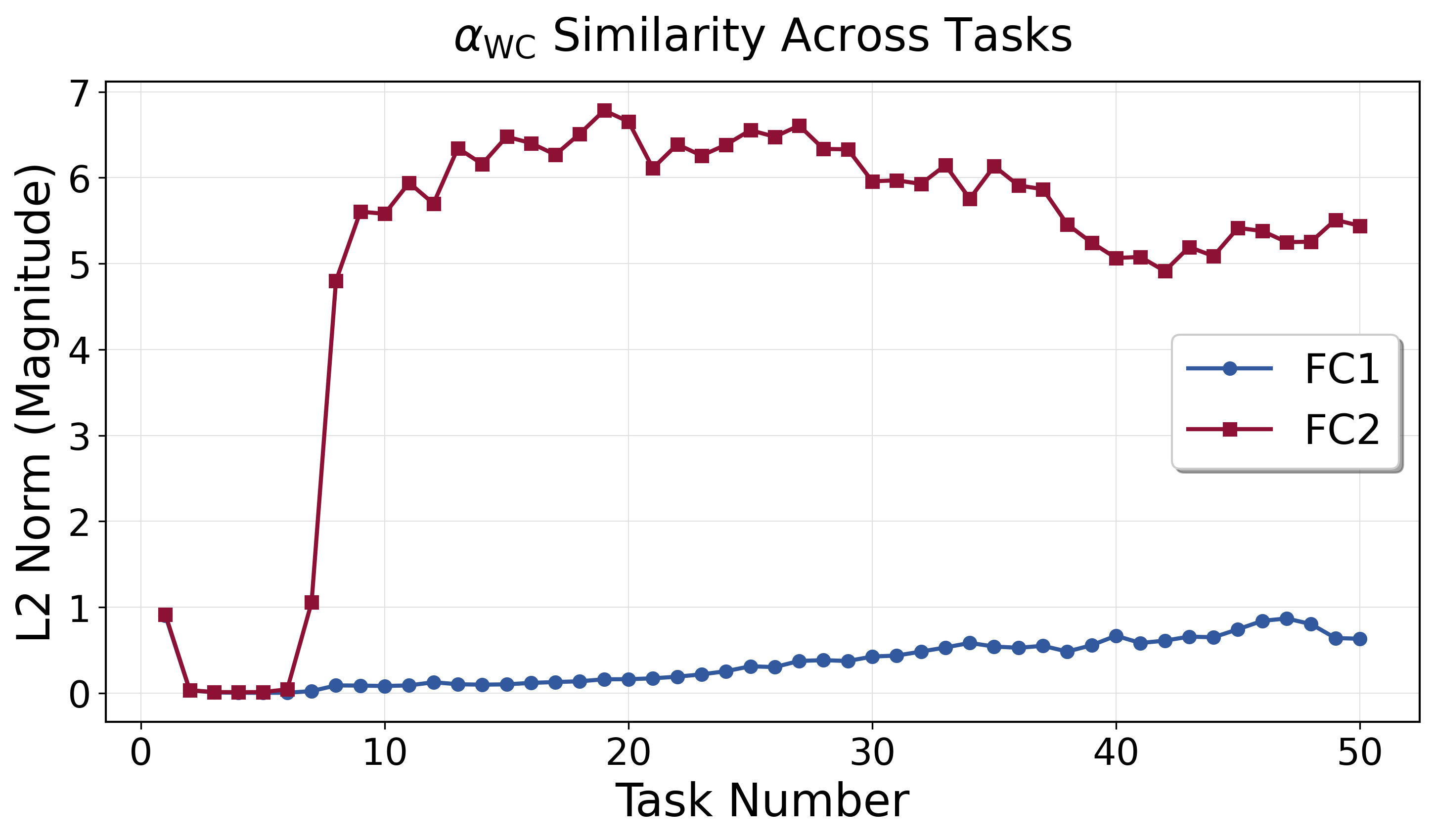}
    \end{subfigure}
    \hfill
    \begin{subfigure}{0.32\columnwidth}
        \centering
        \includegraphics[width=\linewidth]{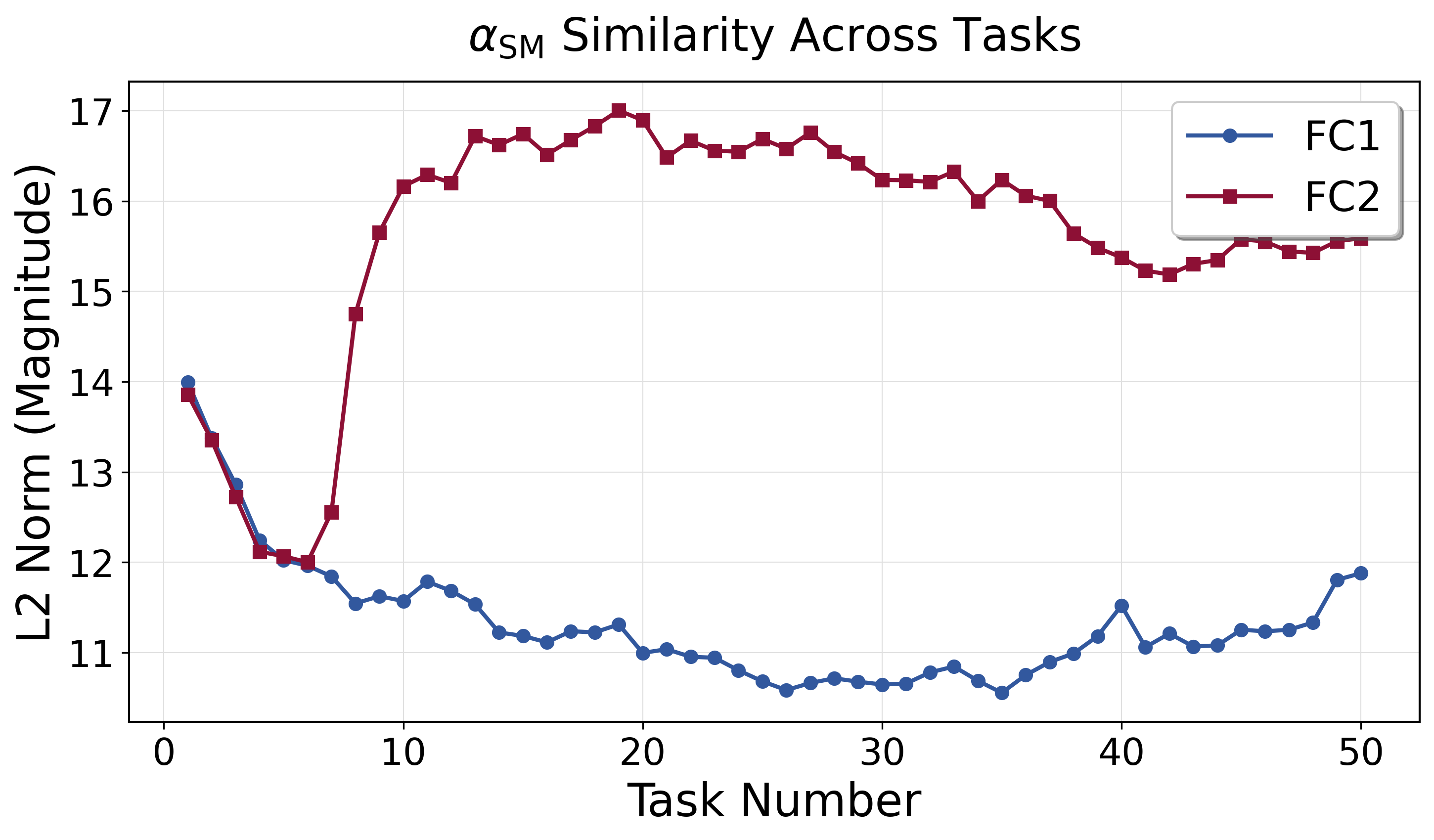}
    \end{subfigure}
    \hfill
    \begin{subfigure}{0.32\columnwidth}
        \centering
        \includegraphics[width=\linewidth]{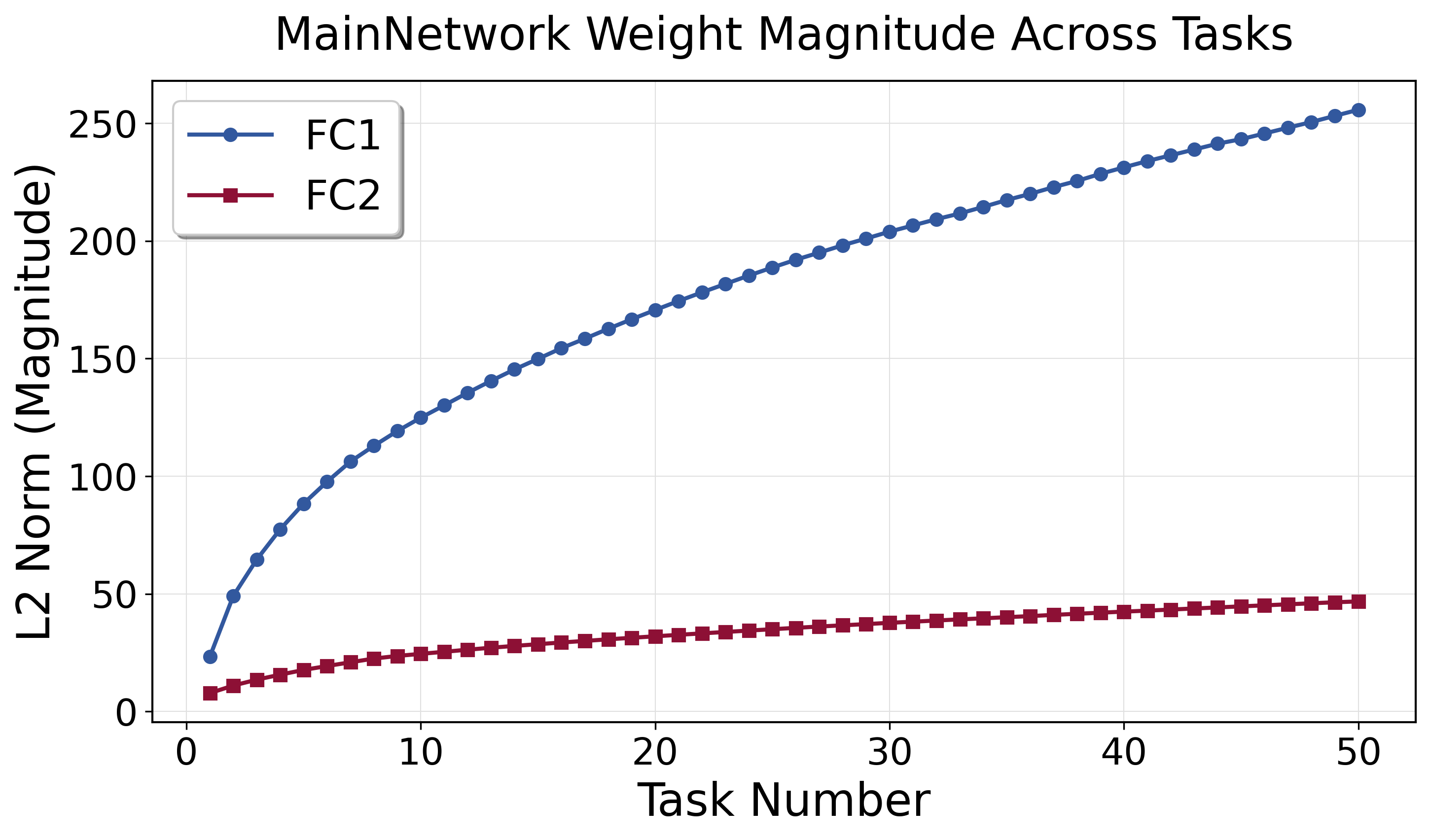}
    \end{subfigure}

    \caption{The $\ell_2$ norms of the modulation vectors and \mainnetwork change from one task to the next for the second layer. This contrasts with their directions, which remain almost unchanged across tasks.}
    \label{fig:norms}
\end{figure}

We now apply Lemma~\ref{lemma:orth} to our setting.  
Let $\theta^i$ denote the incoming weight vector for neuron $i$ in the \mainnetwork, $\phi^i$ denote the corresponding incoming weight vector in the \consolidatednetwork, and $w^i$ be the corresponding weight vector in the \inferencenetwork. Let $\hat{v}$ be the unit vector in the direction of $\phi^i$, i.e., $\hat{v} = \phi^i/\|\phi^i\|$. 

Empirically, the \mainnetwork and the \consolidatednetwork are highly similar but not identical (a high cosine similarity which is strictly less than 1 according to Figure~\ref{fig:main_vs_cons}), so the angle between $\theta^i$ and $\hat{v}$ is nonzero and strictly less than $\pi$, i.e., $\cos(\theta,\hat{v}) \neq \pm 1$. Thus, the condition of Lemma~\ref{lemma:orth} is satisfied with $x = \theta^i$ and $y = \hat{v}$. Consequently, there exist a unit vector $\hat{u}$ orthogonal to $\hat{v}$ and an angle $\delta \in (0,\pi)$ such that
\[
\theta^i = \|\theta^i\| \left( \cos(\delta)\, \hat{v} + \sin(\delta)\, \hat{u} \right),
\]
where $\hat{u} \perp \hat{v}$ and $\delta$ is the (small) angle between $\theta^i$ and $\phi^i$ (so $\cos(\delta)\approx 1$). 
Conceptually,  $\hat{u}$ might be interpreted as the (small) residual direction influenced by gradient updates, which keeps $\theta^i$ and $\phi^i$ from becoming perfectly aligned (as also reflected in Figure~\ref{fig:main_cosine}, where the \mainnetwork changes slightly task to task, i.e., the cosine similarity approaches but never reaches~1).

\bigskip

In our model, the effective incoming weight vector for a neuron in the \inferencenetwork is
\[
w^i = \alpha^i_{\mathrm{WC}} \cdot \phi^i + \alpha^i_{\mathrm{SM}} \cdot \theta^i
\]

where $\alpha^i_{\mathrm{WC}}$ and $\alpha^i_{\mathrm{WC}}$ are the modulation parameters for the neuron $i$. Then by applying the above Lemma, we can write 

$$
w^i = \left( \alpha^i_{\mathrm{WC}} \|\phi^i\| + \alpha^i_{\mathrm{SM}} \|\theta^i\| \cos(\delta) \right)\hat{v}
  + \left( \alpha^i_{\mathrm{SM}} \|\theta^i\| \sin(\delta) \right)\hat{u}.
  $$

Next, let $\psi$ be the angle between $w$ and $\theta^i$. We have
\begin{equation}\label{eq:angle_sensitivity}
    \tan(\psi) = 
    \frac{\alpha^i_{\mathrm{SM}} \|\theta^i\| \sin(\delta)}
         {\alpha^i_{\mathrm{WC}} \|\phi^i\| + \alpha^i_{\mathrm{SM}} \|\theta^i\| \cos(\delta)}.
\end{equation}

This expression highlights a regime of high angular sensitivity: when the parallel components nearly cancel (i.e., when $\alpha^i_{\text{WC}}\|\phi^i\|\approx -\alpha^i_{\text{SM}}\|\theta^i\|\cos(\delta)$), the denominator becomes small, and even a small magnitude change in $\alpha^i_{\text{SM}}$ or $\|\theta\|$ can produce a large change in $\psi$, even if directions of $\theta$, $\phi$, and the modulation vectors remain nearly fixed. Importantly, this regime requires $\theta^i$ and $\phi^i$ to be very similar (to enable cancellation), but not perfectly identical (so that $\sin(\delta)\neq 0$ and the orthogonal term can “steer” $w^i$).

Our results are reminiscent of this “cancellation-and-steering” regime. We observe that $\alpha_{\text{SM}}$ and $\alpha_{\text{WC}}$ typically have opposite signs and comparable magnitudes (Section~\ref{sec:emerg} and Appendix~\ref{app:beh_alpha}), and that \mainnetwork and \consolidatednetwork remain closely aligned during training (Figure~\ref{fig:main_vs_cons}). Together, these observations make the parallel components of $\alpha^i_{\text{WC}}\phi^i$ and $\alpha^i_{\text{SM}}\theta^i$ prone to cancel.

To further support this picture, Figure~\ref{fig:denom_ablation} shows the $\ell_2$ norms of the two layers of the \inferencenetwork with and without the $\alpha_{\text{WC}}$ term. When $\alpha_{\text{WC}}$ is enabled, the two scaled vectors cancel more strongly, which is reflected in a lower $\ell_2$ norm for the second layer\footnote{Note that the norm of the denominator in Eq.~\ref{eq:angle_sensitivity} is approximately equal to the norm of the \inferencenetwork weights, since $\cos(\delta) \approx 1$.} compared to the case without $\alpha_{\text{WC}}$. This implies that the denominator in Eq.~\ref{eq:angle_sensitivity} has smaller a magnitude and is closer to zero, thereby increasing the sensitivity of the angle $\psi$ to changes in the numerator. Finally, to directly monitor 
$\psi$
 (the angle between \inferencenetwork and \consolidatednetwork), we compute the value of Eq.~\ref{eq:angle_sensitivity} separately for each neuron and then average the values across all neurons in the same layer, using a batch of samples at the end of each task. These results are illustrated in Figure~\ref{fig:psi_metric}. The figure demonstrates that the angle between the \mainnetwork and \inferencenetwork parameters ($\psi$) in the second layer shifts from one task to another. This reinforces the finding that proximity to the cancellation-steering regime enables meaningful directional changes in the \inferencenetwork solely by altering the magnitude of the modulation parameters, while the constituting networks (\mainnetwork and \consolidatednetwork) maintain a very similar directions. This behavior might suggest an interesting hypothesis : the training algorithm and the controller exploit a shortcut available when two nearly similar networks are present, achieving task-specific adaptation primarily by rescaling the controller outputs (since the direction of modulation parameters are the same across tasks according to Figure~\ref{fig:wc_sm_cosine}), rather than by relearning an entirely new set of weights for each task.

\begin{figure}[]
    \centering
    \begin{subfigure}{0.48\columnwidth}
        \centering
        \includegraphics[width=\linewidth]{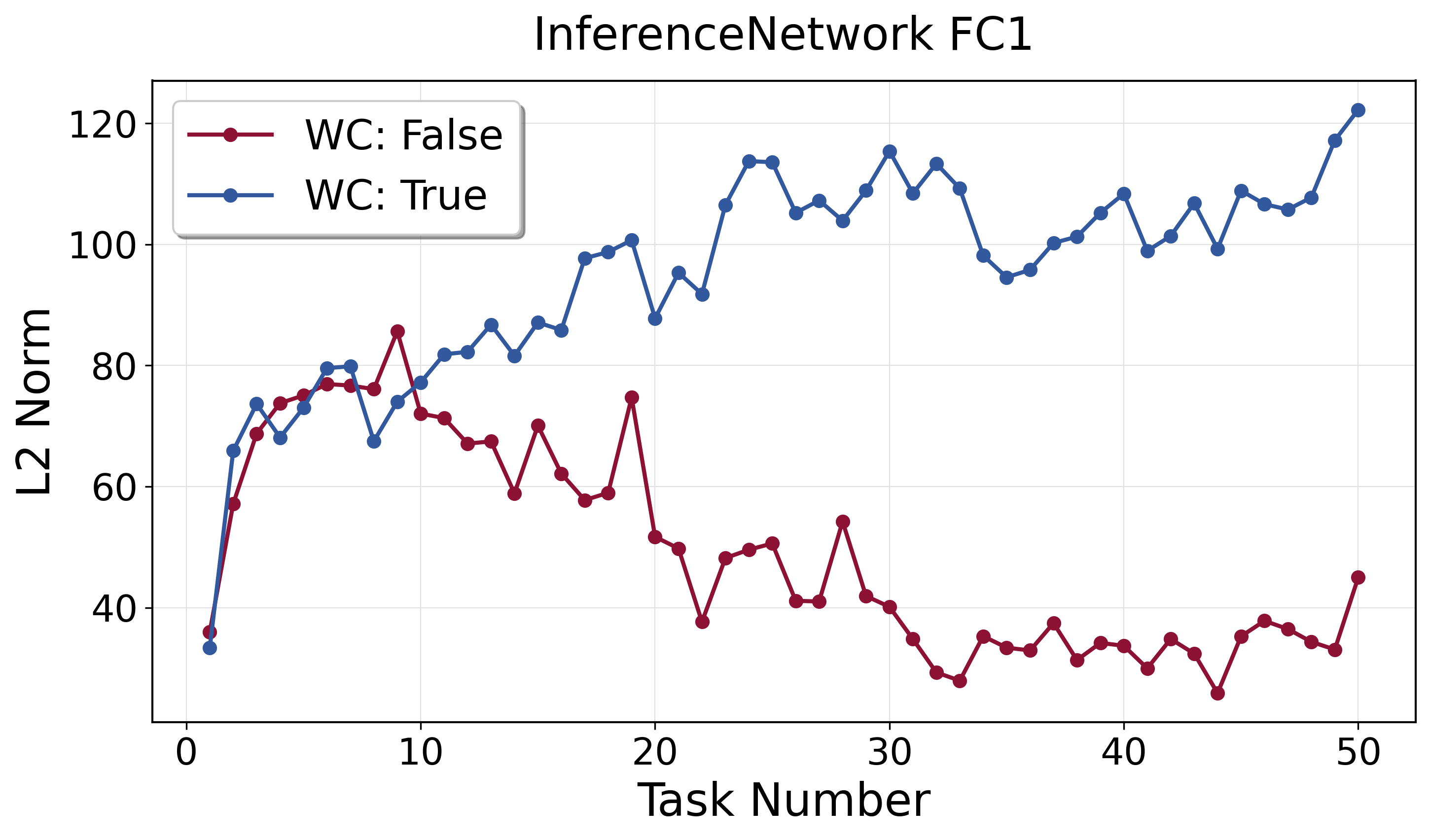}
    \end{subfigure}
    \hfill
    \begin{subfigure}{0.48\columnwidth}
        \centering
        \includegraphics[width=\linewidth]{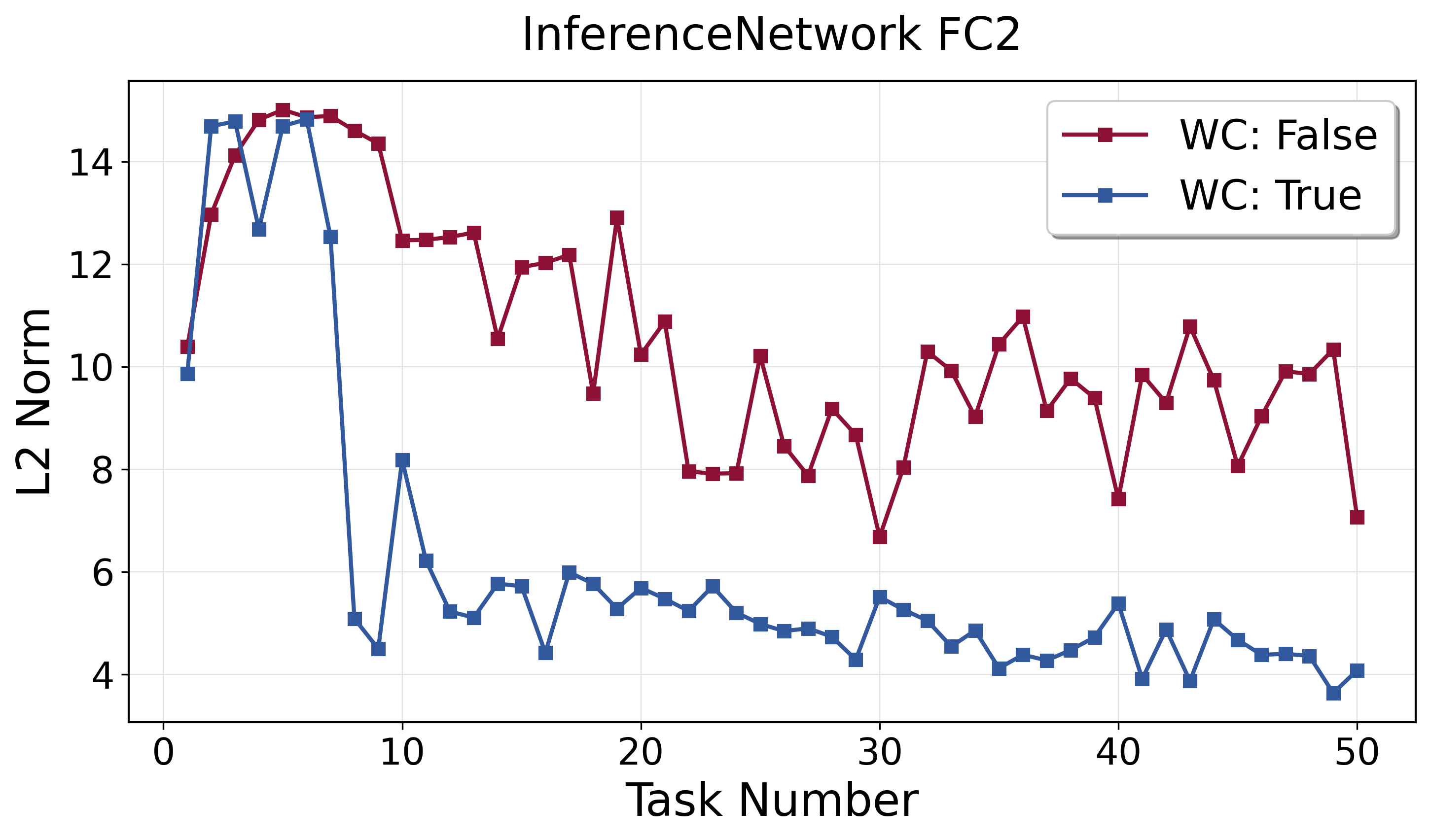}
    \end{subfigure}

    \caption{For the second layer, \inferencenetwork exhibits a smaller $\ell_2$ norm when $\alpha_{\text{WC}}$ is enabled than when it is disabled. A lower magnitude implies that the denominator of Equation~\ref{eq:angle_sensitivity} is closer to the near-cancellation regime, where small changes in the numerator’s magnitude can be converted into substantial changes in direction.}
    \label{fig:denom_ablation}
\end{figure}

Overall, this might explain why two very similar networks can still be beneficial. Although the \mainnetwork and the \consolidatednetwork are very similar, their controlled combination (especially with oppositely signed coefficients) creates a sensitive “cancellation-and-steering” regime in which the controller can adapt rapidly through simple scaling, rather than relearning entirely new directions for each task. Thus, even though it may seem counterintuitive, two similar networks can realize solutions that a single network cannot: what appears almost repetitive at the level of weights becomes powerful at the level of how those weights are combined, enabling fast adaptation under specific conditions.

\begin{figure}[!h]
  \centering
  \includegraphics[width=0.8\columnwidth,clip,trim=0 0 0 0]{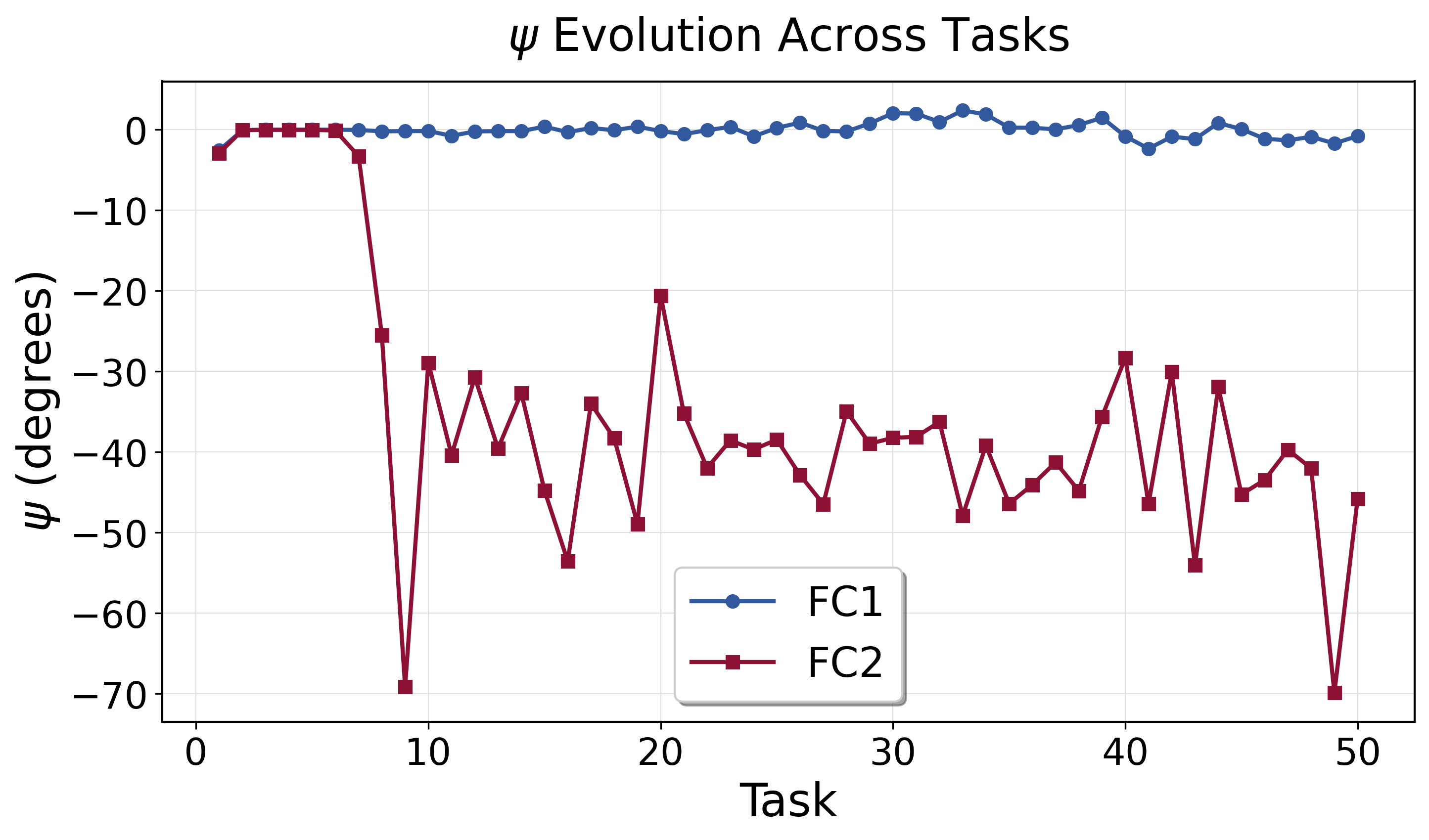}
  \caption{Evolution of the angle between \inferencenetwork and \mainnetwork across two layers, measured at the end of each task. While the angle in the first layer remains relatively constant, it shifts significantly between tasks in the second layer, even as \mainnetwork and \consolidatednetwork maintain a nearly fixed direction.}
  \label{fig:psi_metric}
\end{figure}

\section{Modulation Mechanisms}\label{sec:alphas_description}


In this section, we give additional details on the four sets of $\alpha$-parameters generated by the \neurosync module.
For each set of $\alpha$-parameters, we briefly highlight its resemblance to biological neural mechanisms from which our design is loosely inspired. These similarities are intended to be high-level and qualitative rather than precise mechanistic correspondences.

\subsection{Weight Consolidation ($\alpha_{\mathrm{WC}}$)}
In our framework, $\alpha_{\mathrm{wc}}^i$ determines the extent to which the $i$-th neuron's consolidated input weights contribute to the \inferencenetwork. Formally, for neuron $i$, we modulate its corresponding consolidation weights through the following equation:
$\phi^i_{\mathrm{Modulated}} = \alpha_{\mathrm{WC}}^i \cdot \phi^i$

During inference, this modulation acts through synaptic scaling: a higher value of $\alpha_{\mathrm{WC}}^i$ increases reliance on previously acquired knowledge, promoting stability and enabling positive forward transfer. Surprisingly, our ablation study reveals that beyond facilitating positive forward transfer, this mechanism also plays a central role in preventing negative forward transfer and preserving plasticity (see Section~\ref{sec:ablation}).

The design of $\alpha_{\mathrm{WC}}^i$ is inspired, at a high level of abstraction, by synaptic consolidation processes in the brain, particularly those observed in the hippocampus and neocortex ~\citep{goto2022synaptic}. These biological mechanisms stabilize important synaptic connections over time, shielding them from disruption by future learning. As we will show in Section\ref{sec:emerg}, the \inferencenetwork progressively shifts its reliance toward the consolidated pathway as it accumulates more generalizable representations, enhancing robustness against plasticity loss.

\subsection{Synaptic Modulation ($\alpha_{\mathrm{SM}}$)}\label{sec:method_alpha_sm}
In biological neural systems, synaptic modulation refers to the dynamic regulation of synaptic strength, often governed by neuromodulators such as dopamine ~\citep{nadim2014neuromodulation}. These neuromodulators influence how effectively a presynaptic neuron’s activity impacts the postsynaptic neuron, thereby altering the neuron’s excitability and plasticity ~\citep{nadim2014neuromodulation}. According to the Hebbian learning principle (commonly phrased as “cells that fire together wire together”) such modulation plays a critical role in enhancing synaptic plasticity, enabling faster learning and adaptation to new stimuli ~\citep{fremaux2016neuromodulated}. Prior works, inspired by these observations, has proposed new learning rules that computationally model such modulation as multiplicative gain changes \citep{schmidgall2024brain}.

Abstractly inspired by this observation, we introduce a synaptic modulation parameter, $\alpha_{\mathrm{SM}}^i$, for each neuron $i$ in the \mainnetwork, which scales the neuron’s input synaptic efficacy in the \inferencenetwork. This coefficient enables input-dependent amplification or attenuation of a neuron’s contribution within the \inferencenetwork. When combined with the weight consolidation mechanism $\alpha_{\mathrm{WC}}^i$, which regulates reliance on long-term memory, the model achieves a flexible trade-off between acquiring new knowledge and exploiting previously learned knowledge. The resulting \inferencenetwork’ weights for neuron $i$ is computed as:
\begin{equation}
    w^i = \alpha_{\mathrm{SM}}^i \cdot \theta^i + \phi^i_{\mathrm{Modulated}}
\end{equation}

Where $\theta^i$ denotes the current input weight vector of neuron $i$ in the \mainnetwork and $w^i$ is the input weights of neuron $i$ in the \inferencenetwork which is the effective weight used during inference, as it is shown in Figure~\ref{fig:neurosync}(C). This formulation allows the model to dynamically adapt its functional connectivity in a manner loosely reminiscent of neuromodulatory control in the brain ~\citep{fremaux2016neuromodulated}. An illustration of this part can be seen in Figure~\ref{fig:neurosync}(C).

During implementation, rather than outputting $\alpha_{\text{SM}}^i$ directly, the controller outputs $\Delta \alpha_{\text{SM}}^i$, where
\begin{equation}
    \alpha_{\text{SM}}^i = 1 + \Delta \alpha_{\text{SM}}^i,
\end{equation}
since initially zero-centered $\alpha_{\text{SM}}^i$ can zero out neuron activations and thus their gradients. In our implementation, we opt to output $\Delta \alpha_{\text{SM}}^i$ rather than $\alpha_{\text{SM}}^i$, but for simplicity, throughout our paper we explain \neumosync{} using $\alpha_{\text{SM}}^i$, where we absorb the $+1$ offset as a constant value that does not affect the functionality of this modulation.

\subsection{Adaptive Linearity ($\alpha_{\mathrm{AL}}$)}

Recent studies have shown that using a parametrized ReLU (PReLU) \citep{he2015delving} per neuron is equivalent to introducing an adaptive residual connection from the input to the output of the activation function, which helps mitigate plasticity loss in neural networks~\citep{razavi2025preserving}. Consequently, in our architecture, we adopt PReLU activations for neurons in the \mainnetwork. 
In contrast to prior work~\citep{razavi2025preserving}, where PReLU parameters are learned for each neuron but fixed across inputs, our framework generates these parameters dynamically through the \neurosync module. As a result, the activation functions are defined as:


\vspace{-9mm}
\begin{equation}\label{eq:AL}
    o^i = f_{\alpha_\mathrm{AL}^i}(x^i) =
    \begin{cases}
        x^i, & x^i \geq 0,\\
        \alpha_{\mathrm{AL}}^i\,x^i, & x^i < 0,
    \end{cases}
    \quad\text{with}\quad x^i = w^i \cdot z^i
\end{equation}
\vspace{-4mm}

Here, $f_{\alpha_{\mathrm{AL}}^i}(x^i)$ and $o^i$ denote the activation output of neuron $i$ parameterized by $\alpha_{\mathrm{AL}}^i$, where $x^i$ is the weighted sum of inputs to the neuron, $z^i$ is the output of the previous layer to  neuron $i$, and $\alpha_{\mathrm{AL}}^i$ is the slope for the negative input region of PReLU generated by the \neurosync module.

Interestingly, this dynamic modulation of the negative input region allows the model to adjust a neuron's response even in its inactive regime. If we interpret the negative response region of a PReLU neuron as loosely analogous, in a qualitative and high-level sense, to subthreshold activity in biological neurons, then this mechanism can be viewed as a simple abstraction of how such activity might be modulated. In this way, the parameter $\alpha_{\mathrm{AL}}^i$ governs this aspect of the neuron's behavior. Notably, substantial evidence from neuroscience shows that neuromodulatory signals can strongly influence subthreshold activity in the brain ~\citep{hulse2017brain}.

\subsection{Additive Regulation Modulation ($\alpha_{\mathrm{ARM}}$)} 

The final mechanism we incorporate is activation offset modulation via $\alpha_{\mathrm{ARM}}$. Specifically, we introduce a deterministic, neuron-specific parameter $\alpha_{\mathrm{ARM}}^i$ that additively modulates the output of neuron $i$. As a result, the final output of neuron $i$, denoted as $g^i(x)$, is defined as:
\begin{equation}\label{eq:output2}
    g^{i}(x) = o^i + \alpha_{\mathrm{ARM}}^i
\end{equation}
where $o^i$ is the activation output of neuron $i$ as defined in Equation \ref{eq:AL}, and $\alpha_{\mathrm{ARM}}^i$ serves as an additive modulation term. An illustration of this modulation process at the activation level is shown in Figure 1(D).


This additive modulation can also be loosely related to ideas from neuroscience. In biological neurons, phasic depolarization\footnote{Depolarization refers to the change in a neuron's membrane potential, making it more positive and likely to fire an action potential.} driven by presynaptic input can coexist with slower, tonic components of activity that reflect the broader state of the brain and are influenced by global neuromodulatory systems~\citep{pena2012tonic}.In our model, the additive modulation term plays a loosely analogous role at an abstract, high level: it contributes activity that is not directly tied to the neuron's synaptic input and can be seen as a simple proxy for how global signals or network context might influence individual unit outputs. We stress that this connection is qualitative and illustrative rather than a detailed biological account.

\subsection{Neuroscience Interpretation, Resemblances and Differences}\label{app:neuro}

While \neumosync is an engineering solution, its design is loosely inspired, at a very high level and without aiming to reproduce exact mechanisms, by principles of neuromodulation and memory consolidation in the brain. In what follows, we outline a few speculative analogies and clarify the scope of our neuro-inspired claims.

\paragraph{Conceptual Parallels to Memory Consolidation.}
The interaction between the \mainnetwork and the \consolidatednetwork can be interpreted through two complementary neuroscience perspectives.

\begin{itemize}
    \item \textbf{Consolidation at the Neuronal Level:} In the brain, important neural representations can become "consolidated," making them more stable and less plastic over time~\citep{squire2015memory}. Our architecture loosely mirrors this concept from a very high-level perspective: when the \neurosync module assigns a high $\alpha_{\text{WC}}$ and a low $\alpha_{\text{SM}}$ to a neuron, it effectively "consolidates" it, causing the neuron in the \inferencenetwork to rely heavily on its stable trace in the \consolidatednetwork while reducing its plasticity to new updates.

    \item \textbf{Consolidation at the Pathway Level:} From a systems-level perspective, neural pathways strengthen with repeated use, forming reliable circuits for automatic processing, akin to System 1 thinking or the default mode network~\citep{freud2016happening, vatansever2018default}. The \consolidatednetwork is speculatively analogous to these well-established pathways. As we demonstrate in Section~\ref{sec:emerg}, after the model has acquired sufficient generalizable knowledge, the \neurosync module learns to increasingly rely on this stable network, effectively leveraging these "stronger pathways" to guide behavior.
\end{itemize}

\paragraph{Scope and Limitations.}
It is crucial to be precise about the nature of our claims.We do not claim that \neumosync is a neuroplausible model of the brain. Rather, it is a functionally inspired, high-level architecture that abstracts the some principle of global, state- and input-dependent mechanisms regulating local neuronal activity and plasticity~\citep{lee2012neuromodulation}. This places our work within a growing body of literature that draws on functional insights from neuroscience, such as the concepts of complementary learning systems and fast vs. slow learning.

It is worth mentioning that, although neuromodulatory systems are often described as global, their effects are not uniform at the level of individual neurons. Different neurons within a subpopulation can respond in distinct ways to the same neuromodulatory signal~\citep{radnikow2018layer}, due to factors such as receptor subtype (which can render the effect effectively excitatory or inhibitory), receptor density, and spatial location, which interacts with the diffusive properties of neuromodulators. These determinants are inherently local and neuron-dependent. Since our neuron model is intentionally kept simple, following an integrate-and-fire–like abstraction commonly used in deep learning, we do not attempt to model these biophysical details explicitly. Instead, we absorb this complexity into a global modulatory module that is allowed to assign fine-grained, neuron-specific coefficients, thereby maintaining a standard neuron model while still capturing heterogeneous modulatory effects at the population level.

We also acknowledge key differences between our model and its biological counterparts. For instance, as discussed before, biological neuromodulation often targets large populations of neurons in a uniform way, whereas our model employs fine-grained, per-neuron control. This and other limitations are discussed in our main Discussion (Section~\ref{sec:conclusion}) and represent important avenues for future research.

\section{Algorithm}\label{appendix:algo}
In this section, we provide a more formal and metathetical formulation of the algorithm and information flow in \neumosync. In the following, $f(\cdot)$ denotes a general function, with its parameters indicated by the subscript. For example, $f_\Omega(h)$ represents a function that processes input $h$ and is parameterized by $\Omega$. Additionally, the \neurosync module is parameterized by $\Omega$, and the decoder responsible for producing the $\alpha$-parameters is denoted by $\Gamma$. 

As it is shown in the following pseudocode, \texttt{Patch\_and\_Encode} routine first divides the input image into patches and encodes them using a single-layer CNN, resulting in a set of patch embeddings $\{\mathcal{I}_j\}_{j=1}^{K}$. These embeddings, along with the neuron features, are processed by the \neurosync module to generate neuron-specific $\alpha$-parameters. Using these modulation signals, the \inferencenetwork is constructed on a per-neuron basis. This network is then responsible for performing the prediction task, while its neurons are further modulated through activation-level mechanisms. The resulting loss is used to update all learnable parameters in the system end-to-end via backpropagation. Finally, \consolidatednetwork will be updated through EMA update rule. 

\begin{algorithm}[h]
\caption{\neumosync Training Procedure}\label{algo:algo1}
\DontPrintSemicolon
\small
\KwIn{
    \mainnetwork\ parameters $\theta$, neuron features $\{h^i\}_{i=1}^N$; \\
    \consolidatednetwork\ $\phi$, \neurosync\ module $\Omega$; \\
    consolidation rate $\beta$; training steps $T$, total neurons $N$; number of patches $K$; \\
    training dataset $\mathcal{D} = \{(x_t, y_t)\}_{t=1}^T$
}

$\phi \leftarrow \theta$ 

\For{$t=1$ \KwTo $T$}{
    $\{\mathcal{I}_j\}_{j=1}^K \leftarrow \texttt{Patch\_and\_Encode}(x_t)$\; 
    \vspace{3pt}

    $\{\alpha_{WC}^i\}_{i=1}^N,\ \{\alpha_{SM}^i\}_{i=1}^N,\ \{\alpha_{AL}^i\}_{i=1}^N,\ \{\alpha_{ARM}^i\}_{i=1}^N
    \leftarrow f_{\Omega}\!\left(\{h^i\}_{i=1}^N,\ \{\mathcal{I}_j\}_{j=1}^K\right)$\;
    \vspace{5pt}

    \For{$i=1$ \KwTo $N$}{
        $w^i \leftarrow \alpha_{\mathrm{SM}}^i \cdot \theta^i + \alpha_{\mathrm{WC}}^i \cdot \phi^i$ \tcp{Construct \inferencenetwork}
    }

    $\hat{y}_t \leftarrow f_{\text{inf}}\!\bigl(x_t;\ \{w^i\}_{i=1}^N,\ \{\alpha_{AL}^i\}_{i=1}^N,\ \{\alpha_{ARM}^i\}_{i=1}^N \bigr)$ \tcp{\inferencenetwork\ prediction}
    \vspace{3pt}

    $\mathcal{L} \leftarrow \mathrm{CrossEntropy}\!\left(\hat{y}_t,\ y_t\right)$\;
    \vspace{2pt}

    Update $\theta$, $\Omega$ and neurons' learnable features using backpropagation on loss $\mathcal{L}$\;
    \vspace{2pt}

    Update neurons' activation statistics in $\{h^i\}_{i=1}^N$\;
    \vspace{2pt}

    ${\phi} \leftarrow \beta \cdot {\phi} + (1 - \beta) \cdot {\theta}$ \tcp{Update \consolidatednetwork}
}
\end{algorithm}

\end{document}